\newif\ifdraft
\definecolor{ForestGreen}{RGB}{34,139,34}
\newcommand{\epoch}{epoch\xspace}
\newcommand{\runningtime}{cost\xspace}
\newcommand{\cost}{cost\xspace}
\newcommand*{\score}{score\xspace}
\newcommand*{\cloud}{\mathcal{C}}
\newcommand*{\grid}{\mathcal{G}}
\newcommand{\Kalman}{\texttt{Kalman}}
\newcommand{\regress}{\texttt{Lambda}}
\newcommand{\OTF}{\texttt{OTF}}
\newcommand{\algName}{AutoBCT\xspace}
\newcommand{\cM}{\mathcal{M}}
\theoremstyle{plain}
\newtheorem{theorem}{Theorem}[section]
\newtheorem{lemma}[theorem]{Lemma}
\theoremstyle{remark}
\newtheorem{remark}[theorem]{Remark}
\theoremstyle{definition}
\newtheorem{definition}[theorem]{Definition}
\theoremstyle{remark}
\newtheorem{XxmpX}{Example} 
  {%
   \pushQED{\qed}\begin{XxmpX}}
  {\popQED\end{XxmpX}}
\newcommand\assumptionlabel[1]{\hspace\labelsep
                               \normalfont\bfseries #1\ \ \gdef\@currentlabel{#1}}
\newenvironment{assumption}
               {\smallskip\list{}{\labelwidth\z@ \itemindent-\leftmargin
                        }}
               {\endlist}
\newcounter{AlgorithmJP}[section]
\renewcommand{\theAlgorithmJP}{\thesection.\arabic{AlgorithmJP}}
\def\cA{\mathcal{A}}
\def\cB{\mathcal{B}}
\def\cT{\mathcal{T}}
\def\Tcost{\Upsilon}
\def\0{\mathbf{0}}
\def\er{\mathbb{R}}
\def\prob{\mathbb{P}}
\def\ee{E}
\def\X{\mathcal{X}}
\def\argmin{\mathop{\rm argmin}}
\def\argmax{\mathop{\rm arg\, max}}
\def\ef{\mathcal{F}}
\def\wtl{\widetilde}
\def\eop{\hfill{$\Box$}\medskip}
\newcommand\ind[1]{{1\kern-0.4em 1}_{\{#1\}}}
\newcommand\U{\mathcal{U}}
\newcommand\balpha{\bm{\alpha}}
\newcommand\bbeta{\bm{\beta}}
\newcommand\bphi{\bm{\phi}}
\newcommand\bpsi{\bm{\psi}}
\title{Automatic model training under restrictive time constraints\footnote{Support of EPSRC grants EP/N013980/1 and EP/N510129/1 is gratefully acknowledged.}}
\author{
Lukas Cironis\footnote{School of Mathematics, University of Leeds, Leeds LS2 9JT, UK (e-mail: L.Cironis@leeds.ac.uk)}
\and
Jan Palczewski\footnote{School of Mathematics, University of Leeds, Leeds LS2 9JT, UK (e-mail: J.Palczewski@leeds.ac.uk)}
\and
Georgios Aivaliotis\footnote{School of Mathematics, University of Leeds, Leeds LS2 9JT, UK and The Alan Turing Institute, The British Library, 2QR, 96 Euston Rd, London NW1 2DB(e-mail: G.Aivaliotis@leeds.ac.uk)}
}
\date{\today}
\begin{document}

\maketitle

\begin{abstract}
We develop a hyperparameter optimisation algorithm, Automated Budget Constrained Training (\algName), which balances the quality of a model with the computational cost required to tune it. The relationship between hyperparameters, model quality and computational cost must be learnt and this learning is incorporated directly into the optimisation problem. At each training epoch, the algorithm decides whether to terminate or continue training, and, in the latter case, what values of hyperparameters to use. This decision weighs \emph{optimally} potential improvements in the quality with the additional training time and the uncertainty about the learnt quantities. The performance of our algorithm is verified on a number of machine learning problems encompassing random forests and neural networks.
Our approach is rooted in the theory of Markov decision processes with partial information and we develop a numerical method to compute the value function and an optimal strategy.
\end{abstract}

\section{Introduction}

The emergence of Machine Learning (ML) methods as an effective and easy to use tool for modeling and prediction has opened up new horizons for users in all aspects of business, finance, health, and research \citep{ghoddusi2019machine, briganti2020artificial}. In stark contrast to more traditional statistical methods that require relevant expertise, ML methods constitute largely black-boxes and are based on minimal assumptions. This resulted in adoption of ML from an audience of users with possible expertise in the domain of application but little or no expertise in statistics or computer science. Despite these features, effective use of ML does require good knowledge of the  method used. The choice of the method (algorithm), the tuning of its hyperparameters and the architecture requires experience and good understanding of those methods and the data. Naturally, trial and error might provide possible solutions; however it can also result in sub-optimal use of ML and wasted computational resources.

The goal of this research is to find optimal values of hyperparameters for a given dataset and a modelling objective given that the relationship between the hyperparameters, model quality (\emph{model score}) and training cost is not known in advance. This problem is akin to the classical AutoML setup \citep{autoMLbook2019} with one crucial difference: if desired, the system has to produce results in radically shorter time than classical AutoML solutions. This comes at the expense of accuracy, so taking the trade-off between the model quality and the running time explicitly into account lies at the heart of this paper. The relationship between hyperparameters, model quality and computational cost for a particular modelling problem and a particular dataset must be learnt and this learning is incorporated directly into the optimisation problem.

\subsection{Main Contribution}

We propose Automated Budget Constrained Training (\algName), an algorithm based on theory of Markov Decision Processes with partially observable dynamics, combining stochastic control, optimal stopping and filtering. In \algName we treat hyperparameters as a control variable and introduce an objective function that balances model \score with the \cost, where both are often functions of the accuracy and the running time, respectively. The optimisation problem takes the form
\begin{equation}\label{eqn:2}
\sup_{(u_n), \tau} \ee \Big[h_\tau - \gamma \sum_{n=1}^{\tau} t_n\Big],
\end{equation}
where the optimisation is over $\tau \ge 1$ and $(u_n)_{n=0}^\infty$. Here, $\tau$ is the number of training epochs (a quantity dependent on the course of learning/training), $u_{n} \in \U$ denotes the choice of hyperparameters at epoch $n$ (also dependent on the history), and $\U$ is the set of admissible hyperparameters. The quantity $h_n$ is the model \score corresponding to the choice of hyperparameters $u_{n-1}$. The score $h_n$ may be random as it depends on the training process of the model (which can itself include randomness as in the construction of random forests or in the stochastic gradient method for neural networks) and on the choice of training and validation datasets (including K-fold cross-validation). The computational cost $t_n$ also depends on the choice of hyperparameters $u_{n-1}$ and can be random due to random variations in running time commonly observed in practical applications, arising both from a hardware and an operating system's side as well as from random choices in the training process.

In the objective function we consider the expectation of the score at epoch $\tau$ and the cumulative computational cost up to epoch $\tau$ since the randomness in observation of those quantities can be viewed as external to the problem and not under our control. The constant $\gamma>0$ models our aversion to computational cost. The higher the number the more sensitive we are to increase in cost and willing to accept models with lower scores.

The choice of hyperparameters $u_{n-1}$ for the epoch $n$ is based on the observation of model scores $h_1, \ldots, h_{n-1}$ and costs $t_1, \ldots, t_{n-1}$ in all the previous epochs as well as on the prior information on the dependence of the score and cost on the hyperparameters. The number of training epochs $\tau$ also depends on the history of model scores and computational costs. At the end of each epoch a decision is made whether to continue training or accept current model. Hence $\tau$ is not deterministically chosen before the learning starts and depends not only on the total computational budget but also on observed scores and costs. This may and does lead \algName to stop in some problems very early when the model happens to be so good that the expenditure on more training is not worthwhile (according to the criterion \eqref{eqn:2} which looks at the trade-off between score and cost).

We approximate \score and \cost as functions of hyperparameters using linear combinations of basis functions. We assume that the prior knowledge about the distribution of the coefficients of these linear combinations is multivariate Gaussian and is updated after each epoch using Kalman filtering. This update not only changes the mean vector but also adjusts the covariance matrix which represents the uncertainty about the coefficients and, consequently, about the score and cost mappings. Updated quantities feed back into the algorithm which decides whether to continue or stop training, and, in the former case, which hyperparameters to choose for the next epoch. We show that the updated distributions of coefficients are sufficient statistics of past observations of model scores and costs for the purpose of optimisation of the objective function \eqref{eqn:2}.

Our framework requires that a \emph{value function}, representing the optimum value of the objective function \eqref{eqn:2}, is available. It depends on the dimension of the problem and the trade-off coefficient $\gamma$ only, and is \score and \cost agnostic (in the sense that it does not depend on the particular problem and the choice of score and cost). This allows for an efficient offline precomputation and recycling of the value function maps. Users are able to share and reuse the same value function maps for different problems as long as the dimensionality of the hyperparameter space agrees. We demonstrate it in the validation section of this paper.

Similar to other AutoML frameworks \citep{feurer2015initializing, Vanschoren2019}, \algName natively enables meta-learning. Prior distributions for coefficients of basis functions determining the \score and \cost maps can be combined with the meta-data describing already completed learning tasks in a publicly available repositories. These data can then be used to select a more informative prior for a given problem and therefore, to \emph{warm-start} the training. In turn, this leads to reductions in the computational cost and improvements in the score.

The focus on optimising directly the trade-off between the model score and the total computational costs comes from two directions. Firstly, given the recent emphasis on the eco-efficiency of AI \citep{strubell2019energy, schwartz2019green}, \algName framework provides effective ways of weighing model quality and the employed computational resources as well as recycling information and, in turn, reducing computational resources required to train sufficiently good models. Secondly, with the democratisation of data analytics an increasing amount of data exploration will take place where users need to obtain useful (but not necessarily optimal) results within seconds or minutes \citep{demvsar2004orange, Holzinger2014}.

In summary, this paper's contributions are multifaceted. On the practical level, we develop an algorithm that allows optimal selection of hyperparameters for training and evaluation of models with an explicit trade-off between the model quality and the computational resources. From the eco-efficiency perspective our framework not only discourages unnecessarily resource intensive training but also naturally enables  transfer and accumulation of knowledge. Lastly, on the side of Markov decision processes, we solve a non-standard, mixed stochastic control and stopping problem for a partially observable dynamical system and design an efficient numerical scheme for the computation of the value function and optimal controls.

\subsection{Related literature}

To maintain the appeal of ML to its huge and newly acquired audience and to enable its use in systems that do not require human intervention, a large effort has been put in the development of algorithms and systems that allow automatic selection of an optimal ML method and an optimal tuning of its parameters. This line of research has largely operated under the umbrella name of AutoML (see \citep{autoMLbook2019}), standing for Automated Machine Learning, and boosted by a series of data crunching challenges under the same title (see, e.g., \citep{guyon2019analysis}). There are widely available AutoML packages in most statistical and data analytics software, see, e.g., Auto-Weka \citep{kotthoff2017auto}, Auto-sklearn \citep{feurer2015efficient}, as well as in all major commercial cloud computing environments.

The core of parameter optimisation typically takes a Bayesian approach in the form of alternating between learning/updating a model that describes the dependence of the \score on the hyperparameters and using this model and an acquisition function to select the next candidate hyperparameters. The different algorithms used to support the alternating model fitting and updating of parameters also affect its performance and suitability for specific problems. The three predominant algorithms are (for a discussion see \citep{eggensperger2013towards}): Gaussian Processes \citep{snoek2012practical}, random forest based Sequential Model-based Algorithm Configuration (SMAC) \citep{hutter2011sequential}, and Tree Parzen estimator \citep{bergstra2011algorithms}.

Our approach borrows from the ideas underlying Sequential Model-based Optimisation (SMBO) \citep{hutter2011sequential}. The surrogate model in our case is the basis-functions based representation of the score and cost maps. However, unlike SMBO, the choice of hyperparameters is not only based on the surrogate model but also on the expected future performance encoded in the value function. 

In all the above, time is certainly an important factor. It is however mostly treated either implicitly, through exploration-exploitation trade-off or through hard bounds, which can sometimes result in the algorithm providing no answer within the allocated time. A  more sophisticated optimisation of computational resources has recently started to draw interest. In a recent work \cite{falkner2018bohb}, the authors combine Bayesian optimisation to build a model and suggest potential new parameter configuration with Hyperband, which dynamically allocates time budgets to these new configurations applying successive halving \citep{li2017hyperband} to reduce time allocated to unsuccessful choices. \citep{yang2019oboe} uses meta-learning and imposes time constraints on the predicted running time of models. \cite{swersky2014freeze} discusses a freeze-thaw technique for (at least temporarily) pausing training for unpromising hyperparameters. \cite{Chandra2017} uses learning curves to stop training under-performing neural network models. Another related line of research is concerned with prediction of running time of algorithms (see \cite{Huang2010}, \cite{HUTTER2014} and many others). Those algorithms are akin to meta-learning, i.e., running time is predicted based on metafeatures of algorithms and their inputs.

Our approach differs from the above ideas in that it explicitely accounts for the trade-off between the model quality (score) and the cumulative cost of training. To the best of our knowledge this is the first attempt to bring the cost of computational resources to the same level of importance as the model score and include it in the objective function. Our developments are, however, parallel to some of the papers discussed above. For example, our approach would benefit from employment of learning curves to allow for incomplete training for hyperparameters which appear to produce suboptimal models. Our algorithm could complement OBOE \cite{yang2019oboe} by fine-tuning hyper-parameters of selected models. Such extensions are beyond the scope of this paper and will be explored in further research.

Our numerical method for computation of the value function complements a large strand of literature on regression Monte Carlo methods. The widest studied problem is that of optimal stopping (aka American option pricing) initiated by \cite{Tsitsiklis2001, Longstaff2001} and studied continuously to this day \cite{nadetal17}. Stochastic control problems were first addressed in \cite{Kharroubi2014} using control randomisation techniques and later studied in \cite{Balata2017} using regress later approach and in \cite{bachouch2018deep} employing deep neural networks. Numerical methods designed in our paper are closest in their spirit to regress later ideas of \cite{Balata2017} (see also \cite{bachouch2018deep}).

\subsection{Paper structure}

The paper is structured as follows. Section \ref{subsec:overview} gives details of the optimisation problem. Technical aspects of embedding this problem within the theory of Markov decision processes with partial information comes in the following subsection. Due to the noisy observation of the realised model score and cost (partial information), the resulting model is non-Markovian. Section \ref{sec:separation} reformulates the optimisation problem as a classical Markov decision model with a different state space. The rest of Section \ref{sec:stoch_control} is devoted to showing that the latter optimisation problem has a solution, developing a dynamic programming equation and showing how to compute optimal controls. Section \ref{sec:numerics} contains details of numerical computation of the value function and provides necessary algorithms. Validation of our approach is performed in Section \ref{sec:validation} which opens with a synthetic example to present, in a simple setting, properties of our solution. It is followed by 8 examples ecompassing a variety of datasets and modelling objectives. Appendices contain further details. Appendix \ref{app:dynamics} derives the dynamics of the Markov decision process of Section \ref{sec:separation}. Appendix \ref{app:aux} contains auxiliary estimates used in proofs of main results collected in Appendix \ref{app:proofs}. Detailed settings for all studied models are in Appendix \ref{app:D} while architectures of neural networks used in examples are provided in Appendix \ref{app:arch}.

\section{Stochastic control problem}\label{sec:stoch_control}

In this section, we embed the optimisation problem in the theory of stochastic control of partially observable dynamical systems \citep[Chapter 10]{Bertsekas1978} and lay foundations for its solution. 

\subsection{Model overview}\label{subsec:overview}
We start by providing a sketch of our framework. Recall the optimisation problem \eqref{eqn:2}. The random quantities $h_n$ and $t_n$ are not known as they depend on a dataset, a particular problem, and software and hardware used. We therefore embed the optimisation problem in a Bayesian learning setting. To this end, we assume that
\begin{equation}\label{eqn:h_t_n}
h_{n+1} = H(u_n) + \sigma_H(u_n) \epsilon_{n+1}, \qquad t_{n+1} = T(u_n) + \sigma_T(u_n) \eta_{n+1},
\end{equation}
where $(\epsilon_n)$ and $(\eta_n)$ are independent sequences of independent standard normal $N(0,1)$ random variables. The terms $\sigma_H(u_n) \epsilon_{n+1}$ and $\sigma_T(u_n) \eta_{n+1}$ correspond to the aforementioned random fluctuations of observed quantities around $H(u_n)$ and $T(u_n)$ and are Gaussian with the mean $0$ and the known variances $\sigma^2_H(u_n)$ and $\sigma_T^2(u_n)$. The quantities $H(u_n)$ and $T(u_n)$ are the \emph{true expected model score} and \emph{true expected model cost} given the choice of hyperparameters; for readability, we often omit the word \emph{expected}.

In order to embed the learning process in a computationally efficient Bayesian setting, we assume that $H(u)$ and $T(u)$ can be represented as linear combinations of basis functions:
\begin{equation}\label{eqn:HT}
H(u) = \sum_{j=1}^J \alpha_j \phi_j(u), \qquad T(u) = \sum_{k=1}^K \beta_k \psi_k(u),
\end{equation}
where functions $(\phi_j)_{j=1}^J$ and $(\psi_k)_{k=1}^K$ act from $\U$ to $\er$, where $\U$ is the set of admissible controls (hyperparameter values). The coefficients $\balpha = (\alpha_1, \ldots, \alpha_J)^T$ and $\bbeta = (\beta_1, \ldots, \beta_K)^T$ evolve while learning takes place. A priori, before learning starts, $\balpha$ follows a normal distribution with the mean $\mu^\alpha_0$ and the covariance matrix $\Sigma^\alpha_0$; the vector of coefficients $\bbeta$ follows a normal distribution with the mean $\mu^\beta_0$ and the covariance matrix $\Sigma^\beta_0$. These distributions are referred to as the prior distributions. We assume that a~priori $\balpha$ and $\bbeta$ are independent from each other. This choice of distributions is motivated by computational tractability but comes at the cost of modelling inaccuracy. In particular, we cannot guarantee that $T(u) \ge 0$ for any $u \in \U$ and any realisation of $\bbeta$. The consequences will be explored in more depth later in the paper.

\begin{algorithm}[tb]
\vspace{5pt}
\caption{Sketch of \algName}\label{alg:sketch}
\SetAlgoLined
\KwInit{$n := 0$}
\KwDo{continuation is optimal}{
    choose $u_n \in \U$\;
    train and validate a model with hyperparameters $u_n$ and store the score $h_{n+1}$ and the \runningtime $t_{n+1}$\;
    based on $h_{n+1}$ update the distribution of $\balpha$: new parameters $\mu^\alpha_{n+1}$ and $\Sigma^\alpha_{n+1}$\;
    based on $t_{n+1}$ update the distribution of $\bbeta$: new parameters $\mu^\beta_{n+1}$ and $\Sigma^\beta_{n+1}$\;
    $n \gets n+1$
}
$u^* \gets u_{n-1}$\;
\KwReturn{$u^*, \ldots$}
\vspace{5pt}
\end{algorithm}

Algorithm \ref{alg:sketch} provides a sketch of our approach. Given all information at \epoch $0$, i.e., given the prior distribution of $\balpha$ and $\bbeta$, we choose values of hyperparameters $u_0$ and then train and validate a model. Based on the observed model score $h_1$ and \runningtime $t_1$, we update the distribution of $\balpha$ and $\bbeta$ (the \emph{learning step}). Following this we check whether to continue learning and training. If the decision is to stop, we return the latest values of metaparemeters and further information, for example, the latest distributions of $\balpha$ and $\bbeta$ or the latest model. Otherwise, we repeat choosing hyperparameters, training, validation and update procedures. We continue until the decision is made to stop.

\begin{remark}
 In our Bayesian learning framework, the expectation in \eqref{eqn:2} is not only with respect to the randomness of observation errors $(\epsilon_n)$ and $(\eta_n)$ but also with respect to the prior distribution of $\balpha$ and $\bbeta$. Therefore, all possible realisations of the score and cost mappings are evaluated with more probable outcomes (as directed by the prior distribution) receiving proportionally more contribution to the value of the objective function.
\end{remark}

In the remaining of this subsection, we derive the optimisation problem that will be solved in this paper. This will result in modification to both terms in the objective function in \eqref{eqn:2}. 

\textbf{The score.} In Problem \eqref{eqn:2} the optimised quantity is the observed score of the most recently trained model. When the emphasis is on the selection of hyperparameters, one would replace $h_{\tau}$ by $H(u_{\tau-1})$ which corresponds to maximisation of the true expected score without the observation error:
\begin{equation}\label{eqn:2_true}
\sup_{(u_n), \tau} \ee \Big[H(u_{\tau-1}) - \gamma \sum_{n=1}^{\tau} t_n\Big].
\end{equation}

\textbf{The cost.} The optimal value for the problem \eqref{eqn:2_true}, as well as \eqref{eqn:2}, is infinite and offers controls $(u_n)$ of little practical use. This is because $\bbeta$ follows a normal distribution for the sake of numerical tractability. Indeed, one cannot find assumptions on the vector of basis functions $\bpsi$ to ensure that $\sum_{k=1}^K \beta_k \psi_k(u) \ge 0$ for all $u\in \U$ with probability one. Conversely, for any $\delta > 0$ we have
\[
\prob \Big(\sum_{k=1}^K \beta_k \psi_k(u) < -\delta \text{ for some $u \in \U$}\Big) > 0. 
\]
While learning one could identify, with increasing probability as $n$ increases, those realisations of $\bbeta$ for which there is $u \in \U$ with $\sum_{k=1}^K \beta_k \psi_k(u) < -\delta$ for some fixed $\delta > 0$. Then, continuing infinitely long with those controls $u$ would lead to a positive infinite value of the expression inside expectation.

In the view of the above remark, we truncate the cost to stay positive. The final form of the optimisation problem is
\begin{equation}\label{eqn:3}
\sup_{(u_n), \tau} \ee \Big[H(u_{\tau-1}) - \gamma \sum_{n=1}^\tau (t_n)^+ \Big],
\end{equation}
where $(t_n)^+ := \max(t_n, 0)$. We show in Lemma \ref{lem:upper_bound} that the optimal value of this problem is finite.

An alternative approach could be to amend the definition of $t_n$ to ensure non-negativity, but this would invalidate assumptions of the Kalman filter used in learning and, hence, make the solution of the problem numerically infeasible.

\subsection{Technical details}

Let $(\Omega, \ef, \prob)$ be the underlying probability space supporting the sequences $(\epsilon_n)_{n \ge 1}$ and $(\eta_n)_{n \ge 1}$ and the random variables $\balpha$ and $\bbeta$. By $\ef_n$ we denote the \emph{observable history} up to and including \epoch $n$, i.e., $\ef_n$ is the $\sigma$-algebra
\[
\ef_n = \sigma ( h_1, \ldots, h_{n}, t_1, \ldots, t_{n} ), \quad n \ge 1,
\]
with $\ef_0 = \{\emptyset, \Omega\}$. The choice of hyperparameters $u_n$ must be based only on the observable history, i.e., $u_n$ must be $\ef_n$-measurable for any $n \ge 0$. The variable $\tau$ which governs the termination of training must be an $(\ef_n)$-stopping time, i.e., $\{\tau=n\} \in \ef_n$ for $n \ge 0$ -- the decision whether to finish training is based on the past and present observations only. The difficulty lies in the fact that observations combine evaluations of the true expected score function and the running time function with errors, c.f., \eqref{eqn:h_t_n}. This places us in the framework of stochastic control with partial observation \citep[Chapter 10]{Bertsekas1978}.

Denote $\bphi(u) = (\phi_1(u), \ldots, \phi_J(u))^T$ and $\bpsi(u) = (\psi_1(u), \ldots, \psi_K(u))^T$. Equations \eqref{eqn:h_t_n} can be written in a vector notation as
\begin{equation}\label{eqn:vec_notation}
h_{n+1} = \balpha^T \bphi(u_n) + \sigma_H(u_n) \epsilon_{n+1}, \qquad t_{n+1} = \bbeta^T \bpsi(u_n) + \sigma_T(u_n) \eta_{n+1}.
\end{equation}
We will use this notation throughout the paper.

The power of the theory of Markov decision processes is in its tractability: an optimal control can be written in terms of the current state of the system. However, when the system is not fully observed, there is an obvious benefit to take all past observations into account to deterimine possible values of unobservable elements of the system. Indeed, due to the form of observation \eqref{eqn:h_t_n}-\eqref{eqn:HT}, one can infer much more about $\balpha$ and $\bbeta$ when taking into account all available past readings of model scores and training costs. It turns out that optimal control problems of the form studied in this paper can be rewritten into equivalent systems with a bigger state space but with full observation for which classical dynamic programming can be applied.

Before we proceeed with this programme, we state standing assumptions:
\begin{assumption}
\item[(A)] Basis functions $(\phi_j)_{j=1}^J$ and $(\psi_k)_{k=1}^K$ are bounded on $\U$. \label{ass:bounded}
\item[(S)] Observation error variances are bounded and separated from zero, i.e. \label{ass:sigma_u}
\[
\inf_{u \in \U} \sigma_H(u) > 0, \quad \inf_{u \in \U} \sigma_T(u) > 0, \quad \sup_{u \in \U} \sigma_H(u) + \sigma_T(u) < \infty.
\]
\end{assumption}
It is not required for the validity of mathematical results that basis functions are orthogonal in a specific sense or even linearly independent. However, linear independence is required for identification purposes and it speeds up learning.

\subsection{Separation principle}\label{sec:separation}
Denote by $\cA_n$ the conditional distribution of $\balpha$ given $\ef_n$. It follows from an application of the discrete-time Kalman filter \cite[Section 4.7]{Bensoussan2018} that $\cA_n$ is Gaussian with the mean $\mu^\alpha_n$ and the covariance matrix $\Sigma^\alpha_n$ given by the following recursive formulas:
\begin{equation}\label{eqn:filter_a}
 \begin{aligned}
(\Sigma^\alpha_{n+1})^{-1} &= (\Sigma^\alpha_n)^{-1} + \frac{1}{\sigma_H(u_n)^2} \bphi(u_n) \bphi(u_n)^T\\
\mu^\alpha_{n+1} &= \mu^\alpha_n + \frac{1}{\sigma_H(u_n)^2} \Sigma^\alpha_{n+1} \bphi(u_n) \big( h_{n+1} - \bphi(u_n)^T \mu^\alpha_n \big).
 \end{aligned}
\end{equation}

Denote by $\cB_n$ the conditional distribution of $\bbeta$ given $\ef_n$. By the same arguments as above, it is also Gaussian with the mean $\mu^\beta_n$ and the covariance matrix $\Sigma^\beta_n$ given by the following recursive formulas:
\begin{equation}\label{eqn:filter_b}
 \begin{aligned}
(\Sigma^\beta_{n+1})^{-1} &= (\Sigma^\beta_n)^{-1} + \frac{1}{\sigma_T(u_n)^2} \bpsi(u_n) \bpsi(u_n)^T\\
\mu^\beta_{n+1} &= \mu^\beta_n + \frac{1}{\sigma_T(u_n)^2} \Sigma^\beta_{n+1} \bpsi(u_n) \big( t_{n+1} - \bpsi(u_n)^T \mu^\beta_n  \big).
 \end{aligned}
\end{equation}
Furthermore, it follows from the definition of $h_{n+1}$ and $t_{n+1}$ that the conditional distribution of $h_{n+1}$ given $\ef_n$ and control $u_n$ is Gaussian
\begin{equation*}
 N\Big((\mu^\alpha_{n})^T \bphi(u_n),\ \bphi(u_n)^T \Sigma^\alpha_{n} \bphi(u_n) + \sigma^2_H(u_n) \Big),
\end{equation*}
and the conditional distribution of $t_{n+1}$ given $\ef_n$ and control $u_n$ is also Gaussian
\begin{equation*}
N\Big((\mu^\beta_{n})^T \bpsi(u_n),\ \bpsi(u_n)^T \Sigma^\beta_{n} \bpsi(u_n) + \sigma^2_T(u_n) \Big).
\end{equation*}

Measure-valued stochastic processes $(\cA_n)$, $(\cB_n)$ are adapted to filtration $(\ef_n)$. This would have been of little practical use for us if it were not for the fact that those measure valued processes take values in a space of Gaussian distributions, so the state is determined by the mean vector and the covariance matrix. It is then clear that the process $(\mu^\alpha_n, \Sigma^\alpha_n, \mu^\beta_n, \Sigma^\beta_n, t_n)$ is a Markov process (we omit $h_n$ as it does not appear in the objective function \eqref{eqn:3}). Its dynamics are explicitly given in Appendix \ref{app:dynamics}.

The following theorem shows that the optimisation problem can be restated in terms of these variables instead of $\balpha$ and $\bbeta$, i.e., in terms of observable quantities only. This reformulates the problem as a fully observed Markov decision problem which can be solved using dynamic programming methods.
\begin{theorem}\label{thm:separation}
Optimisation problem \eqref{eqn:3} is equivalent to
\begin{equation}\label{eqn:4}
\sup_{(u_n), \tau} \ee \Big[(\mu^\alpha_\tau)^T \bphi(u_{\tau-1}) - \gamma \sum_{n=1}^\tau (t_n)^+ \Big].
\end{equation}
\end{theorem}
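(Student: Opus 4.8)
The two objective functionals in \eqref{eqn:3} and \eqref{eqn:4} differ only in their first term, $H(u_{\tau-1})$ versus $(\mu^\alpha_\tau)^T\bphi(u_{\tau-1})$; the cost term $\gamma\sum_{n=1}^\tau(t_n)^+$ is identical. Since $H(u_{\tau-1})=\balpha^T\bphi(u_{\tau-1})$ by \eqref{eqn:HT}--\eqref{eqn:vec_notation} and, as I will check, is integrable, both objectives are well-defined in $[-\infty,\infty)$ (an integrable random variable minus a nonnegative one), and by linearity of expectation the equivalence reduces to showing, for every admissible pair $((u_n)_{n\ge0},\tau)$ (i.e. $u_n$ being $\ef_n$-measurable and $\tau$ an $(\ef_n)$-stopping time),
\[
\ee\big[\balpha^T\bphi(u_{\tau-1})\big] = \ee\big[(\mu^\alpha_\tau)^T\bphi(u_{\tau-1})\big],
\]
where $\mu^\alpha_\tau:=\sum_{n\ge1}\ind{\tau=n}\,\mu^\alpha_n$. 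First I would record two measurability facts. On $\{\tau=n\}$ we have $u_{\tau-1}=u_{n-1}$, which is $\ef_{n-1}\subseteq\ef_n$-measurable, and $\{\tau=n\}\in\ef_n$; summing over $n$ shows $\bphi(u_{\tau-1})$ is $\ef_\tau$-measurable. I would also restrict attention to strategies with $\tau<\infty$ a.s.\ (the natural reading of ``$\tau$ = number of training epochs''); the convention on $\{\tau=\infty\}$ is immaterial and identical in both problems, so it does not affect the suprema.

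Next, by Assumption \ref{ass:bounded} there is a constant $C$ with $\|\bphi(u)\|\le C$ for all $u\in\U$, hence $|\balpha^T\bphi(u_{\tau-1})|\le C\|\balpha\|$; since $\balpha$ is Gaussian, $\ee\|\balpha\|<\infty$, so $\balpha^T\bphi(u_{\tau-1})$ is integrable and the tower property applies:
\[
\ee\big[\balpha^T\bphi(u_{\tau-1})\big] = \ee\big[\,\ee[\balpha^T\bphi(u_{\tau-1})\mid\ef_\tau]\,\big] = \ee\big[\bphi(u_{\tau-1})^T\,\ee[\balpha\mid\ef_\tau]\big],
\]
pulling out the $\ef_\tau$-measurable vector $\bphi(u_{\tau-1})$ in the last step. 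It then remains to identify $\ee[\balpha\mid\ef_\tau]=\mu^\alpha_\tau$. The discrete-time Kalman filter (the source of \eqref{eqn:filter_a}) gives that the regular conditional law $\cA_n$ of $\balpha$ given $\ef_n$ is $N(\mu^\alpha_n,\Sigma^\alpha_n)$, in particular $\ee[\balpha\mid\ef_n]=\mu^\alpha_n$. For $A\in\ef_\tau$ one has $A=\bigcup_n(A\cap\{\tau=n\})$ with $A\cap\{\tau=n\}\in\ef_n$, so
\[
\ee[\balpha\,\ind{A}] = \sum_{n\ge1}\ee[\balpha\,\ind{A\cap\{\tau=n\}}] = \sum_{n\ge1}\ee[\mu^\alpha_n\,\ind{A\cap\{\tau=n\}}] = \ee[\mu^\alpha_\tau\,\ind{A}],
\]
the interchange of sum and expectation being justified by the integrable dominating bound $C\|\balpha\|$ and $\tau<\infty$ a.s.\ This is exactly the defining property of $\ee[\balpha\mid\ef_\tau]$, so $\ee[\balpha\mid\ef_\tau]=\mu^\alpha_\tau$; substituting above yields $\ee[H(u_{\tau-1})]=\ee[(\mu^\alpha_\tau)^T\bphi(u_{\tau-1})]$, and in particular the right-hand side is integrable, being a conditional expectation of an integrable random variable.

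Adding back the common cost term, the objective functionals of \eqref{eqn:3} and \eqref{eqn:4} coincide on every admissible pair, so their suprema over admissible pairs are equal, which is Theorem \ref{thm:separation}. I do not expect any deep obstacle here: the only delicate points are the bookkeeping needed to pull $\bphi(u_{\tau-1})$ out of $\ee[\cdot\mid\ef_\tau]$ at a \emph{stopping} time and the identification $\ee[\balpha\mid\ef_\tau]=\mu^\alpha_\tau$ via the $\{\tau=n\}$-decomposition, together with the integrability estimates (from Assumption \ref{ass:bounded} and Gaussianity of $\balpha$) that legitimise the tower property; finiteness of the common optimal value is not needed for the equivalence and is supplied separately by Lemma \ref{lem:upper_bound}.
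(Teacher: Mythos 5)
Your proof is correct and is essentially the paper's argument: both rest on decomposing over the events $\{\tau=n\}$, using $\ee[\balpha\mid\ef_n]=\mu^\alpha_n$ from the Kalman filter, and pulling out the $\ef_{n-1}$-measurable factor $\bphi(u_{n-1})$; you merely repackage this as the identity $\ee[\balpha\mid\ef_\tau]=\mu^\alpha_\tau$ before applying the tower property, whereas the paper conditions each term $\ind{\tau=n}H(u_{n-1})$ on $\ef_n$ directly. Your explicit integrability bound $|\balpha^T\bphi(u_{\tau-1})|\le C\|\balpha\|$ justifying the interchange of sum and expectation is a detail the paper leaves implicit, but it is not a different route.
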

The proof of this theorem as well as other results are collected in Appendix \ref{app:proofs}.

\subsection{Dynamic programming formulation}

By inspecting the dynamics of $(\mu^\alpha_n, \Sigma^\alpha_n, \mu^\beta_n, \Sigma^\beta_n, t_n)$ in Appendix \ref{app:dynamics} one notices that the dependence of the state at time $n+1$ on time $n$ is only via the parameters of the filters $(\cA_n)$, $(\cB_n)$ which contain sufficient statistic about $(h_i)_{i=1}^n$ and $(t_i)_{i=1}^n$. Furthermore, the dynamics is time-homogeneous, i.e., it does not depend on the epoch $n$. Hence, the value function of the optimisation problem \eqref{eqn:4} depends only on 4 parameters:
\begin{equation}\label{eqn:7}
\begin{aligned}
&V(\mu^\alpha, \Sigma^\alpha, \mu^\beta, \Sigma^\beta)\\
&= \sup_{(u_{n}), \tau \ge 1} \ee\Big[(\mu^\alpha_\tau)^T \bphi(u_{\tau-1}) - \gamma \sum_{n=1}^\tau (t_n)^+ \Big| \mu^\alpha_0 = \mu^\alpha_{\phantom{0}}, \Sigma^\alpha_0 = \Sigma^\alpha_{\phantom{0}}, \mu^\beta_0 = \mu^\beta_{\phantom{0}}, \Sigma^\beta_0 = \Sigma^\beta_{\phantom{0}}\Big].
\end{aligned}
\end{equation}
The function $V$ gives an optimal value of the optimisation problem given the distributions of $\balpha$ and $\bbeta$ are Gaussian with the given parameters. The expression on the right-hand side is well defined and the value function does not admit values $\pm \infty$, see Lemma \ref{lem:upper_bound}.

In preparation for stating the dynamic programming principle, define an operator acting on measurable functions $\varphi(\mu^\alpha, \Sigma^\alpha, \mu^\beta, \Sigma^\beta)$ as follows
\begin{equation}\label{eqn:tau_a}
\cT \varphi (\mu^\alpha, \Sigma^\alpha, \mu^\beta, \Sigma^\beta) =
\sup_{u \in \U} \ee \Big[ - \gamma (t_1)^+ + \max \big( (\mu^\alpha_1)^T \bphi(u), \varphi(\mu^\alpha_1, \Sigma^\alpha_1, \mu^\beta_1, \Sigma^\beta_1) \big)\Big| \mu^\alpha, \Sigma^\alpha, \mu^\beta, \Sigma^\beta\Big],
\end{equation}
where the expectation is with respect to $\mu^\alpha_1, \Sigma^\alpha_1, \mu^\beta_1, \Sigma^\beta_1, t_1$ conditional on $\mu^\alpha_0 = \mu^\alpha, \Sigma^\alpha_0 = \Sigma^\alpha_{\phantom{0}}, \mu^\beta_0 = \mu^\beta_{\phantom{0}}, \Sigma_0^\beta = \Sigma^\beta_{\phantom{0}}$; we will use this shorthand notation whenever it does not lead to ambiguity. The expectation of the first term containing only $t_1$ can be computed in a closed form improving efficiency of numerical calculations:
\begin{multline}\label{eqn:reform}
\cT \varphi (\mu^\alpha, \Sigma^\alpha, \mu^\beta, \Sigma^\beta) =
\sup_{u \in \U} \Big[ - \gamma \Tcost\Big((\mu^\beta)^T \bpsi(u),\, \bpsi(u)^T \Sigma^\beta \bpsi(u) + \sigma^2_T(u)\Big)\\
+ \ee \Big[\max \Big( (\mu^\alpha_1)^T \bphi(u),\, \varphi( \mu^\alpha_1, \Sigma^\alpha_1, \mu^\beta_1, \Sigma^\beta_1) \Big)\Big| \mu^\alpha, \Sigma^\alpha, \mu^\beta, \Sigma^\beta\Big]\Big],
\end{multline}
where
\[
\Tcost (m, s^2) =\frac{s}{\sqrt{2\pi}} e^{-\frac{m^2}{2s^2}} + m \Phi \Big(\frac{m}{s}\Big), 
\]
and $\Phi$ is the cumulative distribution function of the standard normal distribution. The justification of this formula is in Appendix \ref{app:proofs}.

In order to approximate $V$, consider an iterative scheme:
\begin{equation}\label{eqn:iter_scheme}
\begin{aligned}
V_1 (\mu^\alpha, \Sigma^\alpha, \mu^\beta, \Sigma^\beta) &= \sup_{u \in \U} \ee\Big[ - \gamma (t_1)^+ + (\mu^\alpha_1)^T \bphi(u) \Big|\mu^\alpha, \Sigma^\alpha, \mu^\beta, \Sigma^\beta \Big]\\
V_{N+1}(\cdot) &= \cT V_N (\cdot), \qquad N \ge 1.
\end{aligned}
\end{equation}
The following theorem guarantees that the above scheme is well defined, i.e., $V_N$ does not take values $\pm \infty$ and the expression under expectation in $\cT V_N$ is integrable, so that the operator $\cT$ can be applied to $V_N$. The proof is in Appendix \ref{app:proofs}.

\begin{theorem}\label{thm:2}
The following statements hold true.\nopagebreak
\begin{enumerate}
\item The iterative scheme \eqref{eqn:iter_scheme} is well defined.
\item  $V_N$ is the value function of the problem with at most $N$ training epochs, i.e.
\begin{align*}
&V_N(\mu^\alpha, \Sigma^\alpha, \mu^\beta, \Sigma^\beta)
= \sup_{(u_{n})_{n \ge 0},\, 1 \le \tau \le N} \ee\Big[(\mu^\alpha_{\tau})^T \bphi(u_{\tau-1}) - \gamma \sum_{n=1}^\tau (t_n)^+ \Big| \mu^\alpha,
\Sigma^\alpha, \mu^\beta, \Sigma^\beta\Big].
\end{align*}
\item $V = \lim_{N \to \infty} V_N$ and the sequence of functions $(V_N)$ is non-decreasing in $N$. Furthermore, $|V| < \infty$.
\item The value function $V$ defined in \eqref{eqn:7} is a fixed point of the operator $\cT$, i.e., satisfies the \emph{dynamic programming equation}
\begin{multline}\label{eqn:dpp_2}
V( \mu^\alpha,  \Sigma^\alpha,  \mu^\beta,  \Sigma^\beta)\\ 
= \sup_{u \in \U} \ee \Big[ - \gamma ( t_1)^+ 
+ \max \big( ( \mu^\alpha_1)^T \bphi(u), V( \mu^\alpha_1,  \Sigma^\alpha_1,  \mu^\beta_1,  \Sigma^\beta_1) \big)\Big|\mu^\alpha, \Sigma^\alpha, \mu^\beta, \Sigma^\beta  \Big].
\end{multline}
\end{enumerate}
\end{theorem}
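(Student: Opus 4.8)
The plan is to prove the four parts in order, relying on three structural facts: (i) $\mu^\alpha_n=\ee[\balpha\mid\ef_n]$ and $\mu^\beta_n=\ee[\bbeta\mid\ef_n]$ are $(\ef_n)$-martingales; (ii) after one Kalman step the covariances $\Sigma^\alpha_1,\Sigma^\beta_1$ are \emph{deterministic} functions of $(\Sigma^\alpha,\Sigma^\beta,u)$ while $\mu^\alpha_1,\mu^\beta_1$ are Gaussian, being affine images of the Gaussian observations $h_1,t_1$; (iii) $(\mu^\alpha_n,\Sigma^\alpha_n,\mu^\beta_n,\Sigma^\beta_n,t_n)$ is a time-homogeneous Markov process, with transition kernel as in Appendix~\ref{app:dynamics}. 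Write $C_\phi:=\sup_{u\in\U}\|\bphi(u)\|$, finite by Assumption~\ref{ass:bounded}. For Part~1, I would first observe $V_1(\mu^\alpha,\Sigma^\alpha,\mu^\beta,\Sigma^\beta)=\sup_u\big[(\mu^\alpha)^T\bphi(u)-\gamma\Tcost\big((\mu^\beta)^T\bpsi(u),\bpsi(u)^T\Sigma^\beta\bpsi(u)+\sigma_T^2(u)\big)\big]$, using (i) and the closed form for $\ee[(t_1)^+]$ (in particular $V_1$ does not depend on $\Sigma^\alpha$); since $0\le\Tcost(m,s^2)\le|m|+s/\sqrt{2\pi}$ and $\sup_u(\|\bpsi(u)\|+\sigma_T(u))<\infty$, $V_1$ is finite and squeezed as $-\underline g\le V_1\le\overline g$ with $\overline g(\mu^\alpha):=C_\phi\|\mu^\alpha\|$ and $\underline g$ polynomial in $\|\mu^\alpha\|,\|\mu^\beta\|,\sqrt{\mathrm{tr}\,\Sigma^\beta}$. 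Then by induction $V_N$ is well defined, measurable, and obeys the same two-sided bound: $V_N\le\overline g$ because $(t_n)^+\ge 0$ and $\|\mu^\alpha_1\|\le\ee[\|\balpha\|\mid\ef_1]$, and $V_N\ge V_1\ge-\underline g$ because $\cT\varphi\ge V_1$ pointwise for any $\varphi$ (keep the first branch of the $\max$ and use (i)). The induction closes because $\overline g,\underline g$ are ``Kalman-integrable'' — $\ee[\overline g(\mu^\alpha_1)\mid\cdot]<\infty$ and likewise for $\underline g$, by (ii) and Gaussian moments — so the integrand of $\cT V_N$ is integrable and $V_{N+1}=\cT V_N$ is again well posed; measurability of $\sup_{u\in\U}$ at each step follows from standard measurable-selection results for Borel MDPs.

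For Part~2, I would induct on $N$, the case $N=1$ being immediate since forcing $\tau=1$ reduces the one-epoch problem to $V_1$. For the step, condition the $(N+1)$-epoch problem on the first epoch: the control $u_0$ incurs $-\gamma(t_1)^+$ and moves the state to $(\mu^\alpha_1,\Sigma^\alpha_1,\mu^\beta_1,\Sigma^\beta_1)$; by the Markov property and time-homogeneity (iii), together with the inductive hypothesis, the best attainable from epoch~$1$ onward given $\ef_1$ is $\max\big((\mu^\alpha_1)^T\bphi(u_0),\,V_N(\mu^\alpha_1,\Sigma^\alpha_1,\mu^\beta_1,\Sigma^\beta_1)\big)$ — stop at $\tau=1$ versus continue optimally with at most $N$ further epochs. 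Taking $\ee[\cdot\mid\ef_0]$ and then $\sup_{u_0}$ gives $V_{N+1}=\cT V_N$. The inequality ``$\ge$'' comes from concatenating $u_0$ with an $\varepsilon$-optimal $N$-epoch continuation selected \emph{measurably} as a function of the epoch-$1$ state, so the concatenation is admissible; ``$\le$'' is the principle of optimality applied to an arbitrary admissible strategy, all expectations being legitimate by Part~1.

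For Parts~3--4: $\cT$ is order-preserving and $V_2=\cT V_1\ge V_1$ (again $\max(a,\cdot)\ge a$), hence $V_{N+1}=\cT V_N\ge\cT V_{N-1}=V_N$, so $V_N\uparrow V_\infty$, with $V_\infty\le V$ by Part~2. For $V_\infty\ge V$, fix an admissible $((u_n),\tau)$ with $\tau<\infty$ a.s.\ (strategies with $\prob(\tau=\infty)>0$ are excluded or suboptimal, since $\sum_n(t_n)^+\ge 0$) and truncate, $\tau_N:=\tau\wedge N$; the value at $\tau_N$ converges to that at $\tau$, by monotone convergence for $\gamma\sum_{n=1}^{\tau_N}(t_n)^+\uparrow\gamma\sum_{n=1}^{\tau}(t_n)^+$ and dominated convergence for $(\mu^\alpha_{\tau_N})^T\bphi(u_{\tau_N-1})\to(\mu^\alpha_{\tau})^T\bphi(u_{\tau-1})$ with dominating function $C_\phi\sup_n\|\mu^\alpha_n\|\in L^1$ (Doob's maximal inequality on the martingale $(\mu^\alpha_n)$, $\balpha$ Gaussian). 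Since the value at $\tau_N$ is $\le V_N$, we obtain $V_\infty\ge V$, so $V=\lim_N V_N$; finiteness $|V|<\infty$ then follows from $V_1\le V\le\overline g<\infty$ (cf.\ Lemma~\ref{lem:upper_bound}). For the dynamic programming equation, for each fixed $u$ the integrand $-\gamma(t_1)^++\max\big((\mu^\alpha_1)^T\bphi(u),V_N(\mu^\alpha_1,\ldots)\big)$ increases in $N$ to the same expression with $V$ in place of $V_N$ and is bounded below by its integrable $N=1$ value, so monotone convergence plus the commutation $\sup_u\sup_N=\sup_N\sup_u$ gives $\cT V=\sup_N\cT V_N=\sup_N V_{N+1}=V$.

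The main obstacle I anticipate is Part~2: turning ``condition on the first epoch'' into the rigorous identity $V_{N+1}=\cT V_N$ needs a genuine measurable selection of $\varepsilon$-optimal continuation strategies to keep the concatenated control admissible, together with careful use of the Markov property and time-homogeneity of Appendix~\ref{app:dynamics}; running alongside it, the integrability bookkeeping — the uniform-in-$N$ Kalman-integrable bounds of Part~1 and the $L^p$-bound on $\sup_n\|\mu^\alpha_n\|$ used in Part~3 — must be maintained at every expectation. By contrast, once $V=\lim_N V_N$ is secured, Part~4 is a routine monotone passage to the limit in $V_{N+1}=\cT V_N$.
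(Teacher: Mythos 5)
Your proposal is correct and follows essentially the same route as the paper: polynomial growth bounds preserved by $\cT$ (the paper's Lemmas \ref{lem:A1}--\ref{lem:A2}) for well-definedness, the standard finite-horizon dynamic programming identification for Assertion 2 (which the paper simply delegates to Bertsekas), monotone limits plus a truncation $\tau\wedge N$ with monotone/dominated convergence for Assertion 3, and a monotone passage to the limit with the swap $\sup_u\sup_N=\sup_N\sup_u$ for Assertion 4. One small slip: the upper envelope $\overline g(\mu^\alpha)=C_\phi\|\mu^\alpha\|$ does not close your induction, since $\ee\big[\|\mu^\alpha_1\|\,\big|\,\ef_0\big]\le\ee[\|\balpha\|]$ with $\balpha\sim N(\mu^\alpha,\Sigma^\alpha)$, and $\ee[\|\balpha\|]\ge\|\mu^\alpha\|$ rather than $\le$; the invariant bound must be the $\Sigma^\alpha$-dependent quantity $C_\phi\,\ee[\|\balpha\|]$ (exactly the $v^*$ of Lemma \ref{lem:upper_bound}), which is preserved because $\ee[\|\balpha\|\mid\ef_n]$ is a martingale. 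With that correction, and noting that in Assertion 3 the paper dominates the score more simply by $\sup_{u\in\U}|H(u)|\le C_\phi\|\balpha\|$ (avoiding Doob's inequality, though your $L^2$ maximal-inequality route also works since $\balpha$ is Gaussian), the argument goes through as in the paper.
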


Assertion 4 is of theoretical interest: it is the statement of the dynamic programming equation for the infinite horizon stochastic control problem. Assertion 2 states that $V_N$ is the value function for the problem with at most $N$ training epochs, which suggests what trade-off is made when using $V_N$ instead of $V$. Furthermore, $V_N \le V$ by Assertion 3, which, as we will see in the next section, may affect the stopping. We will refer to those results in the numerical section of the paper.

\subsection{Identifying control and stopping}\label{subsec:whentostop}
If $V$ was available, then the choice of control $u_n$ and the decision about stopping in Algorithm \ref{alg:sketch} would follow from classical results in optimal control theory. Indeed, one would choose
\begin{multline}\label{eqn:relaxed_u}
u_n = \argmax_{u \in \U} \ee \Big[ - \gamma ( t_{n+1})^+ 
+ \max \big( ( \mu^\alpha_{n+1})^T \bphi(u), V( \mu^\alpha_{n+1},  \Sigma^\alpha_{n+1},  \mu^\beta_{n+1},  \Sigma^\beta_{n+1}) \big)\\
\Big|\mu^\alpha_n, \Sigma^\alpha_n, \mu^\beta_n, \Sigma^\beta_n, u_n = u  \Big],
\end{multline}
and then train and validate the model using hyperparameters $u_n$, compute $\mu^\alpha_{n+1},  \Sigma^\alpha_{n+1},  \mu^\beta_{n+1},  \Sigma^\beta_{n+1}$ via \eqref{eqn:filter_a}-\eqref{eqn:filter_b}, and stop when
\begin{equation}\label{eqn:relaxed_stop}
(\mu^\alpha_{n+1})^T \bphi(u_n) \ge V( \mu^\alpha_{n+1},  \Sigma^\alpha_{n+1},  \mu^\beta_{n+1},  \Sigma^\beta_{n+1}).
\end{equation}

A natural adaptation of this procedure to the case when one only knows $V_N$ is to replace $V$ by $V_N$ in both places in the above formulas. 
Another possibility is to follow an optimal strategy corresponding to the computed value function, i.e, with the given maximum number of training epochs. If one knows $V_1, \ldots, V_N$, then for $n = 0, \ldots, N-1$,
\begin{multline*}
u_n = \argmax_{u \in \U} \ee \Big[ - \gamma ( t_{n+1})^+ 
+ \max \big( ( \mu^\alpha_{n+1})^T \bphi(u), V_{N-n}( \mu^\alpha_{n+1},  \Sigma^\alpha_{n+1},  \mu^\beta_{n+1},  \Sigma^\beta_{n+1}) \big)\\\Big|\mu^\alpha_n, \Sigma^\alpha_n, \mu^\beta_n, \Sigma^\beta_n, u_n = u  \Big],
\end{multline*}
and stop when
\[
n = N \quad \text{or} \quad (\mu^\alpha_{n+1})^T \bphi(u_n) \ge V_{N-n}( \mu^\alpha_{n+1},  \Sigma^\alpha_{n+1},  \mu^\beta_{n+1},  \Sigma^\beta_{n+1}),
\]
where, with an abuse of notation, we put $V_0 \equiv -\infty$, i.e., at time $n=N$ we always stop and we take that into account in choosing $u_N$ (c.f. the definition of $V_1$).

Let us refer to the first approach as the \emph{relaxed method} and the second approach as the \emph{exact method}. The advantage of the exact method is that it has strong underlying mathematical guarantees but it is limited to $N$ training epochs with $N$ chosen a priori. The relaxed method does not impose, a priori, such constraints but it may stop too early as the right-hand side of the condition \eqref{eqn:relaxed_stop} is replaced by $V_N \le V$. 

\section{Numerical implementation of \algName}\label{sec:numerics}

In this section, we focus on describing the methodology behind our implementation of \algName. We show how to numerically approximate the iterative scheme \eqref{eqn:iter_scheme} and how to explicitly account for numerical errors when using estimated $(V_n)_{n \ge 1}$ for the choice of control (hyperparameters) $u_n$ and for the stopping condition in Algorithm \ref{alg:sketch}. We wish to reiterate that, apart from the on-the-fly method (Section \ref{subsec:onthefly}), a numerical approximation of the value functions $V_n$, $n \ge 1$, needs to be computed only once and then applied in Algorithm \ref{alg:core} for various datasets and modelling tasks. We demonstrate this in Section \ref{sec:validation}.

\subsection{Notation}
Denote by $\X$ the state space of the process $x_n = (\mu^\alpha_n, \Sigma^\alpha_n, \mu^\beta_n, \Sigma^\beta_n)$. For $x = (\mu^\alpha, \Sigma^\alpha, \mu^\beta, \Sigma^\beta) \in \X$, define
\begin{align*}
    m^{\alpha}(u, x) = (\mu^{\alpha})^T \bphi(u), \quad & \quad s^{\alpha}(u, x)^2 = \bphi(u)^T \Sigma^{\alpha}\bphi(u) + \sigma_H^2(u),\\
     m^{\beta}(u, x) = (\mu^{\beta})^T \bpsi(u), \quad & \quad s^{\beta}(u, x)^2 = \bpsi(u)^T \Sigma^{\beta}\bpsi(u) + \sigma_T^2(u).
\end{align*}
Given $h, t \in \er$ and $x \in \X$, we write $\Kalman(x, h, t)$ for the output of the Kalman filter \eqref{eqn:filter_a}-\eqref{eqn:filter_b} (with an obvious adjustment to the notation: $x$ is taken to be the state at time $n$ and $h, t$ are the observation of the score $h_{n+1}$ and the cost $t_{n+1}$).

For convenience of implementation we assume that $\U = [0,1]^p$ which can be achieved by appropriate normalisation of the control set. We show in Section \ref{sec:validation} that this assumption is not detrimental even if some of hyperparameters take discrete values.

\subsection{Computation of the value function}
We compute the value function via the iterative scheme \eqref{eqn:iter_scheme}. We use ideas of regression Monte Carlo for controlled Markov processes, particularly the regress-later method \cite{Balata2017, Hure2018}, and approximate the value function $V_n$, $n=1, \ldots, N$, learnt over an appropriate choice of states $\cloud \subset \X$. The choice of the \emph{cloud} $\cloud$ is crucial for the accuracy of the approximation of the value function at points of interest; it is discussed in Section \ref{subsec:cloud}. We fix a grid of controls $\grid$ for computing an approximation of the Q-value as in practical applications we found it to perform better than a (theoretically supported) random choice of points in the control space $\U$.

\begin{algorithm}[tp]
\caption{$\regress(x, \varphi; \grid, N_s)$}\label{alg:lambda}
\vspace{5pt}
\SetAlgoLined
\KwIn{$x$, $\varphi$, $\grid$, $N_s$}
\KwResult{$\mathcal{L}_p$}
\BlankLine
        \For{$u^i \in \grid$}{
            Sample $(h^k)_{k=1}^{N_s} \sim N(m^{\alpha}(u^i, x), s^{\alpha}(u^i, x)^2)$ \;
            Sample $(t^k)_{k=1}^{N_s} \sim N(m^{\beta}(u^i, x), s^{\beta}(u^i, x)^2)$ \;
            $y^k \gets \Kalman(x, h^k, t^k)$ for $k=1, \ldots, N_s$\;
            $p^i \gets -\gamma\Upsilon(m^{\beta}(u^i, x), s^{\beta}(u^i, x)^2) + \frac{1}{N_s}\sum\limits_{k=1}^{N_s}\max \big(m^\alpha(u^i, y^k), \varphi(y^k)\big)$\;
        }
        $\mathcal{L}_p \gets \argmin_{\xi \in \mathcal{F}_p} \sum\limits_{i=1}^{|\grid|}l_p(p^i,\xi(u^i))$ \tcc*[r]{Regression}
\vspace{5pt}
\end{algorithm}
\begin{algorithm}[tp]
\caption{Value iteration for \algName (VI/Map)}\label{alg:vi}
\vspace{5pt}
\SetAlgoLined
\KwIn{$\varphi$; $\cloud$; $\grid$; $N$; $N_s$\tcc*[r]{Usually, set $\varphi\equiv -\infty$}}
\KwResult{$\wtl V_1, \ldots, \wtl V_N$}
\BlankLine
\KwInit{$n := 0$; $\wtl V_0 = \varphi$}
 \While{$n < N$}{
     \For{$x^j \in \cloud$}{
        $\mathcal{L}_p^j \gets \regress (x^j, \wtl V_n; \grid, N_s)$\\
        $q^j \gets \max_{u \in \mathcal{U}} \mathcal{L}_p^j(u)$
     }
     $\wtl V_{n+1} \gets \argmin_{\xi \in \mathcal{F}_{\varphi}} \sum\limits_{j=1}^{|\cloud|}l_{\varphi}(q^j,\xi(x^j))$ \tcc*[r]{Regression}
     $n \gets n + 1$
 }
\vspace{5pt}
\end{algorithm}

Computation of the value function is presented in two algorithms. Algorithm \ref{alg:lambda} finds an approximation of the expression inside optimisation in $\cT \varphi(x)$ (see \eqref{eqn:reform}), while Algorithm \ref{alg:vi} shows how to adapt value iteration approach to the framework of our stochastic optimisation problem. Details follow.

Algorithm \ref{alg:lambda} defines a function $\regress (x, \varphi; \grid, N_s)$ that computes an approximation to the mapping
\begin{equation}\label{eqn:q1}
u \mapsto - \gamma \Tcost\big(m^\beta(u, x),\, s^\beta(u, x)^2\big)\\
+ \ee \big[\max \big(m^\alpha(u, x_{1}),\, \varphi(x_1) \big) \big| x_{0} = x\big].
\end{equation}
It is done by evaluating the outcome of each control $u^i$ in $\grid$ (line 1 in Algorithm \ref{alg:lambda}) using Monte Carlo averaging over $N_s$ samples from the distribution of the state $x_{1}$ given $x_0 = x$ and the control $u^i$; this averaging is to approximate the expectation. However, the regression in line 7 employs further cross-sectional information to reduce error in Monte Carlo estimates of this expectation as long as the sampling of $(h_k)$ and $(t_k)$ is independent between different values $u^i$. This effect has mathematical grounding as a Monte Carlo approximation to ortogonal projections in an appropriate $L^2$ space if the loss function is quadratic: $l_p(p, z) = (p-z)^2$, c.f., \cite{Tsitsiklis2001, Balata2017}. In particular, it is valid to use $N_s=1$, but larger values may result in more efficient computation: one seeks a trade-off between the size of $\grid$ and the size of $N_s$ to optimise computational time and achieve sufficient accuracy. 

The space of functions $\ef_p$ over which the optimisation in regression in line 7 is performed depends on the choice of functional approximation. In regression Monte Carlo it is a linear space spanned by a finite number of basis functions. Other approximating families $\ef_p$ include neural networks, splines and regression Random Forest. Although we do not include it explicitely in the notation, the optimisation may further involve regularisation in the form of elastic nets \cite{Zou2005} or regularised neural networks, which often improves the real-world performance.

Algorithm \ref{alg:vi} iteratively applies the operator $\cT$ and computes regression approximation of the value function. The loop in lines 2-5, computes optimal control and the value for each point from the cloud $\cloud$. In line 3, we approximate the mapping
\[
u \mapsto - \gamma \Tcost\big(m^\beta(u, x^j),\, s^\beta(u, x^j)^2\big)\\
+ \ee \big[\max \big(m^\alpha(u, x_{n+1}),\, \wtl V_n( x_{n+1}) \big) \big| x_{n} = x^j\big],
\]
using Algorithm \ref{alg:lambda}. Regression in line 6 uses pairs: the state $x^j$ and the corresponding optimal value $q^j$ to fit an approximation $\wtl V_{n+1}$ of the value function $V_{n+1}$. This approximation is needed in the following iteration of the loop (in line 3) and as the output of the algorithm to be used in application of \algName to a particular data problem. The family of functions $\ef_\varphi$ (as $\ef_p$) includes linear combinations of basis functions (for regression Monte Carlo), neural networks or regression Random Forest. Importantly, the dimensionality of the state $x$ is much larger than of the control $u$ itself, so efficient approximation of the value function requires more intricate methods than in the approximation $\regress (x, \varphi; \grid, N_s)$ of the q-value. As in Algorithm \ref{alg:lambda}, some regularisation may benefit real-world performance. The loss function is commonly the squared error $l_\varphi (q, z) = (q-z)^2$ but others can also be used (particularly when using neural networks) at the cost of more difficult or of no justification of convergence of the whole numerical procedure.

\subsection{\algName}\label{sec:autoBML}

\begin{algorithm}[tb]
\vspace{5pt}
\caption{\algName}\label{alg:core}
\SetAlgoLined
\KwIn{$x_0, \epsilon$, $\wtl{V}$, $\grid'$, $N_s'$}
\KwResult{$h^*, u^*, T^*, x^*$}
\BlankLine
\KwInit{$n := -1$, $x = x_0$}
$\tilde\Lambda \gets \regress (x, (1-\epsilon) \wtl{V} ; \grid', N_s')$ \tcc*[r]{evaluate $\Lambda^\epsilon$ at $x$}
$u_0' \gets \arg\max_{u \in \U} \tilde\Lambda(u)$ \;
\KwDo{$ m^{\alpha}(u_{n}', x) < \tilde\Lambda (u_{n+1}') $}{
    $n \gets n+1$\;    
    $h_{n+1}, t_{n+1} \gets \cM(u_{n}')$ \tcc*[r]{run external ML and record h, t}
    $x \gets \Kalman(x, h_{n+1}, t_{n+1})$ \tcc*[r]{update beliefs}
    $\tilde\Lambda \gets \regress (x, (1-\epsilon) \wtl{V} ; \grid', N_s')$ \tcc*[r]{evaluate $\Lambda^\epsilon$ at $x$}
    $u_{n+1}' \gets \arg\max_{u \in \U} \tilde\Lambda(u)$ \tcc*[r]{determine optimal control}
}
$(h^*,u^*) \gets (m^{\alpha}(u_{n}', x), u'_n)$\;
$T^* \gets \sum_{k=1}^{n+1} t_k$ \;
\vspace{5pt}
\end{algorithm}

The core part of the \algName is given in Algorithm \ref{alg:core}. We use the notation $\cM$ to denote an external Machine Learning (ML) process that outputs \score $h$ and cost $t$ for the choice of hyperparameters $u\in\U$. The `raw' score $\hat{h}$ and the `raw' cost $\hat{t}$ obtained from the ML routine are transformed onto the interval $[0,1]$ with the user-specified map (robustness to a mis-specification of this map is discussed below):
\[
(\hat{h}, \hat{t}) \mapsto (h, t).
\]
For the cost $t$ we usually use an affine transformation based on a best guess of maximum and minimum values. For the score $h$ the choice depends on the problem at hand: in our practice we have employed affine and affine-log transformations. Mis-specification of these maps so that the actual transformed values are outside of the interval $[0,1]$ are accounted for by learning $\wtl{V}$ using a cloud $\cloud$ containing states that map onto distributions with resulting functions $H$ and $T$ not bounded by $[0,1]$ with non-negligible probability. This robustness does not, however, relieve the user from a sensible choice of the above transformations as significantly bad choices have a tangible effect on the behaviour of the algorithm.

We present an algorithm for the \emph{relaxed method} as introduced in Section \ref{subsec:whentostop}; an adaptation of Algorithm \ref{alg:core} to the \emph{exact method} is easy and has been omitted. 

The input of Algorithm \ref{alg:core} consists of an approximation $\wtl{V}$ of the value function $V$. An optimal control $u_{n}$ at epoch $n$ given state $x_n$ should be determined by maximisation of the expression \eqref{eqn:q1} over $u \in \U$ putting $\varphi = \wtl V$ and $x = x_n$. For each $u$, this expression compares two possibilities available after applying control $u$: either stopping or continuing training. These two options correspond to strategically different choices of $u$; the first one maximises score in the next epoch while the latter prioritises learning. The learning option is preferred if future rewards (as given by $\wtl{V}$) exceed present expected score. The approximation $\wtl{V}$ of $V$ may be burdened by errors which may result in an incorrect decision and drive the algorithm into an unnecessary `learning spree' (the opposite effect of premature stopping is less dangerous as it does not lead to unnecessarily long computation times). To mitigate this effect, we replace \eqref{eqn:q1} with
\begin{equation}\label{eq:Lambda}
    \Lambda^\epsilon(u; x, \epsilon, \wtl{V}) =  -\gamma\Upsilon(m^{\beta}(u, x),s^{\beta}(u, x)^2) + E\Big[\max \big( (m^\alpha(u, x_1), \wtl{V}(x_1)\left(1 - \epsilon\right)\big)\Big| x_0 = x\Big],
\end{equation}
for an error adjustment $\epsilon \ge 0$. Notice that $\epsilon = 0$ brings back the original formula from \eqref{eqn:q1}. The approximation of the mapping $u \mapsto \Lambda^\epsilon(u; x, \epsilon, \wtl{V})$ is computed by Algorithm \ref{alg:lambda} with $\varphi(x) = (1-\epsilon) \wtl{V} (x)$. The grid $\grid'$ and the number of Monte Carlo iteration $N_s'$ are usually taken much larger than in the value iteration algorithm since one needs only one evaluation at each epoch instead of 1000s of evaluations needed in Algorihm \ref{alg:vi}. Furthemore, more accurate computation of the mapping $\Lambda^\epsilon(\cdot; x, \epsilon, \wtl{V})$ was shown to improve real-life performance.

The stopping condition in line 9 is an approximation of the condition $m^{\alpha}(u_{n}, x) < V(x)$. Indeed, using that $V = \cT V$, we could write
$m^{\alpha}(u_{n}, x) < \cT V(x)$ and approximate the right-hand side by $\sup_{u \in \U} \Lambda^\epsilon (u; x, 0, \wtl{V})$ with $\epsilon=0$. Using non-zero $\epsilon$ 
is to counteract numerical and approximation errors as discussed above.

The stopping condition $m^{\alpha}(u_{n}, x) < V(x)$ could also be implemented as $m^{\alpha}(u_{n}, x) < (1-\epsilon) \wtl{V}(x)$. We do not do it for two reasons. Firstly, $\tilde\Lambda(u_{n+1}')$ is a more accurate approximation of the value function $V$ at point $x$ due to averaging effect of evaluating $\cT \wtl{V}$ at the required point $x$ and with computations done with much higher accuracy stemming from a finer grid $\grid'$ and a larger number of Monte Carlo iterations $N_s'$. Secondly, this brings the stopping condition in line with the choice of optimal control in line 8 which is also based on $\tilde\Lambda$ and comes at no extra cost. 

\subsection{States and clouds}\label{subsec:cloud}

Approximations $\wtl{V}_n$ in Algorithm \ref{alg:vi} are learnt over states in cloud $\cloud \subset \X$, and therefore the quality of these maps depends on the choice of $\cloud$. This cloud needs to comprise sufficiently many states encountered over the running of the model in order to provide good approximations for the choice of control $u_n$ and the stopping decision in 
Algorithm \ref{alg:core}. In the course of learning, the norm of the covariance matrices $\Sigma^\alpha_n$ and $\Sigma^\beta_n$ becomes smaller (the filter is concentrated ever more on the mean) asymptotically approaching a \emph{truth}:

\begin{definition}[Truth]
A state $x = (\mu^\alpha, \Sigma^\alpha, \mu^\beta, \Sigma^\beta) \in \X$ is called a \textit{truth} if $\Sigma^\alpha = 0$  and $\Sigma^\beta = 0$.
The set of all truths in $\X$ is denoted by $\X_{\infty}$.
\end{definition}

We argue that the learning cloud $\cloud$ should contain the following:
\begin{enumerate}[label=\alph*)]
\item an adequate number of states that are elements of $\X_{\infty}$,
\item states that are suitably close to priors $x_0$ one will use in the Algorithm \ref{alg:core},
\item an adequate number of states `lying between' the above.
\end{enumerate}

A failure to include states which are truths (point a) inhibits the ability for \algName algorithm to adequately stop. Conversely, a failure to include states which Algorithm \ref{alg:core} encounters in its running (point c) inhibits the ability to adequately continue. A failure to include a sufficient number of states that are close to priors $x_0$ in (b) impacts the coverage of the state space by states (point c), and, further, due to the errors in extrapolation, the decisions made in the first step of the algorithm.

We suggest the following approach for construction of the cloud $\cloud$. For given $N_c, K \ge 1$, we repeat $N_c$ times the steps: 
\begin{enumerate}[label=\arabic*)]
\item sample $\mu^\alpha$ and $\mu^\beta$ from appropriate distributions, e.g., multivariate Gaussian,
\item sample $\Sigma^\alpha$ and $\Sigma^\beta$ from appropriate covariance distributions, e.g. Wishart or LKJ \cite{lewandowski2009},
\item add states
\[
\Big(\mu^\alpha, \frac{k}{K} \Sigma^\alpha, \mu^\beta, \frac{k}{K} \Sigma^\beta\Big), \qquad k = 0, \ldots, K,
\]
to the cloud $\cloud$.
\end{enumerate}
This results in a cloud of size $N_c (K+1)$.

Note that if in steps 1 and 2 one uses distributions with the means equal to possible priors $x_0$, point (b) above is fulfilled. At step 3, $k=0$ gives zero covariance matrices, i.e.,  truths according to the definition above. It also highlights that distributions used in step 1 should be sufficiently dispersed to cover possible truths for models for which \algName will be used.

\subsection{On-the-fly}\label{subsec:onthefly}

\begin{algorithm}[tb]
\caption{$\OTF(x, N; \varphi, \grid, N_s)$}\label{alg:OTF}
\vspace{5pt}
\SetAlgoLined
\KwIn{$x$, $N$, $\varphi$, $\grid$, $N_s$, }
\KwResult{$\mathcal{L}_p$}
\BlankLine
        \For{$u^i \in \grid$}{
            Sample $(h^k)_{k=1}^{N_s} \sim N(m^{\alpha}(u^i, x), s^{\alpha}(u^i, x)^2)$ \;
            Sample $(t^k)_{k=1}^{N_s} \sim N(m^{\beta}(u^i, x), s^{\beta}(u^i, x)^2)$ \;
            $y^k \gets \Kalman(x, h^k, t^k)$ for $k=1, \ldots, N_s$\;
            \uIf{$N > 1$} 
            { $v \gets \OTF (y^k, N-1; \varphi, \grid, N_s)$\;
            $v^k \gets \sup_{u \in \U} v (u)$\;}
            \Else
            { $v^k \gets \varphi(y^k)$}
            $p^i \gets -\gamma\Upsilon(m^{\beta}(u^i, x), s^{\beta}(u^i, x)^2) + \frac{1}{N_s}\sum\limits_{k=1}^{N_s}\max \big(m^\alpha(u^i, y^k), v^k\big)$\;
        }
        $\mathcal{L}_p \gets \argmin_{\xi \in \mathcal{F}_p} \sum\limits_{i=1}^{|\grid|}l_p(p^i,\xi(u^i))$ \tcc*[r]{Regression}
        
\vspace{5pt}
\end{algorithm}

The `on-the-fly' method acquires evaluations $V_{N}(x)$ by performing nested Monte-Carlo integrations of the recurrence \eqref{eqn:iter_scheme}. with the initial condition $V_0(x) = \sup_{u \in \U} \varphi(u)$, where  Specifically, $\OTF (x, N, \varphi; \grid, N_s)$ approximates the q-value $\mathcal{V}_N (x, \cdot)$ defined by the recursion
\begin{equation}\label{eq:dp}
\begin{aligned}
\mathcal{V}_{1}(x,u) &= -\gamma\Upsilon(m^{\beta}(u, x), s^{\beta}(u, x)^2) + \ee \Big[\max \Big(m^\alpha(u, x_1), \varphi (x_1) \Big) \Big| x_0 = x \Big],\\
    \mathcal{V}_{n+1}(x,u) &= -\gamma\Upsilon(m^{\beta}(u, x), s^{\beta}(u, x)^2) + \ee \Big[\max \Big(m^\alpha(u, x_1), \sup_{u \in \U} \mathcal{V}_{n} (x_1, u) \Big) \Big| x_0 = x \Big],
\end{aligned}
\end{equation}
where $\varphi:\X \to \mathbb{R}$ is some given inital value as in Algorithm \ref{alg:vi}.
Setting $\varphi \equiv -\infty$, one can show by induction that $V_N(x) = \sup_{u \in \U} \mathcal{V}_N (x,u)$, c.f. \eqref{eqn:iter_scheme}. Notice also by comparing to Algorithm \ref{alg:lambda} that 
\[
\regress(x, \varphi; \grid, N_s) = \OTF (x, 1, \varphi; \grid, N_s).
\]
This motivates two modifications of Algorithm \ref{alg:core}. The first one, which we call `on-the-fly' in this paper, replaces calls to $\regress$ in lines 1 and 6 by calls to $\OTF (x, N, \varphi; \grid, N_s)$ with $\varphi \equiv -\infty$. We also increase the grid size $\grid$ and $N_s$ in the first execution to obtain a finer approximation of $\mathcal{V}_N$ than $(\mathcal{V}_n)_{n=1}^{N-1}$, in line with the discussion in Section \ref{sec:autoBML}.

The second modification of Algorithm \ref{alg:core} whose performance is not reported in this paper is motivated by the accuracy improvement when using $\regress(x, \wtl{V}; \grid', N_s')$ instead of $\wtl{V}$. In the same manner, calls to $\regress$ can be replaced by $\OTF (x, N, \wtl{V}; \grid', N_s')$ with $N \ge 2$ bringing further improvements but at a cost of significant increase in computational complexity.

The implementation of $\OTF$ is computationally intensive as it uses multiple nested Monte Carlo integrations and optimisations with the computational complexity exponential in $N$. Its advantage is that it does not require a good specification of the cloud $\cloud$ and precomputed value functions. 

\section{Validation}\label{sec:validation}

In this section we demonstrate applications of \algName to a synthetic example (to visualise its actions and performance) and a number of real datasets and models. Detailed information about settings for each example are collected in Appendix \ref{app:D}. All results were obtained using a desktop computer with Intel(R) Core(TM) i7-6700K CPU (4.4GHz), 32GB RAM and GeForce RTX 3090 GPU (24GB). In the synthetic example, Section \ref{subsec:syntetic}, we artificially slowed down computations in order to be able to observe differences in running times between hyperparameter choices.

For computation of 1D and 2D value function maps used in VI/Map algorithm, we employ a cloud $\cloud$ following general guidelines described in Subsection \ref{subsec:cloud}. However, in addition to those we enrich the cloud by first generating a small set of size $100$ of realistic unimodal shapes for the score and cost functions which we propagate via Kalman filter using a predefined control grid up to certain depth $D$. In 1D case we take depth $D=3$ and the cloud $\cloud$ contains $156,000$ states of which $1000$ are \textit{truths}. In 2D case, the value function is built using a cloud $\cloud$ extended by the above procedure with depth $D=10$; it contains $761,600$ states of which $1000$ are \textit{truths}. 

In VI/Map Algorithm \ref{alg:vi} the regression in line 6 is performed using Random Forest Regression (RFR) with $100$ trees. Functions $\sigma_H$ and $\sigma_T$ are taken as constant (independent of the hyperparameters $u$).

\subsection{A synthetic example}\label{subsec:syntetic}

\newcommand\ntrees{n_{\text{trees}}}

\begin{figure}[t]
    \centering
    \includegraphics[width=0.45\textwidth]{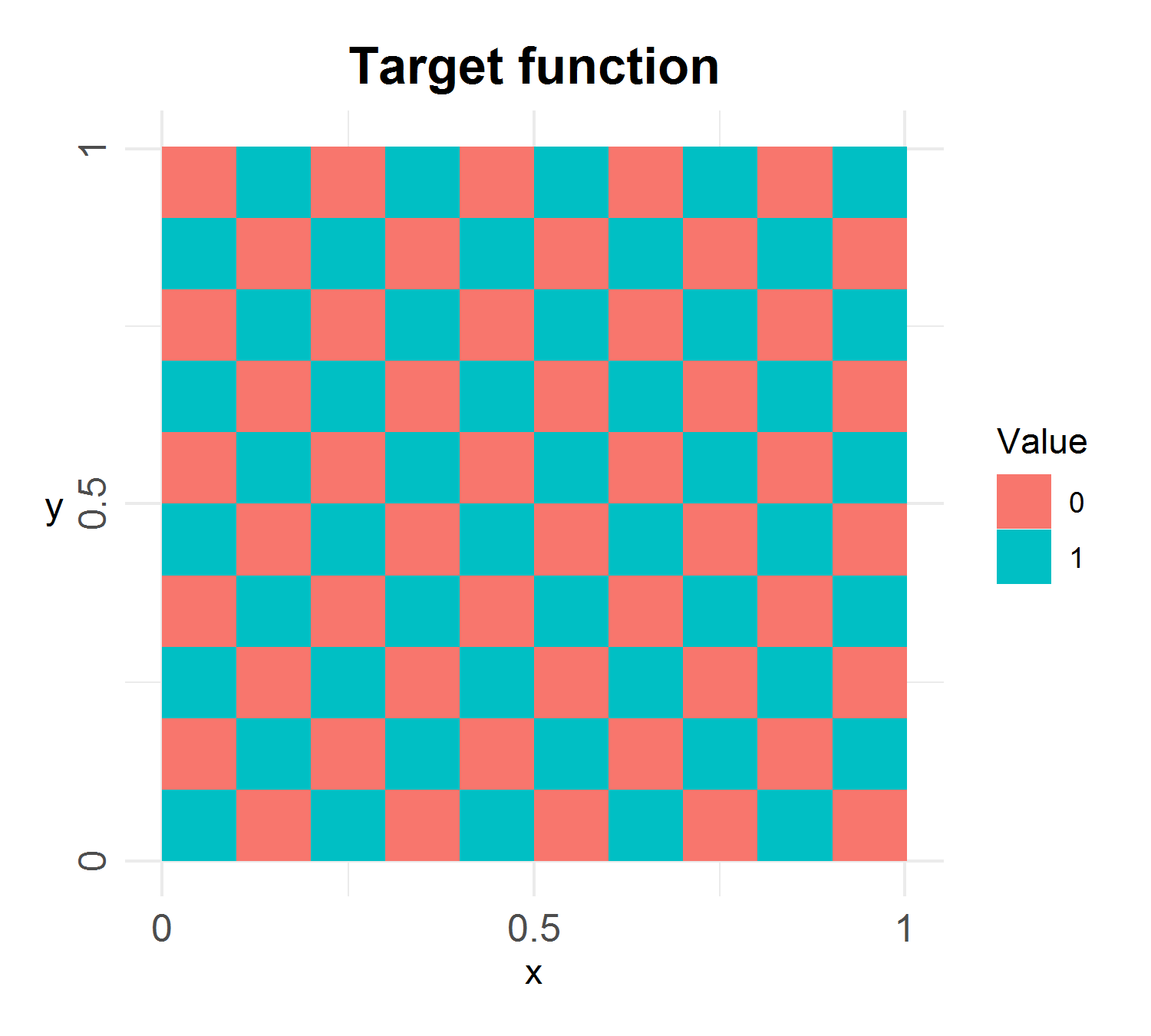}
    \caption{Function to be predicted in the synthetic example of Section \ref{subsec:syntetic}.}
    \label{fig:targetFunc}
    \centering
    \includegraphics[width=0.9\textwidth]{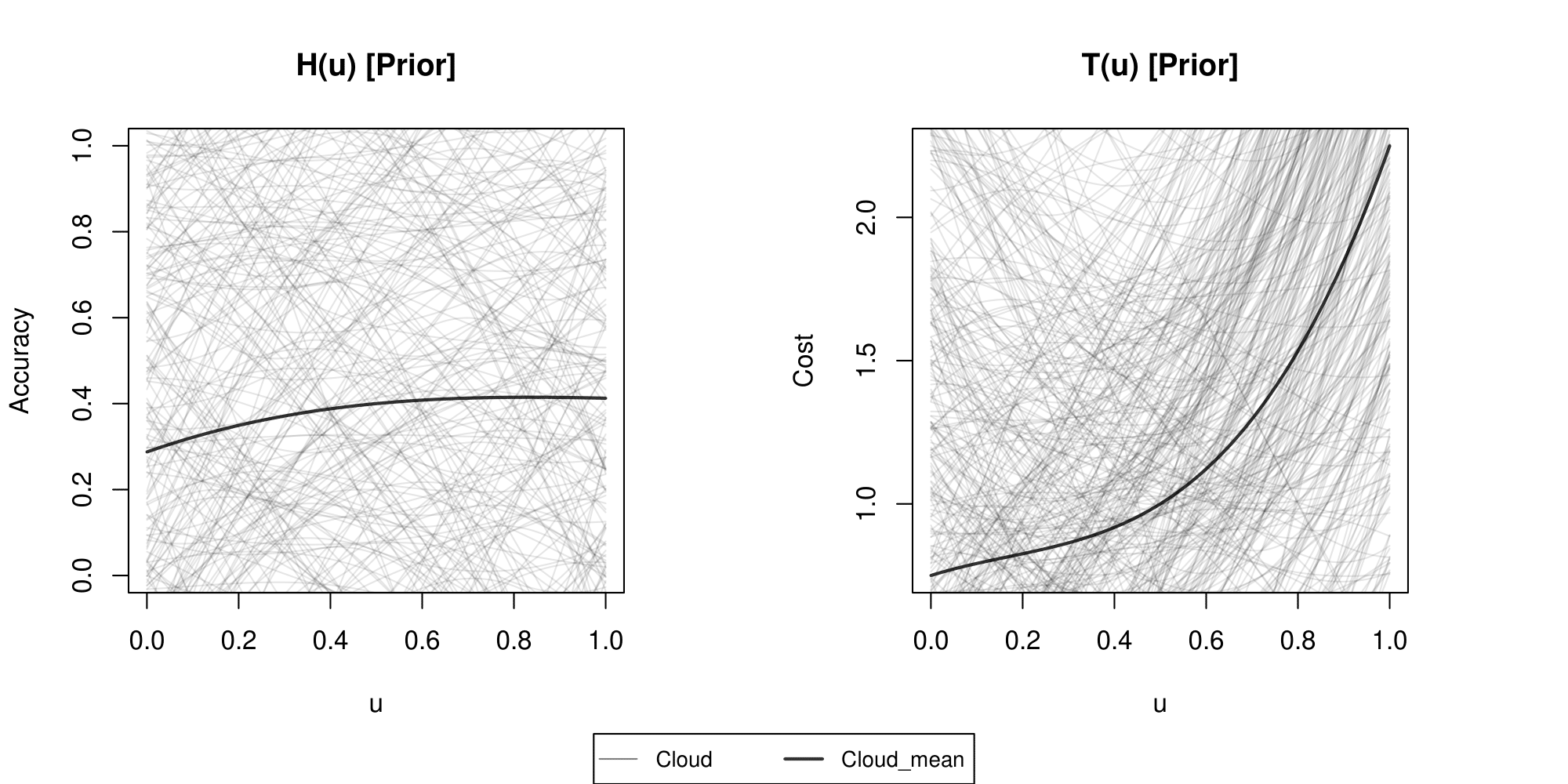}
    \caption{Score and cost priors for the synthetic example.}
    \label{fig:prior}
\end{figure}
\begin{figure}[p]
    \centering
    \begin{subfigure}[b]{1\textwidth}
        \includegraphics[width=1\linewidth]{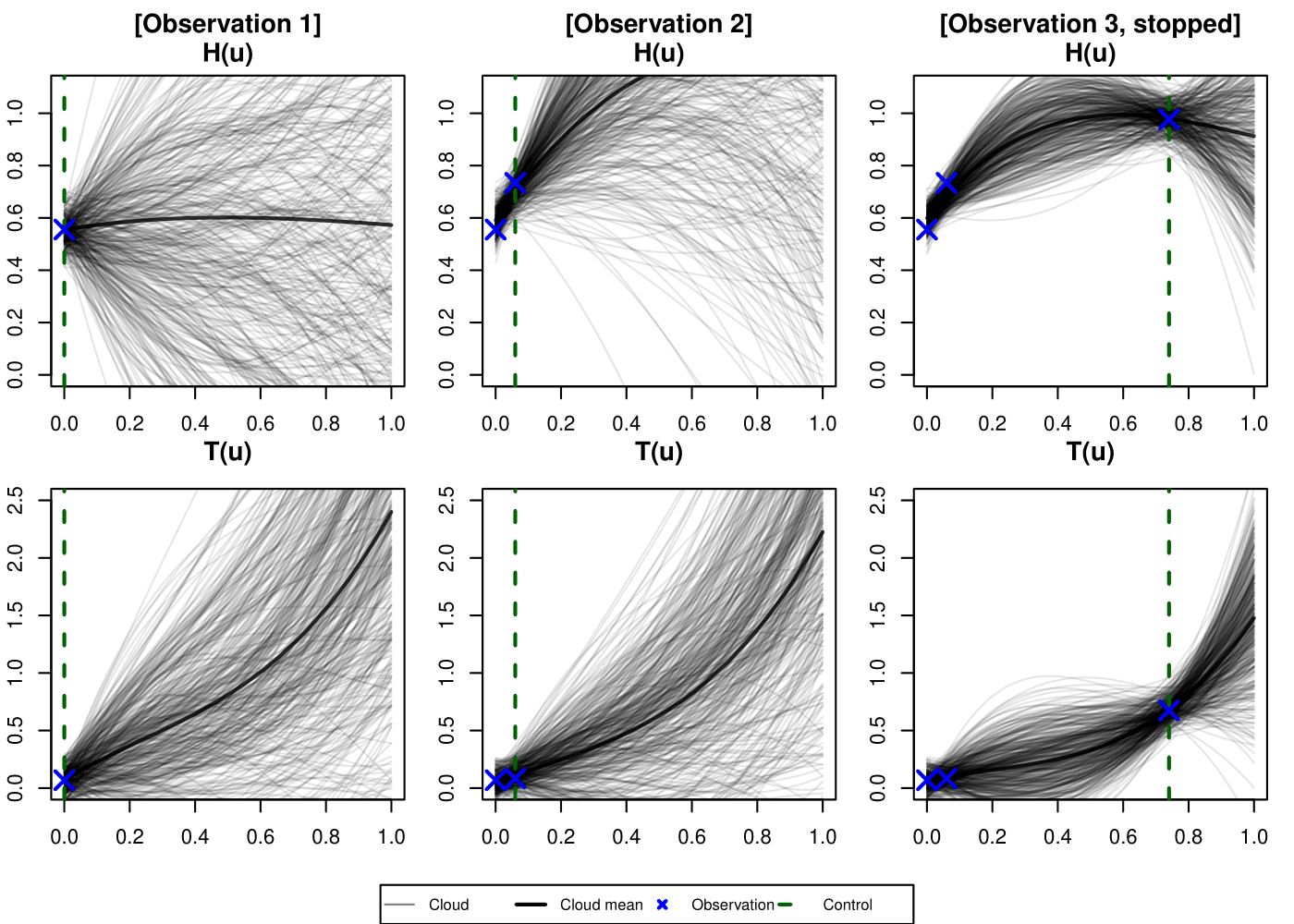}
        \caption{Evolution of posterior distributions of $H(u)$ and $T(u)$.}
    \end{subfigure}\\
    \vspace{10pt}
    \begin{subfigure}[b]{1\textwidth}
    \centering 
        \includegraphics[width=1\linewidth]{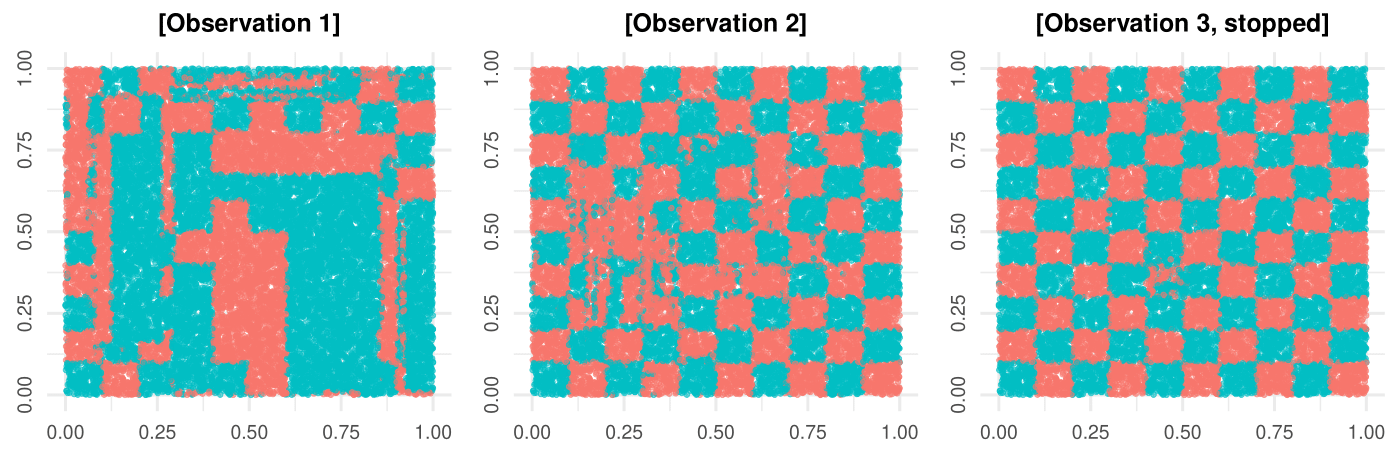}
        \caption{Validation performance on random points $(x,y) \sim \mathcal{U}(0,1)^2$.}
    \end{subfigure}
    \caption{One run for the synthetic classification example using \algName (VI/Map $N=2$, $\epsilon=0$).}
    \label{fig:synthresults}
\end{figure}

We illustrate steps of \algName algorithm on a synthetic prediction problem. To make results easier to interpret visually, we consider a one-dimensional hyperparameter selection problem where we control the number of trees in a random forest (RF) model and take scaled accuracy as the score. Observed accuracy $\hat h$ is mapped to $h$ by an affine mapping that maps $50\%$ model accuracy onto score $0$ and $100\%$ onto $1$, while the real running time $\hat t$ is mapped to $t$ by an affine mapping that maps $0$ seconds onto $0$ and $0.6$ seconds onto $1$:
\begin{equation*}
h= \frac{\hat{h} - 0.5}{0.5}, \qquad t= \frac{\hat{t}}{0.6}.
\end{equation*}
The task at hand is to predict outputs of a function 
\begin{equation*}
     f(x,y):= \big( \lfloor ax + 1\rfloor + \lfloor by + 1\rfloor \big) \mod 2, \quad (x,y) \in [0,1]\times[0,1],
\end{equation*}
with $a=b=10$ (Figure \ref{fig:targetFunc}), where $\lfloor \cdot \rfloor$ is the floor function: $\lfloor x \rfloor = \max\{m \in \mathbb{Z}\;|\; m \leq x\}$.

We initialise our synthetic experiment with a prior on $H(u)$ being relatively flat but with a large enough spread to cover well the range $(0,1)$ (see the left panel of Figure \ref{fig:prior}). For the cost function $T(u)$ we choose a pessimistic prior (see the right panel of Figure \ref{fig:prior}), which reflects our conservative belief that the cost is exponentially increasing as a function of the number of trees $n_{\max}$ (in practice, it is usually linear).
We fix $\gamma=0.16$ and $\epsilon = 0$ and set the observation errors $\sigma_H \equiv 0.05$ and $\sigma_T \equiv 0.1$ (standard deviation of the observation error for the score and the cost is assumed to be $5\%$ and $10\%$, respectively, independently of the choice of the hyperparameter). We relate the control $u\in[0,1]$ with the number of trees $\ntrees$ via the mapping:
\begin{equation}
    \ntrees(u) = \lfloor 1 + 99u \rfloor.
\end{equation}
Thus, the smallest considered random forest has $\ntrees(0)=1$ tree and the largest has $\ntrees(1)=100$ trees.

We present below performance of two algorithms: VI/Map, Algorithm \ref{alg:vi} ($N_s = 100$ with $N=2$ and $N=4$), and on-the-fly (OTF), Algorithm \ref{alg:OTF} ($N_s = 1000$ with $N=2$). We set $\grid = \{0, 0.01, 0.02, \ldots, 1\}$. Training set comprises $30,000$ points and the validation set contains $20,000$ points. In both sets, the points were randomly drawn from a uniform distribution on the square $[0,1]^2$.

\subsubsection{VI/Map results}

In this example, we use the 1D map described at the beginning of Section \ref{sec:validation}. 
We visualise results for the map produced with $N=2$ (recall that $N$ denotes the depth of recursion in the computation of the value function) and $\epsilon = 0$. We also include a summary of results for the $N=4$, $\epsilon = 0$ map (Table \ref{tab:synthresults_map_2}) and show that they are not significantly different from the $N=2$ case.

One run of the VI/Map ($N=2$, $\epsilon=0$) algorithm together with the evolution of posterior distributions is displayed in Figure \ref{fig:synthresults} with an accompanying summary Table \ref{tab:synthresults_map_1} which shows changes of important quantities across iterations. Notice that \algName stopped at iteration 3 with the control $u = 0.74$ (corresponding to $\ntrees = 74$) and the expected posterior score and the realised score of $\approx 0.98$ (accuracy $99\%$). Recall that the observed accuracy $h_n$ is burdened with a random error due to the randomness involved in training of the random forest and in validating the trained model. The closedness of the expected posterior score and the realised score at the time of stopping is not a rule and should not be expected in general (see examples further in this section).

We note that the VI/Map with $N=4$ produces a very similar run (Table \ref{tab:synthresults_map_2}), so additional expense at finding depth $N=4$ map is not necessary for this example. 

\begin{table}[tbp]
\begin{center}
\begin{tabular}{ |p{0.3cm}|p{0.9cm}|p{0.8cm}|p{1.3cm}|p{0.8cm}|p{2.3cm}|p{0.7cm}|}
\hline
\multicolumn{7}{|c|}{Synthetic example (VI/Map $N=2$, $\epsilon = 0$)} \\
\hline
$n$ & $h_n$ & $t_n$ & $\sum_{i=1}^n t_i$  & $u_{n-1}$ & $(\mu^{\alpha}_{n})^T\bphi(u_{n-1})$ & $V$ \\
\hline
1&0.556&0.068&0.068&0&0.555&0.75\\
2&0.734&0.083&0.151&0.06&0.697&1.147\\
3&0.977&0.675&0.826&0.74&0.98&0.974\\
\hline
\end{tabular}
\caption{\algName (VI/Map, $N=2$, $\epsilon = 0$) Results of one run for the synthetic example. The first column displays the iteration of the algorithm, the second columns the realised score $h_n$, the third the realised cost $t_n$, following by the total cost and the control $u_{n-1}$ applied in a given iteration and the expected posterior score $(\mu^{\alpha}_{n})^T\bphi(u_{n-1})$ corresponding to the control $u_{n-1}$ applied in iteration $n$. Finally $V$ shows the value function map evaluated at the posterior distribution after iteration $n$.}\label{tab:synthresults_map_1}
\end{center}
\end{table}

\begin{table}[tbp]
\begin{center}
\begin{tabular}{ |p{0.3cm}|p{0.9cm}|p{0.8cm}|p{1.3cm}|p{0.8cm}|p{2.3cm}|p{0.7cm}|}
\hline
\multicolumn{7}{|c|}{Synthetic example (VI/Map $N=4$, $\epsilon = 0$)} \\
\hline
$n$ & $h_n$ & $t_n$ & $\sum_{i=1}^n t_i$  & $u_{n-1}$ & $(\mu^{\alpha}_{n})^T\bphi(u_{n-1})$ & $V$ \\
\hline
1&0.56&0.074&0.074&0&0.559&0.803\\
2&0.738&0.093&0.167&0.07&0.707&1.129\\
3&0.981&0.653&0.82&0.79&0.984&0.974\\
\hline
\end{tabular}
\caption{\algName (VI/Map, $N=4$, $\epsilon = 0$) Results of one run for the synthetic example.}\label{tab:synthresults_map_2}
\end{center}
\end{table}

\paragraph{Non-zero $\epsilon$ value.}
We kept artificial damping parameter $\epsilon$, which accounts for errors in the value function map, equal to zero as the quality of the map was very good. It can also be observed in Table \ref{tab:synthresults_map_1} and \ref{tab:synthresults_map_2} that any reasonable value of $\epsilon$ would not change the course of training.

\subsubsection{On-the-fly results}

For the OTF ($N=2$) version of the algorithm we provide a summary of one run in Table \ref{tab:synthresults}. \algName stopped at iteration 3 with the control $u = 0.76$ (corresponding to $\ntrees = 76$) and the expected posterior score of $0.99$ (accuracy $99.5\%$) and realised score of $0.98$ (accuracy $99\%$). We observe that VI/Map results are very similar to the OTF results. In particular, both methods completed training in 3 iterations and the difference between final controls is very small.
\begin{table}[!ht]
\begin{center}
\begin{tabular}{ |p{0.3cm}|p{0.9cm}|p{0.8cm}|p{1.3cm}|p{0.8cm}|p{2.3cm}|p{0.7cm}|}
\hline
\multicolumn{7}{|c|}{Synthetic example (OTF, N=2)} \\
\hline
$n$ & $h_n$ & $t_n$ & $\sum_{i=1}^n t_i$  & $u_{n-1}$ & $(\mu^{\alpha}_{n})^T\bphi(u_{n-1})$ & $V$ \\
\hline
1&0.545&0.066&0.066&0&0.545&0.653\\
2&0.976&0.381&0.447&0.56&0.973&0.998\\
3&0.983&0.445&0.892&0.76&0.991&0.92\\
\hline
\end{tabular}
\caption{\algName (OTF, N=2) Results of one run for the synthetic example.}\label{tab:synthresults}
\end{center}
\end{table}

In the next section, we proceed to deploy \algName algorithm on real data, using convolutional neural networks.

\subsection{Convolutional neural network}\label{ssec:cnn}

In this section, we focus on a computationally intensive problem of classifying breast cancer specimen images for Invasive Ductal Carcinoma (IDC). IDC is the most common subtype of all breast cancers. Data \citep{janowczyk2016deep, cruz2014automatic} contains $277,524$ image patches of size $50\times50$, where $198,738$ are IDC negative and $78,786$ IDC positive.
    
We build a classification model using convolutional neural network (CNN). We optimise the following metaparameters: \textit{batch} (batch size) and $r$ (learning rate). CNN is run twice (2 \textit{epochs}) over a given set of training data and is evaluated on a holdout test data. Due to the fact that we run CNN for exactly 2 epochs (and not until a certain accuracy threshold is reached), the choice of the learning rate $r$ will have no effect on the running time. This is to demonstrate that \algName is also able to deal with cases where the cost does not depend on some of the optimised hyperparameters. Architecture of CNN is shown in Figure \ref{fig:cnnarchitecture} and described in Appendix \ref{app:breastcancerarch}.
\begin{figure}[tb]
    \centering
    \includegraphics[width=0.8\textwidth]{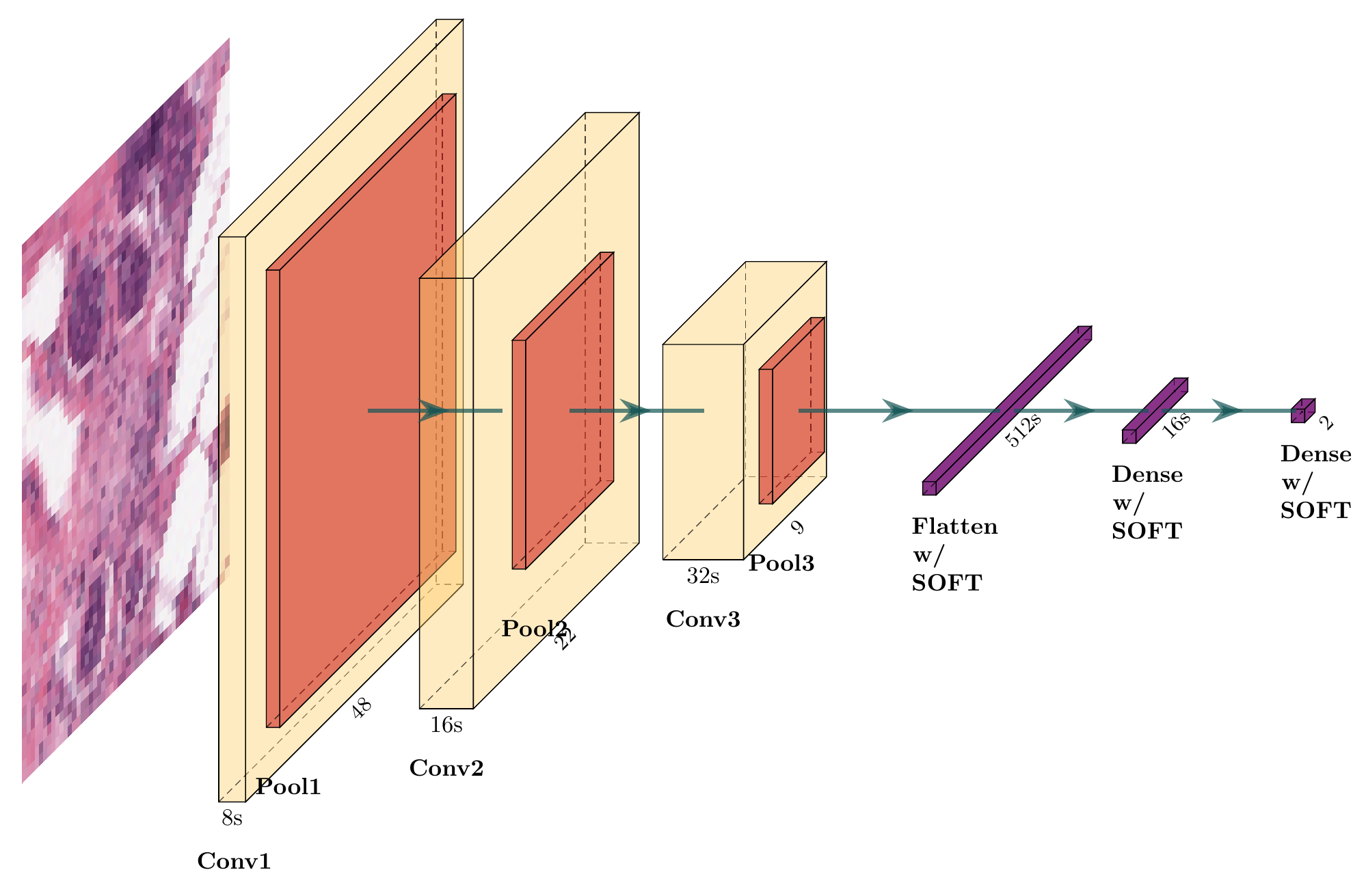}
    \caption{CNN architecture, where input is $50\times 50$ tissue image with $3$ colour channels and scaling parameter $s=1$ that affects the number of filters in convolution layers and sizes of flattened, densely connected arrays. A detailed information about the architecture of CNN can be found in Appendix \ref{app:breastcancerarch}.}
    \label{fig:cnnarchitecture}
\end{figure}

A naive tuning of CNN over a two-dimensional predefined grid of hyperparameters would be computationally expensive. In Sections \ref{sssec:otfcnnresults} and \ref{sssec:mapcnnresults} we present results of 2D \algName (VI/Map) and \algName (OTF), respectively, where nearly optimal controls are obtained in a relatively few iterations. However, we start by discussing 1D VI/map results on \textit{batch} and \textit{r} separately. Details about priors and parameter settings are collected in Appendix \ref{app:D}.

\subsubsection{VI/Map CNN \textit{batch} }\label{sssec:mapcnnbatch}

Due to limited computational power, we expedite CNN computational time by working on a subset of data. A training set consists of 5000 images with 50/50 split between the two categories, and a testing set is also balanced and contains 1000 images. We fix the learning rate $r=0.0007$.

We use the $N=2$ map with $\sigma_H = 0.15$, $\sigma_T = 0.1$ and $\gamma=0.16$. For this example as well as for others involving a neutral network, we choose relatively high $\sigma_H = 0.15$ to account for larger randomness in training for some choices of hyperparameters. For the sake of ilustration, we allow large batch sizes (relative to the size of the data set), which results in unstable model training. To mitigate the resulting significant variability in observed accuracy and enable interesting observations of the algorithm performance, for each $u$ the neural network is trained $5$ times and a median accuracy is outputted together with the total accumulated cost of training and testing $5$ neural networks. We map control $u \in [0,1]$ to batch size in the following way:
\begin{equation}\label{eqn:map_batch}
    \textit{batch}(u) = \lfloor \textit{batch}_{\min} + (\textit{batch}_{\max}-\textit{batch}_{\min})\cdot u \rfloor,
\end{equation}
which maps $0$ to $\textit{batch}_{\min}=10$ and $1$ to $\textit{batch}_{\max}=200$ linearly. The accuracy and the running time (in minutes) are mapped into score and cost as follows
\begin{equation}\label{eqn:CNN_rescale}
h:= \frac{\hat{h} - 0.45}{0.35}, \qquad t:= \frac{\hat{t}}{7.5}.
\end{equation}

Results for this problem, dubbed 1D CNN \textit{batch}, can be inspected in Table \ref{tab:cnnbatchresults}. We also visualise final posterior distributions of $H(\cdot)$ and $T(\cdot)$ in Figure \ref{fig:batch_final}. The algorithm returned the control corresponding to batch size of $89$ with realised posterior score of $0.81$ (corresponding to the accuracy $73\%$) and expected posterior score of $0.83$ (accuracy $74\%$). 

\begin{figure}[t]
    \centering
    \includegraphics[width=0.9\textwidth]{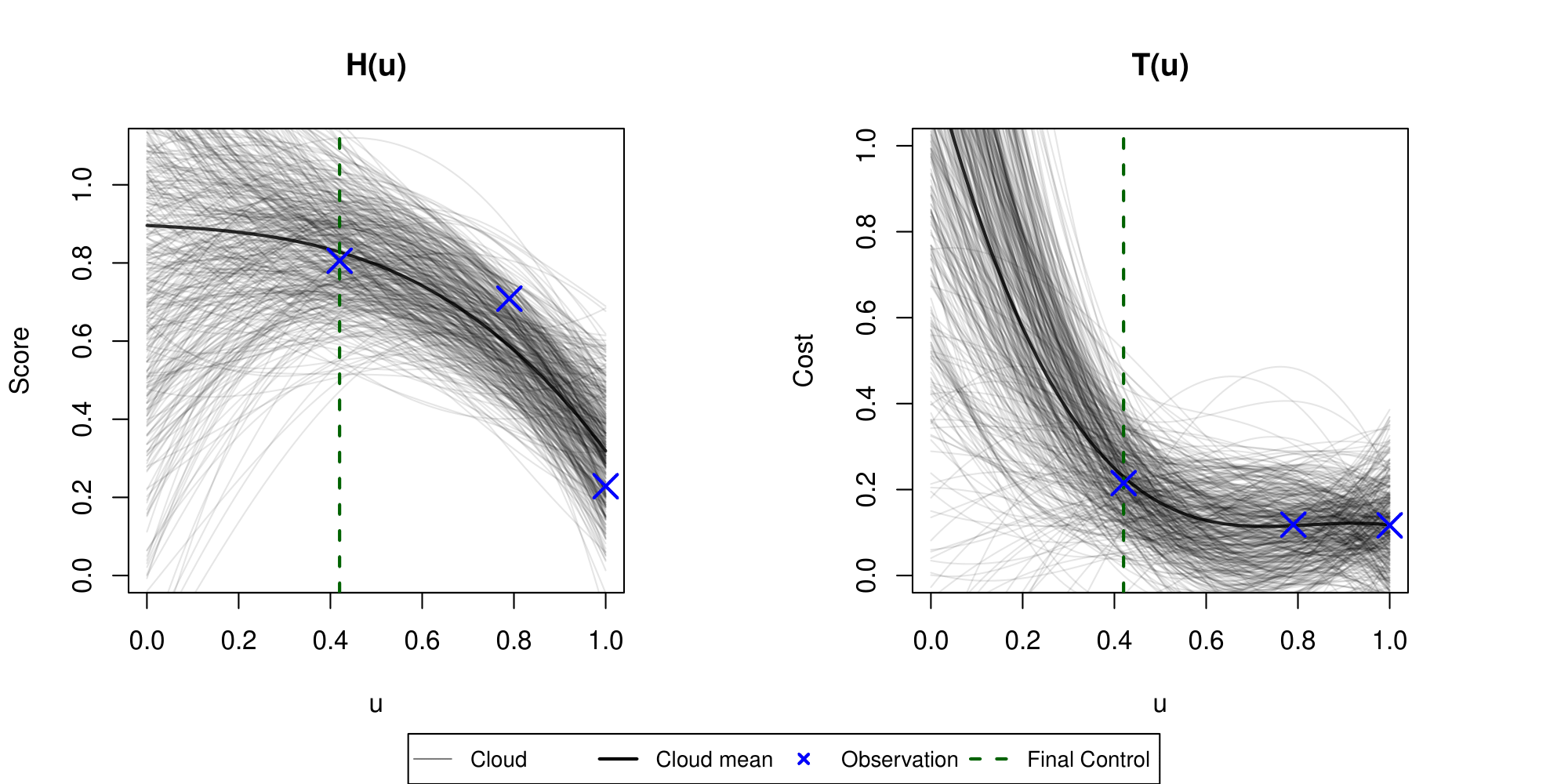}
    \caption{Visualisation of a final posterior distribution of a 1D CNN \textit{batch} example.}
    \label{fig:batch_final}
\end{figure}

\begin{table}[t]
\begin{center}
\begin{tabular}{ |p{0.3cm}|p{0.9cm}|p{0.8cm}|p{1.3cm}|p{0.8cm}|p{2.3cm}|p{0.7cm}|}
\hline
\multicolumn{7}{|c|}{CNN example; \textit{batch} (VI/map, N=2, $\epsilon = 0$)} \\
\hline
$n$ & $h_n$ & $t_n$ & $\sum_{i=1}^n t_i$  & $u_{n-1}$ & $(\mu^{\alpha}_{n})^T\bphi(u_{n-1})$ & $V$ \\
\hline
1&0.229&0.117&0.117&1&0.23&0.526\\
2&0.709&0.118&0.235&0.79&0.619&0.978\\
3&0.806&0.215&0.45&0.42&0.828&0.8\\
\hline
\end{tabular}
\caption{\algName (VI/Map, N=2, $\epsilon=0$) results on 1D CNN example with \textit{batch} as control.}\label{tab:cnnbatchresults}
\end{center}
\end{table}

\subsubsection{VI/Map CNN \textit{r} }\label{sssec:mapcnnlr}

Here we explore another 1D case using learning rate \textit{r} as the control. We fix the batch size $66$. It is known that the learning rate does not influence the computational cost of the neural net trained over fixed number of epochs. However, by having the batch size $66$ we make each run of the neural net relatively expensive in the context of these examples ($\approx$ $2.7$ minutes). Therefore, our goal is still to obtain an optimal control in as few iterations as possible, to save computational resources.
\begin{figure}[t]
    \centering
    \begin{subfigure}[b]{0.9\textwidth}
      \centering
      \includegraphics[width=0.9\textwidth]{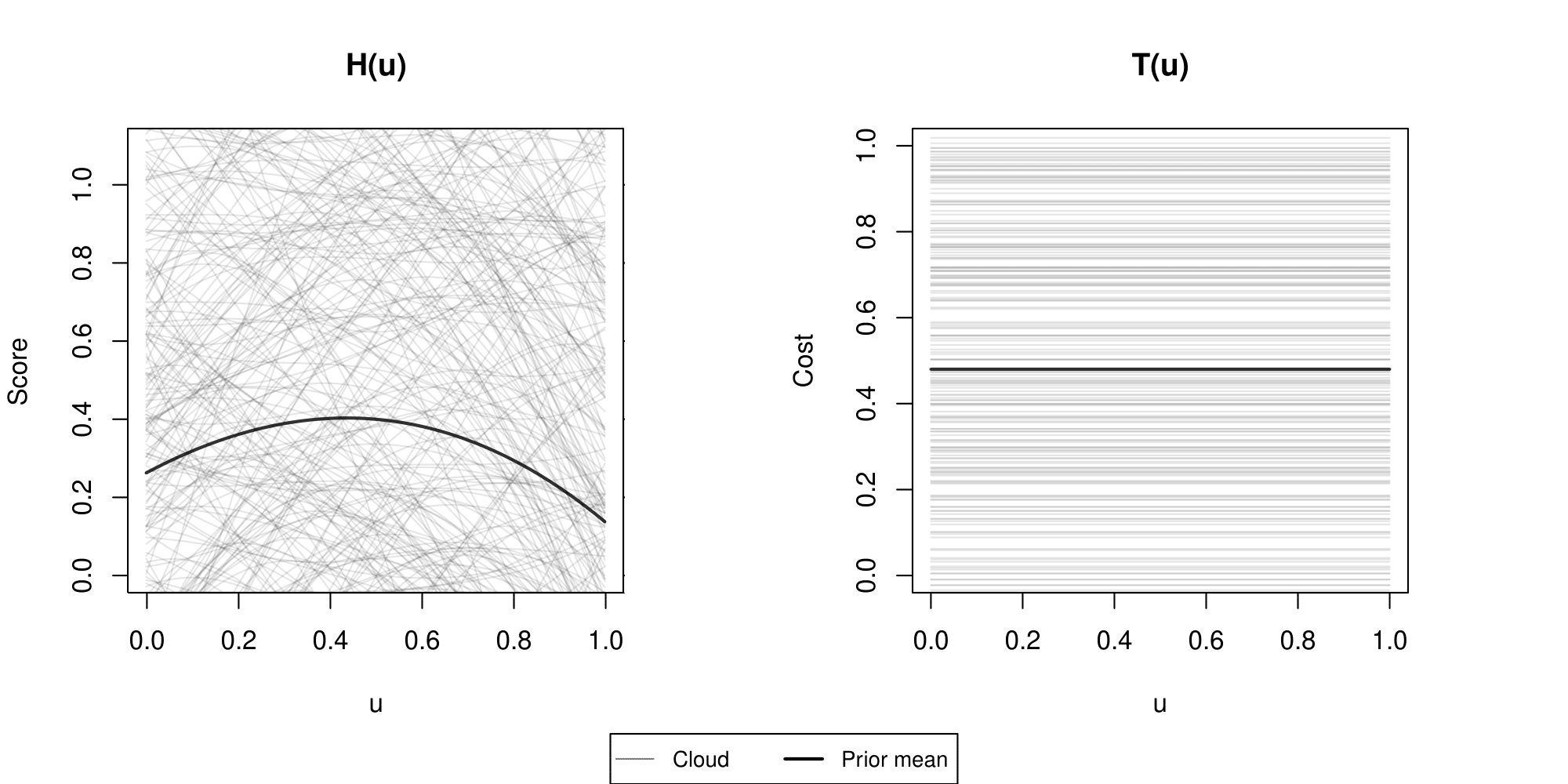}
      \caption{Prior distributions for $H$ and $T$.}
      \label{fig:priors_r}
    \end{subfigure}
    \begin{subfigure}[b]{0.9\textwidth}    
     \centering
     \includegraphics[width=0.9\textwidth]{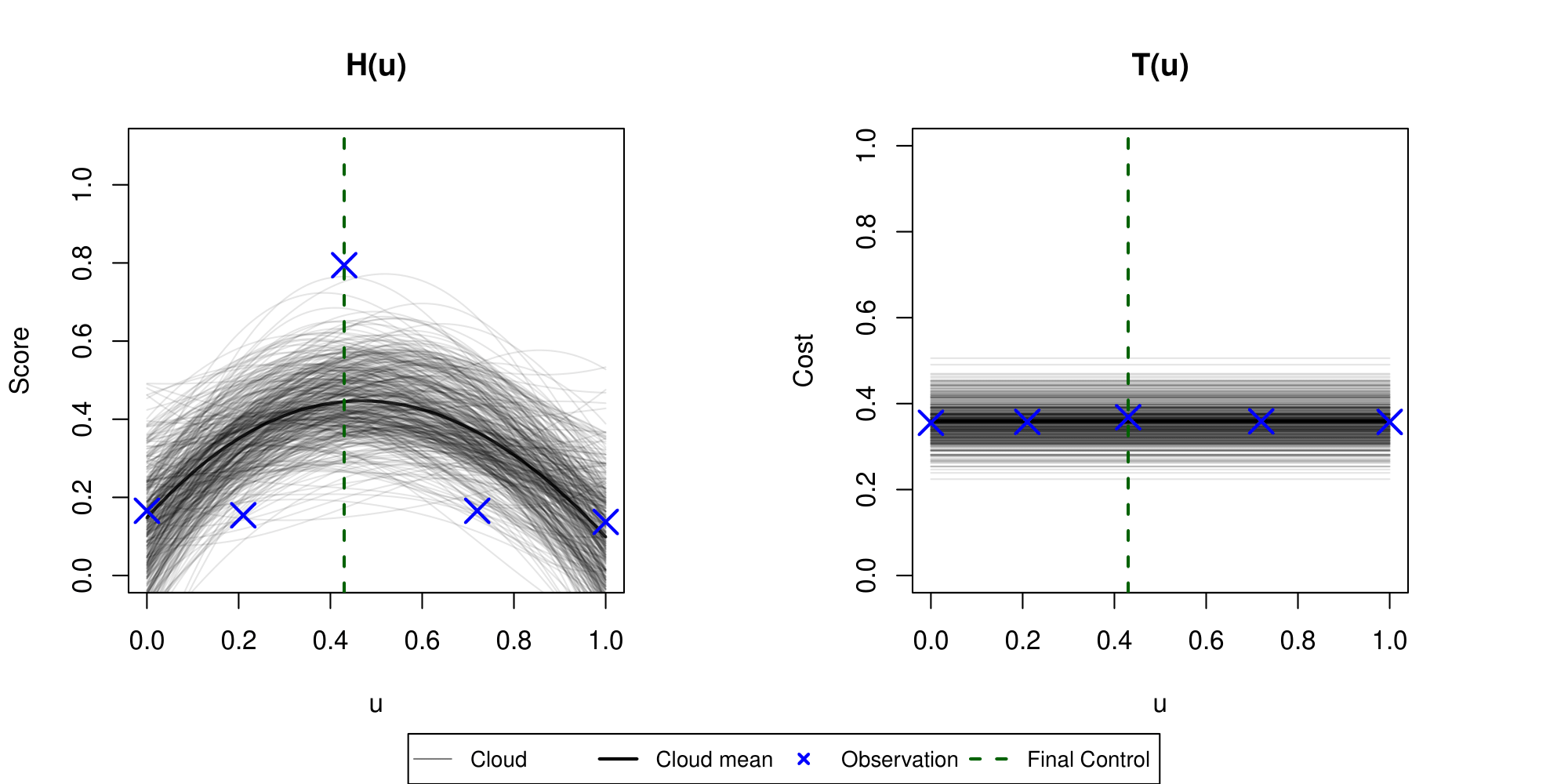}
     \caption{Final posterior distribution of $H$ and $T$.}
     \label{fig:posterior_r}
    \end{subfigure}
    \caption{Prior and posterior distributions for 1D CNN \textit{r} example.}
\end{figure}
We keep the same mappings for $h$ and $t$ as above, and the same $\sigma_H = 0.15$, $\sigma_T = 0.1$, $\gamma=0.16$. Using common knowledge about the effect of the learning rate on accuracy and running time, we modify the prior for $H(\cdot)$ so that the mean is unimodal and has a maximum in the interior of $(0,1)$ and the prior for $T(\cdot)$ is flat (Figure \ref{fig:priors_r}).
The mapping of control into the learning rate is linear on the $\log(r)$ scale, i.e., we set
\begin{equation}\label{eqn:r_map}
r(u) = \exp\Big\{\log(r_{\min}) + \big(\log(r_{\max}) - \log(r_{\min})\big) \cdot u)\Big\},
\end{equation}
where $r_{\min}=10^{-5}$ and $r_{\max}=0.1$.

In the run displayed in Table \ref{tab:cnnrresults}, the final control corresponds to $r = 0.0005$ with the observed score of $h_5 = 0.794$ (corresponding to the accuracy $72\%$) and the expected posterior score of $(\mu^{\alpha}_{3})^T\bphi(0.43) = 0.445$ (accuracy $61\%$). The final posterior distributions with observed scores and costs indicated by crosses are shown in Figure \ref{fig:posterior_r}. The final output of the algorithm is good although the posterior distribution on the left panel poorly fits the observed data. The shape of the score curve is so peaked that it cannot be represented accurately with a degree three polynomial (our choice of basis functions). However, \algName stops when the posterior mean score (not the observed score) exceeds the value function, so it is sufficient that the posterior distribution indicates the positioning of hyperparameters for which the true maximum score is attained.

\begin{table}[bt]
\begin{center}
\begin{tabular}{ |p{0.3cm}|p{0.9cm}|p{0.8cm}|p{1.3cm}|p{0.8cm}|p{2.3cm}|p{0.7cm}|}
\hline
\multicolumn{7}{|c|}{CNN example; \textit{r} (VI/map, N=2, $\epsilon = 0$)} \\
\hline
$n$ & $h_n$ & $t_n$ & $\sum_{i=1}^n t_i$  & $u_{n-1}$ & $(\mu^{\alpha}_{n})^T\bphi(u_{n-1})$ & $V$ \\
\hline
1&0.137&0.357&0.357&1&0.137&0.647\\
2&0.154&0.357&0.715&0.21&0.163&0.31\\
3&0.166&0.359&1.074&0.72&0.202&0.269\\
4&0.166&0.355&1.429&0&0.137&0.188\\
5&0.794&0.368&1.797&0.43&0.445&0.389\\
\hline
\end{tabular}
\caption{\algName (VI/Map, N=2, $\epsilon=0$) results on 1D CNN example with $r$ as control.}\label{tab:cnnrresults}
\end{center}
\end{table}

\subsubsection{2D \algName (VI/Map) with CNN }\label{sssec:otfcnnresults}

\begin{figure}[t]
\begin{subfigure}[b]{\textwidth}
    \centering
    \includegraphics[width=0.8\textwidth]{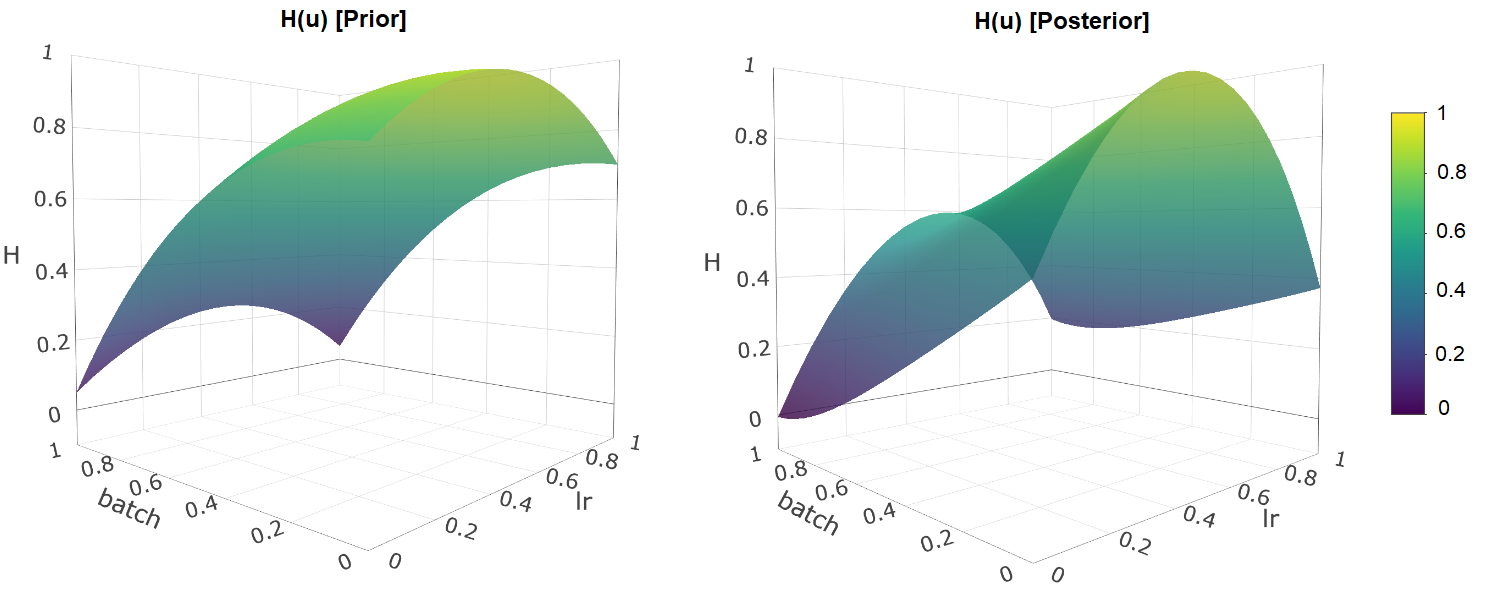}
    \caption{Surfaces corresponding to prior (left) and posterior (right) means of the score.}
    \label{fig:2dprior}
\end{subfigure}
\begin{subfigure}[b]{\textwidth}
    \centering
    \includegraphics[width=0.8\textwidth]{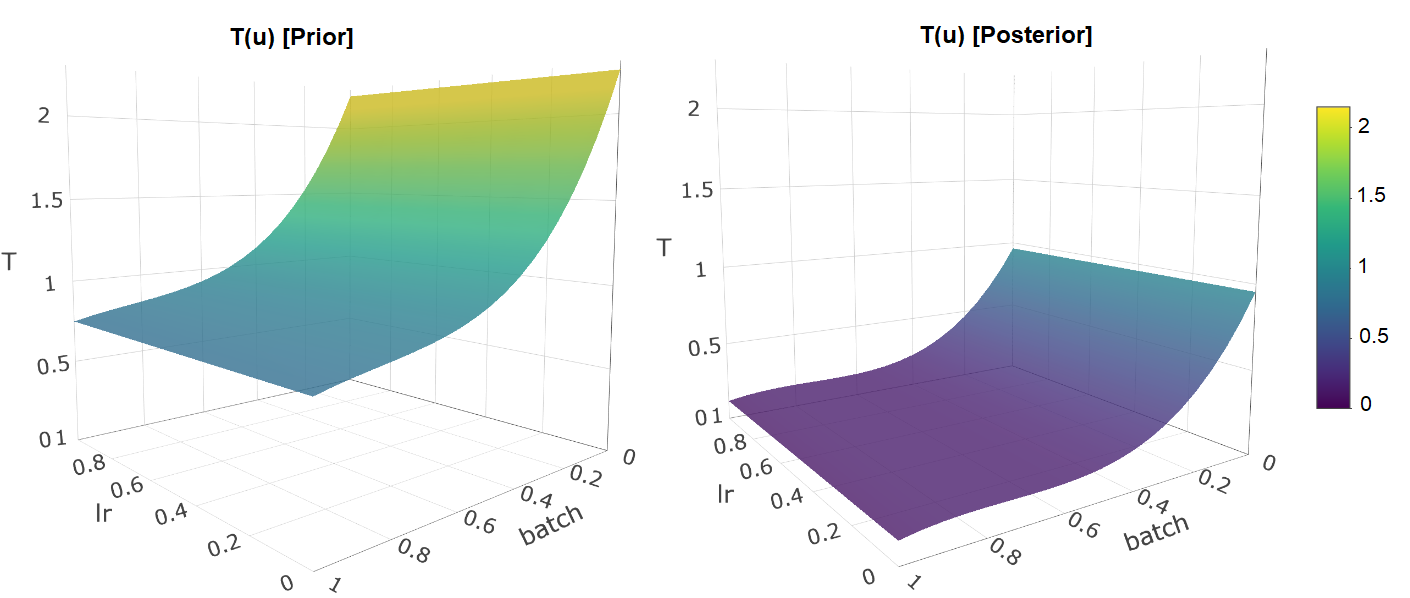}
    \caption{Surfaces corresponding to prior (left) and posterior (right) means of the cost.}
    \label{fig:2dposterior}
\end{subfigure}
\caption{Prior and posterior distributions for a 2D CNN run from Table \ref{tab:2dcnnmap}}\label{fig:2d}
\end{figure}

We initialise our 2D CNN experiment with a prior on score $H(u_1,u_2)$ being unimodal (left panel of Figure \ref{fig:2dprior}). For the cost function $T(u_1,u_2)$ we choose a pessimistic prior (left panel of Figure \ref{fig:2dposterior}), which reflects a conservative belief that the cost is exponentially decreasing in the batch size and indifferent to the choice of the learning rate.
As before, for each pair $(u_1,u_2)$ the neural network is trained $5$ times to mitigate instabilities with large \textit{batch}. We map the median raw accuracy $\hat{h}$ and the raw time $\hat{t}$ (in minutes) via \eqref{eqn:CNN_rescale}.
We map the control $u_1$ into the learning rate via \eqref{eqn:r_map} with $r_{\min}=10^{-5}$ and $r_{\max}=0.1$. The second control, $u_2$, is mapped into the batch size via \eqref{eqn:map_batch} with $\textit{batch}_{\min} = 10$ and $\textit{batch}_{\max} = 200$. Those mappings are identical as in the 1D cases studied above.
\begin{table}[tb]
\begin{center}
\begin{tabular}{ |p{0.2cm}|p{0.9cm}|p{0.8cm}|p{1.1cm}|p{0.8cm}|p{0.8cm}|p{3.5cm}|p{0.8cm}| }
\hline
\multicolumn{8}{|c|}{CNN example; \textit{r} and \textit{batch} (VI/Map, N=3, $\epsilon=0.02$)} \\
\hline
$n$ & $h_n$ & $t_n$ & $\sum_{i=1}^n t_i$  & $u_{1,n-1}$ & $u_{2,n-1}$ & $(\mu^{\alpha}_{n})^T\bphi(u_{1,n-1},u_{2,n-1})$ & $V$ \\
\hline
1&0.639&0.166&0.166&0.47&0.36&0.64&0.936\\
2&0.16&0.18&0.346&1&0.44&0.181&0.9\\
3&0.148&0.186&0.531&0&0.44&0.186&0.807\\
4&0.849&0.198&0.729&0.46&0.51&0.666&0.86\\
5&0.577&0.109&0.838&0.5&0.89&0.559&0.847\\
6&0.373&0.154&0.992&0.24&0.71&0.464&0.831\\
7&0.502&0.155&1.147&0.71&0.71&0.533&0.765\\
8&0.711&0.135&1.282&0.63&1&0.592&0.679\\
9&0.953&0.439&1.721&0.5&0.15&0.837&0.776\\
\hline
\end{tabular}
\caption{An example run of \algName on CNN example with the 2D control space. $u_{1,n-1}$ corresponds to the learning rate $r$ and $u_{2, n-1}$ corresponds to the batch size \textit{batch}.}\label{tab:2dcnnmap}
\end{center}
\end{table}

Results for an example run of 2D \algName (VI/map) can be inspected in Table \ref{tab:2dcnnmap}. The final control of $(u_1,u_2)=(0.5,0.15)$ corresponds to $r=0.001$ and the batch size of $38$, with the expected posterior accuracy of $74\%$ and the realised accuracy of $78\%$ (compared to the accuracy $84\%$ obtained in \cite{cruz2014automatic} which used full dataset in computations). The surfaces showing the mean of the posterior distributions for $H$ and $T$ are shown on right panels of Figure \ref{fig:2d}. 

Table \ref{tab:mapsummary} presents a summary of average performance of 40 runs on this 2D neural network problem. During 40 VI/Map runs, we recorded a small standard deviation $0.06$ for the final control $u_{1,n}$ (\textit{r}) suggesting that the quality of the model is sensitive to the choice of the learning rate. On the other hand, the standard deviation for the final control $u_{2,n}$ (\textit{batch}) is much bigger suggesting weaker sensitivity to this parameter. It should, however, be remarked that the training of Neural Networks tends to be less stable when batch size is large relative to the total sample size, so good models could have been obtained `by chance'.

\begin{table}[tb]
\begin{center}
\begin{tabular}{ |p{0.8cm}|p{1.45cm}|p{1.2cm}|p{0.8cm}|p{0.8cm}|p{1.6cm}|p{3.55cm}|p{0.8cm}|}
\hline
\multicolumn{8}{|c|}{Summary of (VI/Map, N=3, $\epsilon=0.02$) \algName runs for the 2D CNN problem} \\
\hline
$Run$ & $Final$ $h_n$ & $\sum_{i=1}^n t_i$  & $u_{1,n-1}$ & $u_{2,n-1}$ & $Tot.$ $Iter.$& $(\mu^{\alpha}_{n})^T\bphi(u_{1,n-1},u_{2,n-1})$ & $V$ \\
\hline
A-40 & 0.782  &2.021         & 0.46 & 0.295 & 8.45& 0.784    &0.748\\
     & (0.15) & (0.55) & (0.06)   & (0.23)  & (1.8)&     \\
\hline
\end{tabular}
\caption{Summary results of an avarage (A-40) of $40$ runs (standard deviation is given in brackets) for the 2D CNN problem. Average controls $(u_{1,n-1}, u_{2, n-1}) = (0.46, 0.295)$ correspond to the learning rate $0.0007$ and the batch size $66$.}\label{tab:mapsummary}
\end{center}
\end{table}

\subsubsection{2D \algName (OTF) with CNN }\label{sssec:mapcnnresults}

The OTF version of the \algName with a 2D CNN problem is significantly affected by the relatively high noise present in observations of the score and we observed its tendency to stop more unpredictably. We present summaries of three OTF runs (Table \ref{tab:otfsummary}). Care should be taken when comparing these results to the ones obtained by \algName(VI/Map) because, due to computational demands, only $N=2$ version of OTF is ran, while for the VI/Map version we use $N=3$ map with $1$ OTF step at the end (as always for VI/Map runs, see Algorithm \ref{alg:lambda}), essentially making it $N=4$ for the comparison with OTF. It can, therefore, be concluded that depth $N=2$ is insufficient to obtain reliable results in 2D case and one needs to use depth $N=4$. Recall that $N=2$ version of OTF performed well in the synthetic 1D example. 
\begin{table}[tb]
\begin{center}
\begin{tabular}{ |p{0.8cm}|p{1.5cm}|p{1cm}|p{0.8cm}|p{0.8cm}|p{1.8cm}|p{3.55cm}|p{0.8cm}|}
\hline
\multicolumn{8}{|c|}{Summary of OTF N=2 \algName runs for the 2D CNN problem} \\
\hline
$Run$ & $Final$ $h_n$ & $\sum_{i=1}^n t_i$  & $u_{1,n-1}$ & $u_{2,n-1}$ & $Tot.$ $Iter.$& $(\mu^{\alpha}_{n})^T\bphi(u_{1,n-1},u_{2,n-1})$ & $V$ \\
\hline
1 & 0.814&3.729&0.38&0&10&0.879&0.718\\
2 & 0.771&0.996&0.48&0.14&4&0.693&0.692\\
3 & 0.592&2.844&0.38&0.28&8&0.579&0.546\\
\hline
\end{tabular}
\caption{Final results of 3 OTF N=2 runs with 2D CNN problem.}\label{tab:otfsummary}
\end{center}
\end{table}

\subsection{\algName validation on popular datasets}\label{sssec:validationdatasets}

We provide summary statistics for the evaluation of \algName's performance on 3 popular datasets employing 3 different machine learning models. In tables, we provide in brackets: the accuracy for column \textit{Final $h_n$}, the total running time for column $\sum_{i=1}^n t_i$, and the corresponding value of the hyperparameter for column $u$ ($u_1$, $u_2$). Recall that details of parameter mappings and implementation are collected in Appendix \ref{app:D}.
\bigskip

\noindent
\textbf{HIGGS Data Set} \cite{baldi2014searching}. This is a classification problem to distinguish between a signal process which produces Higgs bosons and a background process which does not. The data set contains 11 million rows. Training over the whole dataset is time consuming, so we aim to find an optimal \textit{sample size} to be used for training and testing with Random Forest (\verb|ranger|) given the trade-off between the score and the computational cost. Results are collected in Table \ref{tab:validation_higgs}. The final accuracy of $72\%$ puts the model within the TOP 10 performing models according to \textit{openml.org/t/146606} statistics for Higgs data set.
\begin{table}[!h]
\centering
\begin{tabular}{ |p{0.9cm}|p{1.6cm}|p{2.3cm}|p{3.7cm}|p{1.35cm}|p{2cm}|p{0.6cm}|}
\hline
\multicolumn{7}{|c|}{1D (VI/Map, N=3, $\epsilon=0.02$) \algName with Random forest on Higgs data} \\
\hline
\textit{Data} & \textit{Final} $h_n$ & $\sum_{i=1}^n t_i$  & $u$ & \textit{Tot. Iter.} & $(\mu^{\alpha}_{n+1})^T\bphi(u)$ & $V$ \\
\hline
Higgs & 0.6 (72\%) & 0.96 (4.8 \text{min}) & 0.13 (\textit{samp size} = 2.4\%) & 7 & 0.6 & 0.58\\
\hline
\end{tabular}
\caption{Random forest on `Higgs' data set using \textit{sample size} as a control.}\label{tab:validation_higgs}
\end{table}
\bigskip

\noindent
\textbf{Intel image data}. This is image data of Natural Scenes around the world obtained from Kaggle (\url{https://www.kaggle.com/puneet6060/intel-image-classification}). This data was part of the challenge created by (\url{https://datahack.analyticsvidhya.com}) and was sourced by Intel company. The data contains around 25K images of size $150\times150$ distributed under 6 categories. The task is to create a classification model. For this example we use categorical cross entropy loss \citep{cox1958} ($\mathcal{X}$ - population, $x$ - sample, $\mathcal{C}$ - set of all classes, $c$ - a class)
\begin{equation}
-\frac{1}{N} \sum_{x \in \mathcal{X}} \sum_{c \in \mathcal{C}} \ind{x \in c} \log {\mathbb{P}(x \in c)}
\end{equation} 
to construct a score (instead of the accuracy). The above entropy loss allows us to monitor overfitting of the CNN while accuracy itself does not provide us with this information. In this example we choose the number of \textit{epochs} in CNN as the control. We obtain results for two architectures. In one architecture (Resnet50) we use weights of the widely popular resnet50 model, while in other one (Plain) we use architecture that does not rely on pretrained models. Details of architectures of Plain and Resnet50 models are provided in Appendix \ref{app:intelarch}. Results are displayed in Table \ref{tab:validation_intel_noresnet}. We note that our Resnet50 model performance produces a result that is $3\%$ short of the challenge winner, who obtained a model with $96\%$ accuracy via a different architecture and ensamble averaging.

\begin{table}[tb]
\begin{center}
\begin{tabular}{ |p{1.4cm}|p{1.8cm}|p{2.3cm}|p{2.8cm}|p{1.35cm}|p{2cm}|p{0.6cm}|}
\hline
\multicolumn{7}{|c|}{1D (VI/Map, N=3, $\epsilon=0.02$) \algName with CNN on Intel image data} \\
\hline
\textit{Init.} & \textit{Final} $h_n$ & $\sum_{i=1}^n t_i$  & $u$ & \textit{Tot. Iter.} & $(\mu^{\alpha}_{n+1})^T\bphi(u)$ & $V$ \\\hline
Plain    & 0.97 (86\%) & 0.75 (37 \text{min}) & 0.54 (\textit{epochs} = 19) & 9 & 0.99 & 0.96\\
Resnet50 & 0.93 (93\%) & 0.75 (39 \text{min}) & 0.48 (\textit{epochs} = 17) & 9 & 0.96 & 0.93\\
\hline
\end{tabular}
\caption{CNN on Intel data set using \textit{epochs} as a control using two architectures. In brackets in the first column we show the test set accuracy while the score corresponds to the cross-entropy loss.}\label{tab:validation_intel_noresnet}
\end{center}
\end{table}

We note that a more efficient optimisation of a parameter such as \textit{epochs} is available. Having a fixed holdout (test) dataset we are able to evaluate the performance of the model on a test set after each training epoch, without introducing any data leakages to the model. Therefore, if a cost of the model evaluation on the holdout set is relatively small, compared to the overall training cost per epoch, observations of the score ($h$) and the cost ($t$) are available at each epoch. Optimiser's task would therefore simplify to optimally stopping the training when the optimum trade-off between the score and the cost is achieved. However, the flow of information would be different than in our model and a new modelling approach would be needed. Furthermore, in a Bayesian setting like ours, it is required that the error of each observation of the score and the cost is independent of the previous ones. This is clearly violated in an optimisation task with incremental improvements of the model. Nevertheless, this is an interesting direction of research which we retain for the future. Note also similarities to the learning curve methods used by, e.g., \cite{Chandra2017}.
\bigskip

\noindent
\textbf{Credit card fraud data} (\url{https://www.kaggle.com/mlg-ulb/creditcardfraud}). This dataset contains credit card transactions from September 2013. It is highly unbalanced with only $492$ frauds out of $284,807$ transactions. We build an autoencoder (AE) \citep{kramer1991nonlinear} which is trained on non-fraudulent cases. We attempt to capture fraudulent cases by observing the reconstruction error. 

We parametrise the network with two parameters: \textit{code} which specifies the size of the smallest layer and \textit{scale} is the number of layers in the encoder (the decoder is symmetric). The largest layer of the encoder has the size $\lceil 20 \times \text{scale} \rceil$ and there are $\lceil \textit{scale} \rceil$ layers in the encoder spaced equally between the largest layer and the coding layer ($\lceil x \rceil$ denotes the ceiling function, i.e., the smallest integer dominating $x$). So an AE described by code equal to $1$ and scale equal to $1.9$ has hidden layers of the following sizes: $38, 19, 1, 19, 38$. We consider \textit{code} and \textit{scale} between $1$ and $10$.

For 1D example, the optimisation parameter is the \textit{scale} with $\textit{code}=2$. For 2D example, we optimise \textit{scale} and \textit{code}.

Given that fraudulent transactions constitute only $0.172\%$ of all transactions, we do not focus on a pure accuracy of the model, but on a good balance between \emph{precision} and \emph{recall} which is weighted together by the widely used generalized F-score \citep{dice1945measures}:
\begin{equation}
F_{\beta} = (1+\beta^2) \frac{\text{\textit{precision}} \times \text{\textit{recall}}}{(\beta^2 \times \text{\textit{precision}}) + \text{\textit{recall}}}.
\end{equation}
We pick $\beta=6$ as we are interested in capturing most of the fraudulent transactions (we observe that the \emph{recall} is at least $80\%$). In practice $\beta$ is chosen somewhat arbitrarily; however it also can be used as an optimisation parameter \citep{klinger2009user}.

Results for 1D example are displayed in Table \ref{tab:validation_ae}.
\begin{table}[tb]
\begin{center}
\begin{tabular}{ |p{1cm}|p{2.6cm}|p{2.5cm}|p{2.7cm}|p{1.6cm}|p{2cm}|p{0.7cm}|}
\hline
\multicolumn{7}{|c|}{1D (VI/Map, N=3, $\epsilon=0.02$) \algName with AE on Credit Card Fraud data} \\
\hline
$Data$ & $Final$ $h_n$ & $\sum_{i=1}^n t_i$  & $\hat{u}$ & $Tot.$ $Iter.$& $(\mu^{\alpha}_{n+1})^T\bphi(u)$ & $V$ \\
\hline
Fraud & 0.99 ($F_{\beta}=0.74$) & 1.89 (42.9 \text{min}) & 0.01 (\textit{scale}=1.09) & 4 & 0.93 & 0.87\\
\hline
\end{tabular}
\caption{AE on `Credit Card Fraud' dataset using \textit{scale} as a control.}\label{tab:validation_ae}
\end{center}
\end{table}
We observe that in this particular 1D example the optimal control of $0.01$ corresponds to a shallow AE architecture, suggesting that good performance can be achieved with a small in size AE architecture.

The results for the 2D version of this example are shown in Table \ref{tab:validation_ae2d}. \algName decides to stop at the most shallow architecture as well, surprisingly revealing that even choosing the size of the \textit{code} equal to $1$ leads to similarly good results. 

\begin{table}[tb]
\begin{center}
\begin{tabular}{ |p{1cm}|p{1.9cm}|p{1.4cm}|p{1.5cm}|p{1.5cm}|p{1.6cm}|p{2cm}|p{0.8cm}|}
\hline
\multicolumn{8}{|c|}{2D (VI/Map, N=3, $\epsilon=0.02$) \algName with AE on Credit Card Fraud data} \\
\hline
$Data$ & $Final$ $h_n$ & $\sum_{i=1}^n t_i$  & $\hat{u_1}$ & $\hat{u_2}$ & $Tot.$ $Iter.$& $(\mu^{\alpha}_{n+1})^T\bphi(u)$ & $V$ \\
\hline
Fraud & 0.97 & 2.868 & 0  & 0 & 8 & 0.922& 0.897\\
&($F_{\beta}=0.73$) & (62 \text{min}) & (\textit{scale}=1) & (\textit{code}=1) & & &\\
\hline
\end{tabular}
\caption{AE on `Credit Card Fraud' dataset using \textit{scale} and \textit{code} as a controls.}\label{tab:validation_ae2d}
\end{center}
\end{table}

\noindent
\textbf{Conclusions.} Our validation demonstrated that \algName performs well in hyperparameter selection problems for diverse datasets, machine learning techniques and objectives. VI/Map version of the algorithm is preferred to OTF. The value function map can be precomputed in advance and used for a range of problems.

\appendix
\appendixpage
\setcounter{equation}{0}

\section{Dynamics of the observable Markov process} \label{app:dynamics}
We derive joint dynamics of the filters $(\cA_n)$ and $(\cB_n)$, and observable processes $(h_n)$ and $(t_n)$. Random variables $\balpha$ and $\bbeta$ are independent and observations are independent given the control $(u_n)$. Therefore, the dynamics of their filters $(\cA_n)$ and $(\cB_n)$ are independent given the control $(u_n)$. 

Since the process $(\cA_n)$ takes values in the space of Gaussian distributions, its dynamics can be fully described by the dynamics of the corresponding stochastic process representing means $(\mu^\alpha_n)$ and covariances $\Sigma^\alpha_n$. The dynamics of $(\mu^\alpha_n, \Sigma^\alpha_n)$ is Markovian, time-homogeneous and depends on the control $(u_n)$: for any Borel measurable sets $D_1 \subset \er^J$ and $D_2 \subset \er^{J \times J}$ (recall that $J$ is the number of basis functions)
\begin{equation}\label{eqn:6}
\begin{aligned}
&\prob\big( \mu^\alpha_{n+1} \in D_1, \Sigma^\alpha_{n+1} \in D_2 \big| \mu^\alpha_n = \mu, \Sigma^\alpha_n = \Sigma, u_n = u)\\[5pt]
&=
\int_{\er} \ind{\mu^\alpha_1(\hat h,u) \in D_1} \ind{\Sigma^\alpha_1(u) \in D_2} \exp\bigg(-\frac12 \frac{\big(\hat h - \mu^T \bphi(u) \big)^2}{\bphi(u)^T \Sigma \bphi(u) + \sigma^2_H(u)}\bigg) d\hat h,
\end{aligned}
\end{equation}
where $\mu^\alpha_1(\hat h, u)$ and $\Sigma^\alpha_1 (u) $ are obtained from formulas \eqref{eqn:filter_a} with $u_0=u$, $h_1 = \hat h$ and $\Sigma^\alpha_0=\Sigma$, $\mu^\alpha_0 = \mu$:
\begin{equation*}
 \begin{aligned}
(\Sigma^\alpha_{1}(u))^{-1} &= (\Sigma)^{-1} + \frac{1}{\sigma_H(u)^2} \bphi(u) \bphi(u)^T,\\
\mu^\alpha_{1}(\hat h,u) &= \mu + \frac{1}{\sigma_H(u)^2} \Sigma^\alpha_{1} \bphi(u) \big( \hat h - \bphi(u)^T \mu^\alpha_n \big).
 \end{aligned}
\end{equation*}
The above formula for $\Sigma^\alpha_{1}(u)$ is convenient for proofs. However, the inversion of matrices is numerically undesirable, so in our implementation we use an equivalent formula \citep[Eq. (4.7.5)]{Bensoussan2018}
\[
\Sigma^\alpha_{1}(u) = \Sigma - \frac{\Sigma \bphi(u) \bphi(u)^T \Sigma}{\bphi(u)^T \Sigma \bphi(u) + \sigma^2_H(u)}.
\]

It follows from \eqref{eqn:6} that the simulation of the Markov process $(\mu^\alpha_n, \Sigma^\alpha_n)$ may be performed with an intermediate step of generating $h_{n+1}$. Given $(\mu^\alpha_n, \Sigma^\alpha_n)$ and the control $u_n$, we first draw $h_{n+1}$ from the normal distribution with the mean $(\mu^\alpha_n)^T \bphi(u_n)$ and the variance $\bphi(u_n)^T \Sigma^\alpha_n \bphi(u_n) + \sigma^2_H(u_n)$. Then we compute $\mu^\alpha_{n+1} = \mu^\alpha_1(h_{n+1}, u_n)$ and $\Sigma^\alpha_{n+1} = \Sigma^\alpha_1(u_n)$ inserting $\mu = \mu^\alpha_n$ and $\Sigma = \Sigma^\alpha_n$.

The process $(\cB_n)$ is described as above by the mean $(\mu^\beta_n)$ and the covariance matrix $(\Sigma^\beta_n)$. We consider the dynamics of $(\mu^\beta_n, \Sigma^\beta_n, t_n)$ as $t_n$ shows up explicitly in the objective function \eqref{eqn:4}. For Borel measurable sets $D_1 \subset \er^K$,  $D_2 \subset \er^{K \times K}$ and $D_3 \in \er$
\begin{equation}\label{eqn:6a}
\begin{aligned}
&\prob\big( \mu^\beta_{n+1} \in D_1, \Sigma^\beta_{n+1} \in D_2,  t_{n+1} \in D_3 \big|  \mu^\beta_n = \mu, \Sigma^\beta_n = \Sigma, u_n = u, t_n = t)\\[5pt]
&=
\int_{\er} \ind{\mu^\beta_1(\hat t, u) \in D_1} \ind{\Sigma^\beta_1(u) \in D_2} \ind{\hat t \in D_3^{\phantom{\beta}}} \exp\bigg(-\frac12 \frac{\big(\hat t - \mu^T \bpsi(u) \big)^2}{\bpsi(u)^T \Sigma \bpsi(u) + \sigma^2_T(u)}\bigg) d\hat t,
\end{aligned}
\end{equation}
where $\mu^\beta_1(\hat t, u)$ and $\Sigma^\beta_1(u)$ is given by \eqref{eqn:filter_b} with $u_0=u$, $t_{1}=\hat t$ and $\Sigma^\beta_0=\Sigma$, $\mu^\beta_0 = \mu$. Notice that the transition function above does not depend on the value $t_n=t$; it is included for notational coherence. Observe also that the simulation of the process $(\mu^\beta_n, \Sigma^\beta_n, t_n)$ can be performed in an analogous way as for $(\mu_n^\alpha, \Sigma_n^\alpha)$.

\section{Auxiliary estimates}\label{app:aux}

Without further mention, we assume that all matrices are symmetric and positive definite. We write $\lambda_{\max} (\Sigma)$ for the largest eigenvalue of the matrix $\Sigma$ (or equivalently, the operator norm of this martix). Analogously, $\lambda_{\min}(\Sigma)$ denotes the smallest eigenvalue. Recall also the notation for the Kalman filter \eqref{eqn:filter_a} and \eqref{eqn:filter_b}. 

\begin{lemma}\label{lem:est1}
For any $u_0 \in \U$, $\lambda_{\max} (\Sigma^\alpha_1) \le \lambda_{\max} (\Sigma^\alpha_0)$ and $\lambda_{\max} (\Sigma^\beta_1) \le \lambda_{\max} (\Sigma^\beta_0)$, $\prob$-a.s.
\end{lemma}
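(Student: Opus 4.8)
The plan is to use the Kalman recursion for the covariance, specifically the numerically convenient form
\[
\Sigma^\alpha_1(u_0) = \Sigma^\alpha_0 - \frac{\Sigma^\alpha_0 \bphi(u_0) \bphi(u_0)^T \Sigma^\alpha_0}{\bphi(u_0)^T \Sigma^\alpha_0 \bphi(u_0) + \sigma^2_H(u_0)},
\]
recorded in Appendix \ref{app:dynamics}. The key observation is that the subtracted matrix $\frac{\Sigma^\alpha_0 \bphi(u_0) \bphi(u_0)^T \Sigma^\alpha_0}{\bphi(u_0)^T \Sigma^\alpha_0 \bphi(u_0) + \sigma^2_H(u_0)}$ is positive semidefinite: the denominator is strictly positive by Assumption \ref{ass:sigma_u} together with positive semidefiniteness of $\Sigma^\alpha_0$, and the numerator is of the form $w w^T$ with $w = \Sigma^\alpha_0 \bphi(u_0)$. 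Hence $\Sigma^\alpha_1(u_0) \preceq \Sigma^\alpha_0$ in the Loewner order.

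From the Loewner ordering $\Sigma^\alpha_1 \preceq \Sigma^\alpha_0$ the eigenvalue inequality is immediate: for symmetric matrices $A \preceq B$ one has $\lambda_{\max}(A) \le \lambda_{\max}(B)$, since $\lambda_{\max}(A) = \sup_{\|x\|=1} x^T A x \le \sup_{\|x\|=1} x^T B x = \lambda_{\max}(B)$. Applying this with $A = \Sigma^\alpha_1$ and $B = \Sigma^\alpha_0$ gives $\lambda_{\max}(\Sigma^\alpha_1) \le \lambda_{\max}(\Sigma^\alpha_0)$. The argument for $\bbeta$ is identical, using the analogous update formula for $\Sigma^\beta_1(u_0)$ with $\bpsi$ and $\sigma_T$ in place of $\bphi$ and $\sigma_H$, and invoking Assumption \ref{ass:sigma_u} to ensure $\sigma_T(u_0)^2 > 0$. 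Since the bound holds for every realisation of $h_1$ and $t_1$ (indeed $\Sigma^\alpha_1$ and $\Sigma^\beta_1$ do not depend on the observed values at all), it holds $\prob$-a.s.

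There is essentially no obstacle here; the only point requiring a line of care is the strict positivity of the denominator, which is where Assumption \ref{ass:sigma_u} enters (so that the update formula is well defined and the subtracted term is a genuine positive-semidefinite rank-one matrix rather than something degenerate). One might also note for completeness that $\Sigma^\alpha_1$ remains symmetric positive definite, consistent with the standing convention in Appendix \ref{app:aux}, which follows from the inverse-form update $(\Sigma^\alpha_1)^{-1} = (\Sigma^\alpha_0)^{-1} + \sigma_H(u_0)^{-2}\bphi(u_0)\bphi(u_0)^T$ being a sum of a positive definite and a positive semidefinite matrix.
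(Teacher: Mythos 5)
Your argument is correct, and it reaches the conclusion by a slightly different algebraic route than the paper. The paper's proof works with the precision matrices: from the recursion $(\Sigma^\alpha_1)^{-1} = (\Sigma^\alpha_0)^{-1} + \sigma_H(u_0)^{-2}\bphi(u_0)\bphi(u_0)^T$ it notes that a positive semidefinite rank-one term is \emph{added} to the inverse, so $\lambda_{\min}\big((\Sigma^\alpha_1)^{-1}\big) \ge \lambda_{\min}\big((\Sigma^\alpha_0)^{-1}\big)$, and then uses $\lambda_{\max}(\Sigma) = 1/\lambda_{\min}(\Sigma^{-1})$. You instead work with the covariance directly via the Sherman--Morrison form of the same update, observing that a positive semidefinite rank-one term is \emph{subtracted}, which gives the Loewner ordering $\Sigma^\alpha_1 \preceq \Sigma^\alpha_0$ and hence the eigenvalue bound. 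The two identities are equivalent, so this is essentially the same one-line idea expressed on the covariance rather than the precision side; if anything, your version is marginally stronger in that it records the full Loewner ordering of the covariances (monotonicity of \emph{every} eigenvalue and of all quadratic forms), not just of the top eigenvalue, and it avoids invoking invertibility of $\Sigma^\alpha_0$ at the point where the comparison is made. Your closing observations --- that $\Sigma^\alpha_1$ does not depend on the realised observation $h_1$ (so the bound is deterministic given $u_0$, hence trivially $\prob$-a.s.), and that positive definiteness is preserved --- are both correct and consistent with the paper's conventions.
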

\begin{proof}
Follows directly from \eqref{eqn:filter_a} and observation that $\lambda_{\min} \big((\Sigma^\alpha_1)^{-1}\big) \ge \lambda_{\min} \big((\Sigma^\alpha_0)^{-1}\big)$.
\end{proof}

\begin{lemma} \label{lem:A1}
Let a function $\varphi:\mathbb{R}^J \times \mathbb{R}^{J\times J} \mapsto \mathbb{R}$ satisfy
\[
|\varphi(\mu, \Sigma)| \le a + b \|\mu\| + c \lambda_{\max}^p (\Sigma)
\]
for some $a, b, c, p > 0$. Under Assumptions \ref{ass:bounded} and \ref{ass:sigma_u}, we have
\[
\ee \Big[ \big| \varphi(\mu^\alpha_1, \Sigma^\alpha_1) \big| \Big| \mu^\alpha_0 = \mu, \Sigma^\alpha_0=\Sigma, u_0 = u \Big] \le \tilde a + \tilde b \|\mu\| + \tilde c \lambda_{\max}^q (\Sigma)
\]
for some $\tilde a, \tilde b, \tilde c > 0$ and $q = \max(3/2, p)$.
\end{lemma}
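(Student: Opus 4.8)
The plan is to split the bound on $|\varphi|$ into its three pieces. The covariance term is handled at once by Lemma~\ref{lem:est1}: since $\lambda_{\max}(\Sigma^\alpha_1)\le\lambda_{\max}(\Sigma^\alpha_0)=\lambda_{\max}(\Sigma)$ holds $\prob$-a.s., we get $\ee\big[\lambda_{\max}^p(\Sigma^\alpha_1)\,\big|\,\mu^\alpha_0=\mu,\Sigma^\alpha_0=\Sigma,u_0=u\big]\le\lambda_{\max}^p(\Sigma)$. The constant term contributes $a$. So the real work is in bounding $\ee\big[\|\mu^\alpha_1\|\,\big|\,\cdots\big]$, and this is where the exponent $3/2$ enters.

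For that term I would start from the Kalman update \eqref{eqn:filter_a} with $n=0$,
\[
\mu^\alpha_1 = \mu + \frac{1}{\sigma_H(u)^2}\,\Sigma^\alpha_1\,\bphi(u)\,\big(h_1-\bphi(u)^T\mu\big),
\]
and bound the increment pathwise. Using $\|\Sigma^\alpha_1\bphi(u)\|\le\lambda_{\max}(\Sigma^\alpha_1)\|\bphi(u)\|\le\lambda_{\max}(\Sigma)\|\bphi(u)\|$ (Lemma~\ref{lem:est1} again), the boundedness of $\bphi$ on $\U$ (Assumption~\ref{ass:bounded}) and $\inf_{\U}\sigma_H>0$ (Assumption~\ref{ass:sigma_u}), one obtains
\[
\|\mu^\alpha_1-\mu\|\le C\,\lambda_{\max}(\Sigma)\,\big|h_1-\bphi(u)^T\mu\big|,
\]
where $C$ depends only on $\sup_{\U}\|\bphi\|$ and $\inf_{\U}\sigma_H$.

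Next I would take the conditional expectation. Given $\mathcal{F}_0$ and $u_0=u$, the quantity $h_1-\bphi(u)^T\mu$ is centred Gaussian with variance $s^2=\bphi(u)^T\Sigma\bphi(u)+\sigma_H^2(u)$, so its conditional mean absolute value is $\sqrt{2/\pi}\,s$. Bounding $s^2\le\lambda_{\max}(\Sigma)\|\bphi(u)\|^2+\sigma_H^2(u)$, using $\sqrt{x+y}\le\sqrt x+\sqrt y$ and Assumptions~\ref{ass:bounded}--\ref{ass:sigma_u}, gives $s\le C'\big(1+\sqrt{\lambda_{\max}(\Sigma)}\big)$. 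Combining,
\[
\ee\big[\|\mu^\alpha_1\|\,\big|\,\cdots\big]\le\|\mu\|+CC'\sqrt{2/\pi}\,\big(\lambda_{\max}(\Sigma)+\lambda_{\max}(\Sigma)^{3/2}\big).
\]
Finally, for $x\ge 0$ and $0<r\le q$ one has $x^r\le 1+x^q$; applying this with $q=\max(3/2,p)$ to the powers $1$, $3/2$ and $p$ appearing above and collecting constants yields $\ee\big[|\varphi(\mu^\alpha_1,\Sigma^\alpha_1)|\,\big|\,\cdots\big]\le\tilde a+\tilde b\|\mu\|+\tilde c\,\lambda_{\max}^q(\Sigma)$, enlarging any vanishing constant so that $\tilde a,\tilde b,\tilde c>0$.

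The only genuine subtlety — and the source of the exponent $q=3/2$ rather than merely $\max(1,p)$ — is that the increment carries a factor $\lambda_{\max}(\Sigma)$ from $\|\Sigma^\alpha_1\bphi(u)\|$ multiplied by a factor $\sqrt{\lambda_{\max}(\Sigma)}$ from the conditional standard deviation of $h_1$; once that product is identified, the rest is routine bookkeeping with the two standing assumptions, Lemma~\ref{lem:est1}, and the Gaussian first-moment formula.
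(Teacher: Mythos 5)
Your proposal is correct and follows essentially the same route as the paper's proof: both handle the covariance term via Lemma \ref{lem:est1}, bound the Kalman increment pathwise by $\lambda_{\max}(\Sigma)\,\lvert h_1-\bphi(u)^T\mu\rvert$ up to constants, and extract the $\lambda_{\max}^{3/2}(\Sigma)$ term from the conditional standard deviation of $h_1$. The only cosmetic differences are that you use the exact Gaussian first absolute moment $\sqrt{2/\pi}\,s$ where the paper uses Jensen's inequality, and you are slightly more explicit about absorbing the lower powers of $\lambda_{\max}(\Sigma)$ into the $\lambda_{\max}^q(\Sigma)$ term at the end.
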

\begin{proof}
Let $A = \sup_{u \in \U} \max_j \phi_j(u)$. For any $u \in \U$, skipping in the notation the conditioning on $\mu^\alpha_0 = \mu, \Sigma^\alpha_0=\Sigma, u_0 = u$, we have
\begin{equation}\label{eqn:A1}
\ee [ \varphi(\mu^\alpha_1, \Sigma^\alpha_1) ] 
\le a + b \ee [ \| \mu^\alpha_1 \|] + c \ee [ \lambda^p_{\max} (\Sigma^\alpha_1)]
\le a + b \ee [ \| \mu^\alpha_1 \|] + c \lambda^p_{\max} (\Sigma),
\end{equation}
where the second inequality follows from Lemma \ref{lem:est1}. It remains to estimate the norm of $\mu^\alpha_1$. From \eqref{eqn:6} and the triangle inequality for the Euclidean norm, we have
\begin{align*}
\ee [ \| \mu^\alpha_1 \|] 
&\le 
\| \mu \| + \frac{1}{\sigma_H(u)} \ee_{h \sim N\big(\mu^T \bphi(u),\, \bphi(u)\Sigma\bphi(u) + \sigma^2_H(u)\big)} \Big[ \big\| \Sigma^\alpha_1 \bphi(u) (h - \mu^T \bphi(u)) \big\| \Big],
\end{align*}
where $\Sigma^\alpha_1$ depends on $h_1=h$ and $u_0=u$ via formula \eqref{eqn:filter_a}. Applying $\big\| \Sigma^\alpha_1 \bphi(u) (h - \mu^T \bphi(u)) \big\| \le \lambda_{\max}(\Sigma) A |h - \mu^T \bphi(u)|$, we obtain
\begin{align*}
\ee [ \| \mu^\alpha_1 \|] 
&\le 
\| \mu \| + \frac{\lambda_{\max}(\Sigma) A}{\sigma_H(u)} \ee_{h \sim N\big(\mu^T \bphi(u),\, \bphi(u)\Sigma\bphi(u) + \sigma^2_H(u)\big)} [ |h - \mu^T \bphi(u)| ]\\
&=
\| \mu \| + \frac{\lambda_{\max}(\Sigma) A}{\sigma_H(u)} \ee_{h \sim N\big(0,\, \bphi(u)\Sigma\bphi(u) + \sigma^2_H(u)\big)} [ |h| ]\\
&\le
\| \mu \| + \frac{\lambda_{\max}(\Sigma) A}{\sigma_H(u)} \sqrt{\ee_{h \sim N\big(0,\, \bphi(u)\Sigma\bphi(u) + \sigma^2_H(u)\big)} [ h^2 ]}\\
&=
\| \mu \| + \frac{\lambda_{\max}(\Sigma) A}{\sigma_H(u)} \sqrt{\bphi(u)\Sigma\bphi(u) + \sigma^2_H(u)}\\
&\le
\| \mu \| + \frac{\lambda_{\max}(\Sigma) A}{\sigma_H(u)} \sqrt{A^2 \lambda_{\max}(\Sigma) + \sigma_H^2(u)}\\
&\le
\| \mu \| + \frac{\lambda_{\max}(\Sigma) A}{\sigma_H(u)} \big(A \lambda_{\max}^{1/2} (\Sigma) + \sigma_H(u)\big)\\
&=
\| \mu \| + \lambda^{3/2}_{\max}(\Sigma)  \frac{A^2}{\sigma_H(u)} + \lambda_{\max}(\Sigma) A\\
&=
\| \mu \| + b_1 \lambda^{3/2}_{\max}(\Sigma) + a_1,
\end{align*}
for $a_1 = \lambda_{\max}(\Sigma) A$ and $b_1 = A^2 / \sigma_H(u)$. Above the second inequality is by Jensen's inequality, in the third inequality we use again Assumption \ref{ass:sigma_u}, and the fourth inequality is due to $\sqrt{x^2+y^2} \le x + y$ for $x,y \ge 0$. Inserting the above estimate into \eqref{eqn:A1} completes the proof.
\end{proof}

\begin{lemma} \label{lem:A2}
Let a function $\varphi:\mathbb{R}^J \times \mathbb{R}^{J\times J} \times \mathbb{R}^K \times \mathbb{R}^{K\times K}\mapsto \mathbb{R}$ satisfy
\[
|\varphi(\mu^\alpha, \Sigma^\alpha, \mu^\beta, \Sigma^\beta)| \le a + b (\|\mu^\alpha\| + \|\mu^\beta\|) + c \big( \lambda_{\max}^p (\Sigma^\alpha) + \lambda_{\max}^p (\Sigma^\beta) \big)
\]
for some $a, b, c, p > 0$. Under Assumptions \ref{ass:bounded} and \ref{ass:sigma_u}, we have 
\[
|\cT \varphi(\mu^\alpha, \Sigma^\alpha, \mu^\beta, \Sigma^\beta) | \le
\tilde a + \tilde b (\|\mu^\alpha\| + \|\mu^\beta\|) + \tilde c \big( \lambda_{\max}^q (\Sigma^\alpha) + \lambda_{\max}^q (\Sigma^\beta) \big),
\]
where $\tilde a, \tilde b, \tilde c > 0$ and $q = \max (p, 3/2)$.
\end{lemma}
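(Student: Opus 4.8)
The plan is to unwind the definition \eqref{eqn:reform} of $\cT\varphi$ and bound each of the two terms inside $\sup_{u\in\U}$ by an expression of the required form with constants that do not depend on $u$, so that taking the supremum at the end is harmless.

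For the cost term, I would first record the elementary inequality $|\Tcost(m,s^2)| \le \frac{s}{\sqrt{2\pi}} + |m|$, which is immediate from $0\le\Phi\le 1$ and $0\le e^{-m^2/(2s^2)}\le 1$ in the closed form of $\Tcost$. Taking $m=(\mu^\beta)^T\bpsi(u)$ and $s^2=\bpsi(u)^T\Sigma^\beta\bpsi(u)+\sigma_T^2(u)$, Assumption~\ref{ass:bounded} gives $\|\bpsi(u)\|\le B$ for a suitable constant $B$, hence $|m|\le B\|\mu^\beta\|$, while Assumption~\ref{ass:sigma_u} gives $\sigma_T(u)\le\bar\sigma$, hence $s\le B\,\lambda_{\max}^{1/2}(\Sigma^\beta)+\bar\sigma$ (using $\sqrt{x^2+y^2}\le x+y$ for $x,y\ge 0$). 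Thus $\gamma|\Tcost(\cdots)| \le c_1 + c_2\|\mu^\beta\| + c_3\,\lambda_{\max}^{1/2}(\Sigma^\beta)$ with $c_1,c_2,c_3$ independent of $u$.

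For the expectation term, I would use $|\max(a,b)|\le|a|+|b|$ to split it into $\ee[\,|(\mu^\alpha_1)^T\bphi(u)|\,]$ and $\ee[\,|\varphi(\mu^\alpha_1,\Sigma^\alpha_1,\mu^\beta_1,\Sigma^\beta_1)|\,]$. The first is $\le B\,\ee[\|\mu^\alpha_1\|]$, and the estimate carried out inside the proof of Lemma~\ref{lem:A1} yields $\ee[\|\mu^\alpha_1\|] \le \|\mu^\alpha\| + b_1\,\lambda_{\max}^{3/2}(\Sigma^\alpha) + a_1$ with $u$-uniform constants. For the second, I would bound $|\varphi|$ by the hypothesis as $a + b(\|\mu^\alpha\|+\|\mu^\beta\|) + c(\lambda_{\max}^p(\Sigma^\alpha)+\lambda_{\max}^p(\Sigma^\beta))$, take the conditional expectation term by term, apply Lemma~\ref{lem:est1} to replace $\lambda_{\max}(\Sigma^\alpha_1),\lambda_{\max}(\Sigma^\beta_1)$ by $\lambda_{\max}(\Sigma^\alpha),\lambda_{\max}(\Sigma^\beta)$, and apply the $\ee[\|\mu^\alpha_1\|]$ bound above together with its verbatim analogue for the $\beta$-filter to the linear terms. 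Since $\balpha$ and $\bbeta$, and hence their filters, are independent given the control, no cross terms appear. This produces a bound of the claimed shape with exponent $q=\max(p,3/2)$, the $3/2$ coming precisely from the $\lambda_{\max}^{3/2}$ term in the mean estimate.

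Collecting the two pieces and taking $\sup_{u\in\U}$ — legitimate because every constant produced above is independent of $u$ — gives the asserted inequality. There is no substantial obstacle: the only points needing a little care are the verbatim transfer of the internal $\ee[\|\mu^\alpha_1\|]$ estimate of Lemma~\ref{lem:A1} to the $\beta$-filter (identical, since \eqref{eqn:filter_a} and \eqref{eqn:filter_b} have the same structure and Assumptions~\ref{ass:bounded}--\ref{ass:sigma_u} cover $\bpsi$ and $\sigma_T$ as well) and tracking uniformity in $u$ throughout so that the final supremum does not spoil the bound.
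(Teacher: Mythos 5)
Your proposal is correct and follows essentially the same route as the paper's proof: bound the $\Tcost$ term via Assumptions (A) and (S), split the $\max$ by subadditivity, reuse the $\ee[\|\mu^\alpha_1\|]$ estimate from inside Lemma~\ref{lem:A1} (and its $\beta$-analogue) together with Lemma~\ref{lem:est1} for the covariance eigenvalues, and take the supremum over $u$ using uniformity of all constants. The only difference is that you spell out the ``analogous'' estimate for $\ee[|\varphi(\mu^\alpha_1,\Sigma^\alpha_1,\mu^\beta_1,\Sigma^\beta_1)|]$ in more detail than the paper does, which is a welcome clarification rather than a deviation.
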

\begin{proof}
We use the representation \eqref{eqn:reform} of the functional $\cT$. For any $u \in \U$ with $B = \sup_{u \in \U} \max_l \psi_l(u)$ and $C = \sup_{u \in \U} \sigma^2_T(u)$, direct calculations yield
\begin{align*}
\Tcost \big(\bpsi(u)^T \mu^\beta, \bpsi(u)^T \Sigma^\beta \bpsi(u) + \sigma^2_T(u)\big) 
&\le
\frac{1}{2\pi} \sqrt{\bpsi(u)^T \Sigma^\beta \bpsi(u) + \sigma^2_T(u)} + \bpsi(u)^T \mu^\beta \\
&\le
\frac{1}{2\pi} \sqrt{\bpsi(u)^T \Sigma^\beta \bpsi(u)} + \frac{1}{2\pi} \sigma_T(u) + \| \bpsi(u)\| \|\mu^\beta\|\\
&\le
\frac{1}{2\pi} \| \bpsi(u) \| \lambda_{\max}^{1/2}(\Sigma^\beta)+  \frac{1}{2\pi} C + \| \bpsi(u)\| \|\mu^\beta\|\\
&\le
\frac{1}{2\pi} B \lambda_{\max}^{1/2}(\Sigma^\beta)+  \frac{1}{2\pi} C + B \|\mu^\beta\|.
\end{align*}

Since
\[
\Big| \max \big( (\mu^\alpha_1)^T \bphi(u), \varphi(\mu^\alpha_1, \Sigma^\alpha_1, \mu^\beta_1, \Sigma^\beta_1) \big) \Big|
\le
\big| (\mu^\alpha_1)^T \bphi(u)\big| + \big|\varphi(\mu^\alpha_1, \Sigma^\alpha_1, \mu^\beta_1, \Sigma^\beta_1) \big|,
\]
we have, supressing in the notation the conditioning on $\mu^\alpha_0 = \mu^\alpha$, $\Sigma^\alpha_0=\Sigma^\alpha$, $\mu^\beta_0 = \mu^\beta$, $\Sigma^\beta_0=\Sigma^\beta$, $u_0 = u$,
\begin{align*}
&E \big[ \max \big( (\mu^\alpha_1)^T \bphi(u), \varphi(\mu^\alpha_1, \Sigma^\alpha_1, \mu^\beta_1, \Sigma^\beta_1) \big) \big]\\
&\le
E \big[ \big| (\mu^\alpha_1)^T \bphi(u)\big| \big] + E \big[ \big|\varphi(\mu^\alpha_1, \Sigma^\alpha_1, \mu^\beta_1, \Sigma^\beta_1) \big| \big]\\
&\le
\bigg[A \big( \| \mu^\alpha \| + b_1 \lambda^{3/2}_{\max}(\Sigma^\alpha) + a_1 \big) \bigg]+ 
\bigg[a_2 + b_2 \big(\|\mu^\alpha\| + \|\mu^\beta\| \big) + c_1 \big(\lambda_{\max}^q (\Sigma^\alpha) + \lambda_{\max}^q (\Sigma^\beta)\big)\bigg],
\end{align*}
where the estimate of the first term was derived in the proof of Lemma \ref{lem:A1} with $A = \sup_{u \in \U} \max_j \phi_j(u)$, and the estimate of the second terms follows analogously as in Lemma \ref{lem:A1}. 
Inserting the above bound and the bound for $\Tcost$ into \eqref{eqn:reform} completes the proof.

\end{proof}

\section{Proofs}\label{app:proofs}

\begin{proof}[Proof of Theorem \ref{thm:separation}]
We have
\[
\ee[H(u_{\tau-1})] = \sum_{n=1}^\infty \ee[ \ind{\tau=n} H(u_{n-1}) ].
\]
Conditioning $n$-th term of the above sum on $\ef_n$ yields
\begin{align*}
\ee \Big[ \ind{\tau = n} H(u_{n-1}) \Big| \ef_n \Big] 
&= 
\ee \Big[ \ind{\tau = n} \balpha^T \bphi(u_{n-1}) \Big| \ef_n \Big]
=
\ind{\tau = n} \ee \Big[\balpha^T \bphi(u_{n-1}) \Big| \ef_n \Big]\\
&=
\ind{\tau = n} \ee \Big[\balpha^T \Big| \ef_n \Big] \bphi(u_{n-1})
=
\ind{\tau = n} (\mu^\alpha_n)^T \bphi(u_{n-1}),
\end{align*}
where we used that $\{\tau = n\} \in \ef_n$ and $u_{n-1}$ is $\ef_{n-1}$-measurable. Hence, using the tower property of conditional expectation the objective function \eqref{eqn:3} can be rewritten as
\begin{align*}
\sum_{n=1}^\infty \ee[ \ind{\tau=n} H(u_{n-1}) ] - \ee \Big[ \sum_{n=1}^\tau (t_n)^+ \Big]
&=
\sum_{n=1}^\infty \ee\big[ \ee[\ind{\tau=n} H(u_{n-1})|\ef_n]\big] - \ee \Big[ \sum_{n=1}^\tau (t_n)^+ \Big]\\
&=
\sum_{n=1}^\infty \ee\big[ \ind{\tau = n} (\mu^\alpha_n)^T \bphi(u_{n-1})\big] - \ee \Big[ \sum_{n=1}^\tau (t_n)^+ \Big]\\
&=
\ee \Big[ (\mu^\alpha_\tau)^T \bphi(u_{\tau-1}) - \sum_{n=1}^\tau (t_n)^+ \Big],
\end{align*}
which completes the proof.
\end{proof}

\begin{lemma}\label{lem:upper_bound}
The value function defined in \eqref{eqn:7} is bounded from above by
\[
v^* (\mu^\alpha, \Sigma^\alpha) := \ee[\|\balpha\|] \sup_{u \in \U} \|\bphi(u)\|, \qquad \text{where $\balpha \sim N(\mu^\alpha, \Sigma^\alpha)$},
\]
and bounded from below by
\[
v_*(\mu^\alpha, \Sigma^\alpha, \mu^\beta, \Sigma^\beta) := - v^*(\mu^\alpha, \Sigma^\alpha) - \gamma \ee[\|\bbeta\|] \sup_{u \in \U} \|\bpsi(u)\|, \qquad \text{where $\bbeta \sim N(\mu^\beta, \Sigma^\beta)$},
\]
and $\|\cdot\|$ denotes the Euclidean norm. Furthermore, functions $v^*, v_*$ are finite-valued.
\end{lemma}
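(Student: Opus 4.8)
The plan is to establish the two one-sided bounds separately, after first disposing of finiteness. Finiteness is immediate: since $\balpha\sim N(\mu^\alpha,\Sigma^\alpha)$ has finite second moment, $\ee[\|\balpha\|]\le(\|\mu^\alpha\|^2+\mathrm{tr}\,\Sigma^\alpha)^{1/2}<\infty$, while Assumption \ref{ass:bounded} gives $\sup_{u\in\U}\|\bphi(u)\|<\infty$; hence $v^*$ is finite, and the same argument applied to $\bbeta$ and $\bpsi$ shows $v_*$ is finite.

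\textbf{Upper bound.} Because $(t_n)^+\ge 0$, every admissible pair $\big((u_n),\tau\big)$ satisfies $\ee\big[(\mu^\alpha_\tau)^T\bphi(u_{\tau-1})-\gamma\sum_{n=1}^\tau(t_n)^+\big]\le\ee\big[(\mu^\alpha_\tau)^T\bphi(u_{\tau-1})\big]$. I would then reduce the right-hand side exactly as in the proof of Theorem \ref{thm:separation}: decomposing on $\{\tau=n\}\in\ef_n$, using that $u_{n-1}$ is $\ef_{n-1}$-measurable and $\mu^\alpha_n=\ee[\balpha\mid\ef_n]$, one gets $\ee[\ind{\tau=n}(\mu^\alpha_n)^T\bphi(u_{n-1})]=\ee[\ind{\tau=n}\balpha^T\bphi(u_{n-1})]$, and summing over $n$ yields $\ee[(\mu^\alpha_\tau)^T\bphi(u_{\tau-1})]=\ee[\balpha^T\bphi(u_{\tau-1})]\le\ee[\|\balpha\|]\sup_{u\in\U}\|\bphi(u)\|=v^*$ by Cauchy--Schwarz. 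Taking the supremum over $\big((u_n),\tau\big)$ gives $V\le v^*$.

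\textbf{Lower bound.} Since $V$ is a supremum over strategies, it dominates the value of the strategy that applies an arbitrary fixed $\bar u\in\U$ at epoch $0$ and then stops, i.e.\ $\tau\equiv 1$, so $V\ge\ee\big[(\mu^\alpha_1)^T\bphi(\bar u)-\gamma(t_1)^+\big]$. Here $\ee[(\mu^\alpha_1)^T\bphi(\bar u)]=(\mu^\alpha)^T\bphi(\bar u)\ge-\|\mu^\alpha\|\sup_u\|\bphi(u)\|\ge-v^*$ (using $\ee[\mu^\alpha_1]=\mu^\alpha$ and Cauchy--Schwarz), and $\ee[(t_1)^+]$ is estimated by conditioning on $\bbeta$: then $t_1$ is Gaussian, so $\ee[(t_1)^+\mid\bbeta]=\Tcost\big(\bbeta^T\bpsi(\bar u),\sigma_T^2(\bar u)\big)$, and the elementary bound $\Tcost(m,s^2)\le m^++s/\sqrt{2\pi}$ together with Assumptions \ref{ass:bounded} and \ref{ass:sigma_u} gives $\ee[(t_1)^+]\le\ee[\|\bbeta\|]\sup_u\|\bpsi(u)\|+\tfrac{1}{\sqrt{2\pi}}\sup_u\sigma_T(u)$, i.e.\ the bound $\ee[\|\bbeta\|]\sup_u\|\bpsi(u)\|$ up to a finite $\sigma_T$-dependent constant that is harmless for the conclusion. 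Combining the two estimates yields $V\ge v_*$.

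\textbf{Expected main obstacle.} Nothing in the argument is deep; the only point requiring genuine care is the handling of the random stopping time $\tau$ in the upper bound — commuting the expectation with the sum over $\{\tau=n\}$ and verifying absolute summability (e.g.\ by conditional Jensen, $\sum_n\ee[\ind{\tau=n}\|\mu^\alpha_n\|]\le\sum_n\ee[\ind{\tau=n}\|\balpha\|]\le\ee[\|\balpha\|]<\infty$), together with the observation that on $\{\tau=\infty\}$ the cost term is $+\infty$, so such strategies do not affect the supremum.
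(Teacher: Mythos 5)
Your proposal is correct and follows essentially the same route as the paper: the upper bound via the identity $\ee[(\mu^\alpha_\tau)^T\bphi(u_{\tau-1})]=\ee[\balpha^T\bphi(u_{\tau-1})]$ inherited from the proof of Theorem \ref{thm:separation}, Cauchy--Schwarz, and nonnegativity of the cost term; the lower bound by evaluating the strategy $\tau=1$ with a fixed control. Your treatment of $\ee[(t_1)^+]$ via $\Tcost$ is in fact more careful than the paper's, which silently writes $\ee[(\bbeta^T\bpsi(u_0))^+]$ in place of $\ee[(t_1)^+]$ (by Jensen this is an inequality in the wrong direction), and the extra $\gamma\sup_{u}\sigma_T(u)/\sqrt{2\pi}$ you pick up means your lower bound matches $v_*$ only up to a finite constant --- which, as you observe, is harmless since the lemma is used solely to conclude $|V|<\infty$.
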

\begin{proof}
Recall from the proof of Theorem \ref{thm:separation} that the first term of \eqref{eqn:7} can be equivalently written as 
\[
\ee[(\mu^\alpha_\tau)^T \bphi(u_{\tau-1})] = \ee[\balpha^T \bphi(u_{\tau-1})],
\]
where $\balpha \sim N(\mu^\alpha, \Sigma^\alpha)$. By assumption \ref{ass:bounded}, the vector $\bphi(u_{\tau-1})$ has uniformly bounded entries, so $C := \sup_{u \in \U} \|\bphi(u)\| < \infty$. By Cauchy-Schwarz inequality, 
\[
|\balpha^T \bphi(u_{\tau-1})| \le \|\bphi(u_{\tau-1})\| \|\balpha\| \le C \|\balpha\|, 
\]
so
\[
\ee[\balpha^T \bphi(u_{\tau-1})] \le \ee[|\balpha^T \bphi(u_{\tau-1})|] \le C \ee[\|\balpha\|] = v^*(\mu^\alpha, \Sigma^\alpha).
\]
Recalling that $\balpha$ is multivariate Gaussian, we have $\ee[\|\balpha\|] < \infty$ and $v^*(\mu^\alpha, \Sigma^\alpha) < \infty$. Since the second term in \eqref{eqn:7} is non-positive, $V$ is bounded from above by $v^*(\mu^\alpha, \Sigma^\alpha) < \infty$. 

Taking $\tau = 1$ and any $u_0 \in \U$ gives a finite lower bound on $V$:
\begin{align*}
V(\mu^\alpha, \Sigma^\alpha, \mu^\beta, \Sigma^\beta) 
&\ge 
-\ee[|\balpha^T \bphi(u_{0})|] - \gamma \ee[(\bbeta^T \bpsi(u_0))^+] \\
&\ge
-v^*(\mu^\alpha, \Sigma^\alpha) - \gamma \ee[|\bbeta^T \bpsi(u)|]\\
&\ge
-v^*(\mu^\alpha, \Sigma^\alpha) - \gamma \ee[\|\bbeta\|]\, \sup_{u \in \U} |\bpsi(u)|
= v_*.
\end{align*}

\end{proof}
\medskip

\begin{proof}[Derivation of formula \eqref{eqn:reform}]
Recall that the distribution of $t_1$ is normal with the mean $(\mu^\beta)^T \bpsi(u)$ and the variance $\bpsi(u)^T \Sigma^\beta \bpsi(u) + \sigma^2_T(u)$. The result follows from the following direct calculation: for $Y \sim N(m, s^2)$
\[
\ee[Y^+] = \ee [Y \ind{Y \ge 0}] = \frac{s}{\sqrt{2\pi}} e^{-\frac{m^2}{2s^2}} + m \Phi \Big(\frac{m}{s}\Big) = \Tcost (m, s^2).
\]
\end{proof}

\begin{proof}[Proof of Theorem \ref{thm:2}]
In the proof we use the representation \eqref{eqn:reform} of operator $\cT$. From the proof of Lemma \ref{lem:A2}, it follows that
\begin{equation}\label{eqn:9}
\big |V_1( \mu^\alpha,  \Sigma^\alpha,  \mu^\beta,  \Sigma^\beta) \big|
\le a + b (\| \mu^\alpha\| + \| \mu^\beta\|) + c \big(\lambda^{3/2}_{\max} ( \Sigma^\alpha) + \lambda^{3/2}_{\max} ( \Sigma^\beta) \big)
\end{equation}
for some $a, b, c \ge 0$. Hence $\cT V_1$ is well defined. By induction and Lemma \ref{lem:A2}, an estimate of the form \eqref{eqn:9} holds for $V_N$, so $\cT V_N$ is well defined. The bound also ensures that infinite values are not admitted, which completes the proof of Assertion 1. Assertion 2 follows from a standard theory of stochastic control \citep{Bertsekas1978}.

The fact that $V_N$ are non-decreasing in $N$ is a direct consequence of the set of controls in \eqref{eqn:7} growing with $N$. Analogously, $V_N \le V$. This implies that the limit $V_\infty = \lim_{N \to \infty} V_N$ exists and $V_\infty \le V$. By Assumption \ref{ass:bounded}, the random variable $H^* = \sup_{u \in \U} |H(u)|$ is integrable ($\ee[H^*]<\infty$), so by the dominated convergence theorem we have
\[
\lim_{N \to \infty} \ee \big[H(u_{(\tau \wedge N)-1})\big] = \ee \big[H(u_{\tau-1})\big] < \infty
\]
for any admissible strategy $(\tau, (u_n))$. By the monotone convergence theorem
\[
\lim_{N \to \infty} \ee \Big[\sum_{n=1}^{\tau \wedge N} (t_n)^+ \Big] = \ee\Big[\sum_{n=1}^{\tau} (t_n)^+ \Big].
\]
The above convergence results are applied to obtain the second equality below:
\begin{align*}
V(\mu^\alpha, \Sigma^\alpha, \mu^\beta, \Sigma^\beta)
&=
\sup_{\tau \ge 1, (u_n)}
\ee \Big[H(u_{(\tau)-1}) - \gamma \sum_{n=1}^{\tau} (t_n)^+ \Big]\\
&=
\sup_{\tau \ge 1, (u_n)} \lim_{N \to \infty}
\ee \Big[H(u_{(\tau \wedge N)-1}) - \gamma \sum_{n=1}^{\tau \wedge N} (t_n)^+ \Big]\\
&=
\sup_{\tau \ge 1, (u_n)} \liminf_{N \to \infty}
\ee \Big[H(u_{(\tau \wedge N)-1}) - \gamma \sum_{n=1}^{\tau \wedge N} (t_n)^+ \Big]\\
&\le
 \liminf_{N \to \infty} V_N (\mu^\alpha, \Sigma^\alpha, \mu^\beta, \Sigma^\beta)
= V_\infty (\mu^\alpha, \Sigma^\alpha, \mu^\beta, \Sigma^\beta),
\end{align*}
where we use that $\ee \Big[H(u_{(\tau \wedge N)-1}) - \gamma \sum_{n=1}^{\tau \wedge N} (t_n)^+ \Big] \le V_N (\mu^\alpha, \Sigma^\alpha, \mu^\beta, \Sigma^\beta)$ for any choice of admissible strategy $(\tau, (u_n))$. Hence, $V \le V_\infty$. In view of the opposite inequality obtained above, we have $V = V_\infty$. Lemma \ref{lem:upper_bound} guarantees that $|V| < \infty$. This completes the proof of Assertion 3.

Using monotonicity of $V_N$ in $N$ and $V_{N+1} = \cT V_N$, we have (omitting parameters of functions for clarity of presentation)
\begingroup\allowdisplaybreaks
\begin{align*}
V &= \lim_{N \to \infty} V_{N+1} = \sup_{N} V_{N+1} = \sup_{N} \cT V_N\\
&=
\sup_N \sup_{u \in \U}  \Big[ - \gamma \Tcost\Big(\bpsi(u)^T \mu^\beta, \bpsi(u)^T \Sigma^\beta \bpsi(u) + \sigma^2_T(u)\Big) \\
&\hspace{150pt}+ \ee \big[\max \big( ( \mu^\alpha_1)^T \bphi(u), V_N( \mu^\alpha_1,  \Sigma^\alpha_1,  \mu^\beta_1,  \Sigma^\beta_1) \big)\big]\Big]\\
&=
\sup_{u \in \U} \sup_N \Big[ - \gamma \Tcost\Big(\bpsi(u)^T \mu^\beta, \bpsi(u)^T \Sigma^\beta \bpsi(u) + \sigma^2_T(u)\Big) \\
&\hspace{150pt}+ \ee \big[\max \big( ( \mu^\alpha_1)^T \bphi(u), V_N( \mu^\alpha_1,  \Sigma^\alpha_1,  \mu^\beta_1,  \Sigma^\beta_1) \big)\big]\Big]\\
&=\sup_{u \in \U} \Big[ - \gamma \Tcost(\cdot )
+ \sup_{N} \ee \big[\max \big( ( \mu^\alpha_1)^T \bphi(u), V_N( \mu^\alpha_1,  \Sigma^\alpha_1,  \mu^\beta_1,  \Sigma^\beta_1) \big)\big]\Big]\\
&\mathop{=}_{\text{monotonicity }}
\sup_{u \in \U} \Big[ - \gamma \Tcost(\cdot )
+ \lim_{N \to \infty} \ee \big[\max \big( ( \mu^\alpha_1)^T \bphi(u), V_N( \mu^\alpha_1,  \Sigma^\alpha_1,  \mu^\beta_1,  \Sigma^\beta_1) \big)\big]\Big]\\
&=
\sup_{u \in \U} \Big[ - \gamma \Tcost(\cdot )
+ \ee \big[\max \big( ( \mu^\alpha_1)^T \bphi(u), V( \mu^\alpha_1,  \Sigma^\alpha_1,  \mu^\beta_1,  \Sigma^\beta_1) \big)\big]\Big],
\end{align*}%
\endgroup
where the last equality follows from pointwise convergence of $V_N$ to $V$ and the monotone convergence theorem whose assumptions are satisfied as
\[
\max \big( ( \mu^\alpha_1)^T \bphi(u), V_N( \mu^\alpha_1,  \Sigma^\alpha_1,  \mu^\beta_1,  \Sigma^\beta_1) \big)
\ge V_1 ( \mu^\alpha_1,  \Sigma^\alpha_1,  \mu^\beta_1,  \Sigma^\beta_1)
\]
and $E[|V_1( \mu^\alpha_1,  \Sigma^\alpha_1,  \mu^\beta_1,  \Sigma^\beta_1) |] < \infty$ by \eqref{eqn:9} and Lemma \ref{lem:A2}.
\end{proof}

\section{AutoBML settings for validation examples}\label{app:D}

Throughout all examples we set $\gamma = 0.16$. We also note that we use two value function maps for 1D examples (one with $\sigma_H = 0.05$ and $\sigma_T = 0.1$, and another one with $\sigma_H = 0.15$ and $\sigma_T = 0.1$) and one value function map for 2D examples with $\sigma_H = 0.15$ and $\sigma_T = 0.1$. Regressions in value function maps (see line 7 in Algorithm \ref{alg:lambda} and line 6 in Algorithm \ref{alg:vi}) were performed via Random Forest Regression.

\subsection{1D example with Synthetic data (Section \ref{subsec:syntetic})}

Basis functions:
\begin{itemize}
\item $H$: $\{1,u-0.5,(u-0.5)^2,(u-0.5)^3\}$
\item $T$: $\{1,u-0.5,(u-0.5)^2,(u-0.5)^3\}$
\end{itemize}

{\raggedleft Observation error:}
\begin{itemize}
\item $H$: $\sigma_H = 0.05$
\item $T$: $\sigma_T = 0.1$
\end{itemize}

{\raggedleft Method of fitting $\tilde\Lambda(\cdot)$:}
\begin{itemize}
\item \verb|smooth.spline| function in R, that fits B-splines according to \citep{cleveland1992}.
\end{itemize}

{\raggedleft Prior state $x_0$:}
\begin{itemize}
\item $\mu_\alpha$ = $(0.4, 0.1, -0.2, 0.1)$
\item $\mu_\beta$ = $(1, 1, 2, 2)$
\item $\Sigma_\alpha$ = $I_4$, ($I_n$ is $n\times n$ identity matrix)
\item $\Sigma_\beta$ = $(0.64, 4, 4, 4)\times I_4$
\end{itemize}

{\raggedleft Score (test accuracy) and Cost (running time in seconds) mapping:}
\begin{itemize}
\item $H$: $h = 2(\hat h - 0.5)$
\item $T$: $t = \hat t/0.6$
\end{itemize}

{\raggedleft Control $u \in [0,1]$ to hyperparameter $\ntrees$ mapping:}
\begin{itemize}
\item $\ntrees$: $\lfloor 99u_1 + 1\rfloor$
\end{itemize}

\subsection{1D example with Breast Cancer data (\textit{batch}) (Subsection \ref{sssec:mapcnnbatch})}

Basis functions:
\begin{itemize}
\item $H$: $\{1,u-0.5,(u-0.5)^2,(u-0.5)^3\}$
\item $T$: $\{1,u-0.5,(u-0.5)^2,(u-0.5)^3\}$
\end{itemize}

{\raggedleft Observation error:}
\begin{itemize}
\item $H$: $\sigma_H = 0.15$
\item $T$: $\sigma_T = 0.1$
\end{itemize}

{\raggedleft Method of fitting $\tilde\Lambda(\cdot)$:}
\begin{itemize}
\item \verb|smooth.spline| function in R, that fits B-splines according to \citep{cleveland1992}.
\end{itemize}

{\raggedleft Prior state $x_0$:}
\begin{itemize}
\item $\mu_\alpha$ = $(0.4,-0.1,-0.2,-0.1)$
\item $\mu_\beta$ = $(1,-1,2,-2)$
\item $\Sigma_\alpha$ = $I_4$, ($I_n$ is $n\times n$ identity matrix)
\item $\Sigma_\beta$ = $(0.64, 4, 4, 4)\times I_4$
\end{itemize}

{\raggedleft Score (test accuracy) and Cost (running time in seconds) mapping:}
\begin{itemize}
\item $H$: $h = (\hat h - 0.45)/0.35$
\item $T$: $t = \hat t/1.5$
\end{itemize}

{\raggedleft Control $u \in [0,1]$ to hyperparameter (\textit{batch}) mapping:}
\begin{itemize}
\item \textit{batch}: $\lfloor(200-10) u + 10\rfloor$
\end{itemize}

\subsection{1D example with Breast Cancer data (\textit{r})}

Basis functions:
\begin{itemize}
\item $H$: $\{1,u-0.5,(u-0.5)^2,(u-0.5)^3\}$
\item $T$: $\{1,u-0.5,(u-0.5)^2,(u-0.5)^3\}$
\end{itemize}

{\raggedleft Observation error:}
\begin{itemize}
\item $H$: $\sigma_H = 0.15$
\item $T$: $\sigma_T = 0.1$
\end{itemize}

{\raggedleft Method of fitting $\tilde\Lambda(\cdot)$:}
\begin{itemize}
\item \verb|smooth.spline| function in R, that fits B-splines according to \citep{cleveland1992}.
\end{itemize}

{\raggedleft Prior state $x_0$:}
\begin{itemize}
\item $\mu_\alpha$ = $(0.4,-0.1,-0.8,-0.1)$
\item $\mu_\beta$ = $(0.48,0, 0, 0)$
\item $\Sigma_\alpha$ = $I_4$, ($I_n$ is $n\times n$ identity matrix)
\item $\Sigma_\beta$ = $(0.8^2,0,0,0)\times I_4$
\end{itemize}

{\raggedleft Score (test accuracy) and Cost (running time in seconds) mapping:}
\begin{itemize}
\item $H$: $h = (\hat h - 0.45)/0.35$
\item $T$: $t = \hat t/1.5$
\end{itemize}

{\raggedleft Control $u \in [0,1]$ to hyperparameter (\textit{r}) mapping:}
\begin{itemize}
\item $r$: $\exp[(\log(0.1)-\log(0.00001))u + \log(0.00001)]$
\end{itemize}

\subsection{2D example with Breast Cancer data}

Basis functions:
\begin{itemize}
\item $H$: $\{(u_1 - 0.5)^k\}_{k=0}^4$, $\{(u_2 - 0.5)^k\}_{k=1}^4$, $(u_1 - 0.5) \cdot (u_2 - 0.5)$
\item $T$: $\{(u_1 - 0.5)^k\}_{k=0}^4$, $\{(u_2 - 0.5)^k\}_{k=1}^4$, $(u_1 - 0.5) \cdot (u_2 - 0.5)$
\end{itemize}

{\raggedleft Observation error:}
\begin{itemize}
\item $H$: $\sigma_H = 0.15$
\item $T$: $\sigma_T = 0.1$
\end{itemize}

{\raggedleft Method of fitting $\tilde\Lambda(\cdot)$:}
\begin{itemize}
\item \verb|Tps| function in R from package \verb|fields|, that fits thin-plate splines, see \citep{green1993} for more detailed analysis on nonparametric regression.
\end{itemize}

{\raggedleft Prior state $x_0$:}
\begin{itemize}
\item $\mu_\alpha$ = $(0.4, 0.3, 0.4, 0,0, 0.2, -0.4, 0, 0,0)$
\item $\mu_\beta$ = $(3, 0, 0, 0,0, -3.5, 0.45, 0, 0.5,0)$
\item $H(u_1,u_2) = 0.4 + 0.3\cdot u_1 +  0.2\cdot u_2 - 0.4\cdot u_1^2 - 0.4\cdot u_2^2$
\item $T(u_1,u_2) = 3 - 3.5\cdot u_2 + 0.45 \cdot u_2^2 + 0.5\cdot u_2^4$
\item $\Sigma_\alpha$ =  $0.6\times I_{10}$
\item Create $\Sigma_{\beta}$ = $0.6 \times I_{10}$, set $\sigma_{\beta ,9,9} = 0.001$
\end{itemize}

{\raggedleft Score (test accuracy) and Cost (minutes) scaling:}
\begin{itemize}
\item $H$: $h_{\min}=0.45, h_{\max}=0.8$
\item $T$: $t_{\min}=0, t_{\max}=7.5$
\end{itemize}

{\raggedleft Control $u \in [0,1]\times[0,1]$ to hyperparameters (\textit{learning rate} and \textit{batch size}) mappings:}
\begin{itemize}
\item $r$: $\exp[(\log(0.1)-\log(0.00001))u_1 + \log(0.00001)]$
\item \textit{batch}: $\lfloor(200-10) u_2 + 10\rfloor$
\end{itemize}

\subsection{1D example with Higgs data}

Basis functions:
\begin{itemize}
\item $H$: $\{1,u-0.5,(u-0.5)^2,(u-0.5)^3\}$
\item $T$: $\{1,u-0.5,(u-0.5)^2,(u-0.5)^3\}$
\end{itemize}

{\raggedleft Observation error:}
\begin{itemize}
\item $H$: $\sigma_H = 0.05$
\item $T$: $\sigma_T = 0.1$
\end{itemize}

{\raggedleft Method of fitting $\tilde\Lambda(\cdot)$:}
\begin{itemize}
\item \verb|smooth.spline| function in R, that fits B-splines.
\end{itemize}

{\raggedleft Prior state $x_0$:}
\begin{itemize}
\item $\mu_\alpha$ = $(0.5, 0.3, -0.2, 0)$
\item $\mu_\beta$ = $(1.4, 4, 6, 6)$
\item $\Sigma_\alpha$ = $I_4$, ($I_n$ is $n\times n$ identity matrix)
\item $\Sigma_\beta$ = $I_4$
\end{itemize}

{\raggedleft Score (test accuracy) and Cost (minutes) scaling:}
\begin{itemize}
\item $H$: $h_{\min}=0.6, h_{\max}=0.8$
\item $T$: $t_{\min}=0.035, t_{\max}=5$
\end{itemize}

{\raggedleft Control $u \in [0,1]$ to hyperparameter (\textit{sample size}) mapping:}
\begin{itemize}
\item \textit{sample size}: $\exp[\log_{15}(0.00001) - u_1 \log_{15}(0.00001)]$
\end{itemize}

\subsection{1D example with Intel Images data}

Basis function:
\begin{itemize}
\item $H$: $\{1,u-0.5,(u-0.5)^2,(u-0.5)^3\}$
\item $T$: $\{1,u-0.5,(u-0.5)^2,(u-0.5)^3\}$
\end{itemize}

{\raggedleft Observation error:}
\begin{itemize}
\item $H$: $\sigma_H = 0.15$
\item $T$: $\sigma_T = 0.1$
\end{itemize}

{\raggedleft Method of fitting $\tilde\Lambda(\cdot)$:}
\begin{itemize}
\item \verb|smooth.spline| function in R, that fits B-splines.
\end{itemize}

{\raggedleft Prior state $x_0$:}
\begin{itemize}
\item $\mu_\alpha$ = $(0.9, 0.4, -2, 0)$
\item $\mu_\beta$ = $(0.8, 1.5, 0, 0)$
\item $\Sigma_\alpha$ = $I_4$, ($I_n$ is $n\times n$ identity matrix)
\item $\Sigma_\beta$ = $I_4$
\end{itemize}

{\raggedleft Score (test accuracy) and Cost (minutes) scaling:}
\begin{itemize}
\item $H$: $h_{\min}=0, h_{\max}=1$
\item $T$: $t_{\min}=0.1, t_{\max}=12$
\end{itemize}

{\raggedleft Control $u \in [0,1]$ to hyperparameter (\textit{epoch}) mapping:}
\begin{itemize}
\item \textit{epoch}: $ \lfloor34u_1 + 1\rfloor$
\end{itemize}

\subsection{1D example with Credit Card Fraud data}

Basis functions:
\begin{itemize}
\item $H$: $\{1,u-0.5,(u-0.5)^2,(u-0.5)^3\}$
\item $T$: $\{1,u-0.5,(u-0.5)^2,(u-0.5)^3\}$
\end{itemize}

{\raggedleft Observation error:}
\begin{itemize}
\item $H$: $\sigma_H = 0.15$
\item $T$: $\sigma_T = 0.1$
\end{itemize}

{\raggedleft Method of fitting $\tilde\Lambda(\cdot)$:}
\begin{itemize}
\item \verb|smooth.spline| function in R, that fits B-splines.
\end{itemize}

{\raggedleft Prior state $x_0$:}
\begin{itemize}
\item $\mu_\alpha$ = $(0.9, 0, -2, 0)$
\item $\mu_\beta$ = $(1.1, 0.6, 0, 0)$
\item $\Sigma_\alpha$ = $I_4$, ($I_n$ is $n\times n$ identity matrix)
\item $\Sigma_\beta$ = $I_4$
\end{itemize}

{\raggedleft Score (generalized F-score, $\beta=6$) and Cost (minutes) scaling:}
\begin{itemize}
\item $H$: $h_{\min}=0, h_{\max}=0.75$
\item $T$: $t_{\min}=5, t_{\max}=25$
\end{itemize}

{\raggedleft Control $u \in [0,1]$ to hyperparameter (\textit{scale}) mapping:}
\begin{itemize}
\item \textit{scale}: $ 9u_1 + 1$
\end{itemize}

\subsection{2D example with Credit Card Fraud data}

Basis functions:
\begin{itemize}
\item $H$: $\{(u_1 - 0.5)^k\}_{k=0}^4$, $\{(u_2 - 0.5)^k\}_{k=1}^4$, $(u_1 - 0.5) \cdot (u_2 - 0.5)$
\item $T$: $\{(u_1 - 0.5)^k\}_{k=0}^4$, $\{(u_2 - 0.5)^k\}_{k=1}^4$, $(u_1 - 0.5) \cdot (u_2 - 0.5)$
\end{itemize}

{\raggedleft Observation error:}
\begin{itemize}
\item $H$: $\sigma_H = 0.15$
\item $T$: $\sigma_T = 0.1$
\end{itemize}

{\raggedleft Method of fitting $\tilde\Lambda(\cdot)$:}
\begin{itemize}
\item \verb|Tps| function in R from package \verb|fields|, that fits thin-plate splines.
\end{itemize}

{\raggedleft Prior state $x_0$:}
\begin{itemize}
\item $\mu_\alpha$ = $(0.8, 0, -0.8, 0, 0, 0, -0.8, 0, 0, 0)$
\item $\mu_\beta$ = $(1.325, 0.75, 0.25, 0, 0, 0.75, 0.25, 0, 0, 0)$
\item $H(u_1,u_2) = 0.8 - 0.8\cdot u_1^2 - 0.8\cdot u_2^2$
\item $T(u_1,u_2) = 1.325 + 0.75\cdot u_1 + 0.25 \cdot u_1^2 + 0.75\cdot u_2 + 0.25\cdot u_2^2 $
\item Generate $\Sigma_\alpha$ =  $(4,0.6,0.62,0.65,0.70,0.6,0.62,0.65,0.70,0.6) \times I_{10}$, and set $\Sigma_\alpha [1,2:10] = \Sigma_\alpha [2:10,1] = -0.1$.
\item Create $\Sigma_{\beta} = I_{10}$
\end{itemize}

{\raggedleft Score (generalized F-score, $\beta=6$) and Cost (minutes) scaling:}
\begin{itemize}
\item $H$: $h_{\min}=0.45, h_{\max}=0.8$
\item $T$: $t_{\min}=0, t_{\max}=7.5$
\end{itemize}

{\raggedleft Control $u \in [0,1]\times[0,1]$ to hyperparameters (\textit{scale} and \textit{code}) mappings:}
\begin{itemize}
\item \textit{scale}: $ 9u_1 + 1$
\item \textit{code}: $ \text{round}(9u_2 + 1)$
\end{itemize}

\section{Model architectures}\label{app:arch}

We provide snippets of a Python code defining architectures of neural networks used in Section \ref{sec:validation}.

\subsection{CNN Breast Cancer example}\label{app:breastcancerarch}

\begin{verbatim}
    dropConv = 0.2 
    scale = 1
    
    # Create the Model Architecture
            
    kernel_size = (3,3)
    pool_size= (2,2)
    
    first_filters = 8*scale
    second_filters = 16*scale
    third_filters = 32*scale
    
    dropout_conv = dropConv
    
    model = Sequential()
    model.add(Conv2D(first_filters, kernel_size, activation = 'relu', 
                     input_shape = (IMAGE_SIZE, IMAGE_SIZE, 3)))
    model.add(MaxPooling2D(pool_size = pool_size)) 
    model.add(Dropout(dropout_conv))
    
    model.add(Conv2D(second_filters, kernel_size, activation ='relu'))
    model.add(MaxPooling2D(pool_size = pool_size))
    model.add(Dropout(dropout_conv))
    
    model.add(Conv2D(third_filters, kernel_size, activation ='relu'))
    model.add(MaxPooling2D(pool_size = pool_size))
    model.add(Dropout(dropout_conv))
    
    model.add(Flatten())
    model.add(Dense(16*scale, activation = "relu"))
    model.add(Dense(2, activation = "softmax"))
\end{verbatim}

\subsection{Plain and Resnet50 models for Intel data}\label{app:intelarch}

\subsection{Plain}
\begin{verbatim}  
  model = Models.Sequential()
  model.add(Layers.Conv2D(200,kernel_size=(3,3),activation='relu', \
                          input_shape=(150,150,3)))
  model.add(Layers.Conv2D(200,kernel_size=(3,3),activation='relu'))
  model.add(Layers.Conv2D(180,kernel_size=(3,3),activation='relu'))
  model.add(Layers.MaxPool2D(pool_size=(5,5), padding="same"))
  model.add(Layers.Conv2D(180,kernel_size=(3,3),activation='relu', padding="same"))
  model.add(Layers.Conv2D(140,kernel_size=(3,3),activation='relu', padding="same"))
  model.add(Layers.Conv2D(100,kernel_size=(3,3),activation='relu', padding="same"))
  model.add(Layers.Conv2D(50,kernel_size=(3,3),activation='relu', padding="same"))
  model.add(Layers.MaxPool2D(pool_size=(5,5), padding="same"))
  model.add(Layers.Flatten())
  model.add(Layers.Dense(180,activation='relu'))
  model.add(Layers.Dense(100,activation='relu'))
  model.add(Layers.Dense(50,activation='relu'))
  model.add(Layers.Dropout(rate=0.5))
  model.add(Layers.Dense(6,activation='softmax'))
\end{verbatim}

\subsection{Resnet50}
\begin{verbatim}
  resnet = ResNet50(include_top=False, weights="imagenet", input_shape=(150, 150, 3))
  resnet.trainable = False
  
  model = Models.Sequential()
  model.add(resnet)
  model.add(Layers.Conv2D(200,kernel_size=(3,3),activation='relu'))
  model.add(Layers.Conv2D(180,kernel_size=(3,3),activation='relu'))
  model.add(Layers.MaxPool2D(pool_size=(5,5), padding="same"))
  model.add(Layers.Conv2D(180,kernel_size=(3,3),activation='relu', padding="same"))
  model.add(Layers.Conv2D(140,kernel_size=(3,3),activation='relu', padding="same"))
  model.add(Layers.Conv2D(100,kernel_size=(3,3),activation='relu', padding="same"))
  model.add(Layers.Conv2D(50,kernel_size=(3,3),activation='relu', padding="same"))
  model.add(Layers.MaxPool2D(pool_size=(5,5), padding="same"))
  model.add(Layers.Flatten())
  model.add(Layers.Dense(180,activation='relu'))
  model.add(Layers.Dense(100,activation='relu'))
  model.add(Layers.Dense(50,activation='relu'))
  model.add(Layers.Dropout(rate=0.5))
  model.add(Layers.Dense(6,activation='softmax'))
\end{verbatim}

\bibliographystyle{apalike}
\bibliography{autoML}

\end{document}